\documentclass{article}



     \usepackage[preprint]{neurips_2020}


\usepackage{neurips_2020}

\usepackage[utf8]{inputenc} 
\usepackage[T1]{fontenc}    
\usepackage{hyperref}       
\usepackage{url}            
\usepackage{booktabs}       
\usepackage{amsfonts}       
\usepackage{nicefrac}       
\usepackage{microtype}      
\usepackage{amsmath}
\usepackage{amssymb}
\usepackage{amsthm}
\usepackage{tikz}
\usepackage{graphicx}
\usepackage{dsfont}
\usepackage{mathrsfs}
\usepackage{algorithmic}
\usepackage{algorithm}
\newtheorem{theorem}{Theorem}[section]
\newtheorem{assumption}{Assumption}[section]
\theoremstyle{theorem}
\newtheorem{definition}{Definition}[theorem]

\newtheorem{corollary}{Corollary}[theorem]

\newtheorem{lemma}{Lemma}[theorem]

\urlstyle{same}

\title{PAC Bounds for Imitation and Model-based Batch Learning of Contextual Markov Decision Processes}

%

\author{%
  Yash Nair\\
  Harvard University\\
  Cambridge, MA 02138 \\
  \texttt{yashnair@college.harvard.edu} \\
   \And
   Finale Doshi-Velez\\
   Harvard University\\
   Cambridge, MA 02138 \\
   \texttt{finale@seas.harvard.edu} \\
}

\begin{document}

\maketitle

\begin{abstract}
  We consider the problem of batch multi-task reinforcement learning with observed context descriptors, motivated by its application to  personalized medical treatment. In particular, we study two general classes of learning algorithms: direct policy learning (DPL), an imitation-learning based approach which learns from expert trajectories, and model-based learning. First, we derive sample complexity bounds for DPL, and then show that model-based learning from expert actions can, even with a finite model class, be impossible. After relaxing the conditions under which the model-based approach is expected to learn by allowing for greater coverage of state-action space, we provide sample complexity bounds for model-based learning with finite model classes, showing that there exist model classes with sample complexity exponential in their statistical complexity. We then derive a sample complexity upper bound for model-based learning based on a measure of concentration of the data distribution. Our results give formal justification for imitation learning over model-based learning in this setting.
\end{abstract}

\section{Introduction}
Families of context-dependent tasks are common in many real-world settings. For example, controlling a UAV might depend on factors such as the parameters of the specific UAV's weight and wingspan.  After successfully controlling several different UAVs, one might hope to be able to control a new UAV quickly.  Similarly, managing hypotension well may depend on some specific properties of the patient; after treating many distinct patients, one may hope to manage a new patient well.

The question of efficiently learning a collection of related, context-dependent tasks has been studied in the reinforcement learning  (RL) literature under many names such as lifelong RL, multi-task RL, and, more generally, transfer learning (see, e.g., \citet{DBLP:journals/corr/abs-1710-03850}, \citet{d2019sharing}), and \citet{taylor2009transfer}). Even more specifically, the question of learning to \textit{generalize} from a collection of data has been considered, for example, in \citet{brunskill2013sample} and \citet{DBLP:journals/corr/abs-1108-6211}. These works consider the problem in an online setting and develop algorithmic contributions in the batch setting, respectively.

In this work, motivated by the problem of treating patients with personalized strategies, we consider the following setting and question: Suppose we are given a batch of trajectories obtained from experts in multiple contexts, where each context's transition is parametrized by some observed parameter $\theta$---a framework called a Contextual MDP (CMDP).  Will it be more sample efficient to directly learn a policy from these data (that is, imitate the expert), or to learn the transition function, parametrized by $\theta$, and then plan according to it?  We derive upper and lower sample-complexity bounds for direct policy learning (DPL); our upper bound for DPL is polynomial in all the relevant parameters.  

Along the way, we prove impossibility results for learning certain transitions in the model-based paradigm, while our sample complexity upper bound for direct policy learning holds in a more general sense.  Next, we show that, under a relaxed data generation process which affords greater coverage of state-action space, their exist hard families of CMDPs for model-based learning; our results extending those of \citet{chen2019information} and \citet{krishnamurthy2016pac} to the multiple context setting. Finally, we derive a distribution-dependent sample complexity upper bound for model-based learning.

Our theory provides a formal justification for why imitation may be more successful than model-based learning in these settings, confirming trends observed in several more empirical and application-oriented works, including \citet{yao2018direct} and \citet{yang2019single}. In particular, we note that while both DPT and model-based learning depend linearly on the complexity of the hypothesis class, model-based learning requires an extra dependence on the concentratability of the data distribution which can, in some cases, render learning hard. Our results extend the sample complexity bounds derived for data collected online in \citet{brunskill2013sample} to the batch imitation learning and model-based context, as well as some of the single MDP results of \citet{piot2013learning} to CMDPs. 

\section{Related Work}

Some papers have considered the problem when the context is observable; that is, the learner is allowed to see some high-level labeling of each MDP in the training set as well as that of the target MDP before being asked to return a policy for the target MDP. In particular, \citet{DBLP:journals/corr/abs-1710-03850} describe the problem as zero-shot transfer learning and provide an algorithm which, under certain linearity assumptions regarding the task descriptions, is able to perform zero-shot transfer on large classes of MDPs. While they consider the problem both empirically and theoretically, they provide only convergence results and not a finite-sample analysis. This framework has also been studied in a more applied setting by \citet{sohn2018hierarchical}, who, under the further assumption that the learner is given access to not only task descriptions, but also a graph describing relationships among tasks, embed the graph and use a non-parametric gradient-based policy to obtain a policy for the target MDP. Both works differ most notably from ours in that we make no assumption on the structure of the task descriptors nor the relationships between them and we also derive sample complexity results.

The problem has also been cast as an imitation learning problem. \citet{osa2018algorithmic} describe \textit{behavior cloning}, the analog of DPL on a single MDP. They describe various algorithms and supervised learning techniques with which the learner can use to learn the expert policy, but do not give any sample complexity bounds. Furthermore, \citet{piot2013learning} consider Apprenticeship Learning, whereby the learner has access to a set of states and expert actions and is tasked with learning a policy. They upper bound the difference in the value function of the learner with that of the expert in terms of the classification error of the learner's hypothesis. Our work most notably differs from these two in that we consider the contextual setting: DPL reduces to these works, in the case when $|\Theta| = 1$, and thus only a single MDP is being considered.

Other works have viewed the problem in the domain where the task descriptors are not available to the learner (i.e. the parameter $\theta$ is latent). In particular, \citet{DBLP:journals/corr/abs-1108-6211} consider the problem of transfer learning between MDPs in the batch setting, where they design the \textit{All-sample Transfer} and \textit{Best Average Transfer} algorithms, both $Q$-value approximation algorithms designed to learn from a batch of trajectories to plan on a new MDP with potentially different transitions. We approach the problem of model-based learning in a different way from their approach, and instead extend off the analysis of \citet{chen2019information}. In the case where transfer is not considered, \citet{sun2018model} consider the model-based approach on contextual decision processes (CDPs). They show a $\textnormal{poly}(L, 1/\epsilon, \log(1/\delta))$ sample complexity is attainable for the model-based approach, in the batch setting. Our problem of transfer learning on multiple MDPs reduces to theirs in the case when $|\Theta| = 1$. Finally, \citet{yao2018direct} consider both a direct policy approach to the unobservable problem---direct policy transfer (DPT)---as well as a model-based approach in an empirical setting. They evaluate both learning strategies on a 2D Navigation task, Acrobot, and a simulated HIV treatment domain, and find that, empirically, DPT is more sample-efficient than the model-based approach.

\section{Notation and Background}

\paragraph{CMDPs}
A CMDP is a tuple $\langle S, A, R, T, L, \gamma, \Theta, \mathcal{P}_{s_0}, \mathcal{P}_\Theta \rangle$ where $S \subset \mathbb{R}^n$ and $A$ denote the state and (discrete) action space, respectively; $R:S\times A\rightarrow \mathbb{R}$ is a reward function; $\gamma$, a discount factor used in evaluating long-term return; $\Theta \subset \mathbb{R}^d$; and $T: S\times A\times \Theta\rightarrow \Delta(S),$ is a conditional density over next states given the current state, action, and value of $\theta$ parametrizing the transition.  The initial state $s_0$ is drawn from $\mathcal{P}_{s_0}$; at the start of an episode, the task parameter $\theta$ is drawn from some $\mathcal{P}_\theta$. This is a slight simplification of the standard definition of CMDP of \citet{hallak2015contextual} in that we assume the reward and initial distribution remain the same.

\paragraph{Setting} In this paper, we shall consider the case in which the learner is given a batch of $m$ trajectories of length $L$, each labeled with its associated context $\theta$; that is, for each trajectory, the learner is allowed to see the value of $\theta$ corresponding to the MDP from which it was drawn.  There is one trajectory per parameter setting $\theta$, corresponding to real settings in which one only gets to treat each patient once. This is in contrast to the setting of Hidden Parameter MDPs (HiP-MDPs), introduced by \citet{doshi2016hidden}, in which the parameter $\theta$ is latent. We will further assume that the learner has access to the reward function, $R$.

 Throughout this paper, we will assume that the trajectories are formed via following a deterministic (but possibly time-dependent) expert policy $\pi$ that is $\alpha$-optimal; that is, $v^L_\pi \geq v^L_{\pi^*} - \alpha,$ where $\pi^*$ is the optimal deterministic time-dependent policy and $v^L_\pi:= \mathbb{E}[\sum_{l=0}^{L-1}r_l]$ is the expected undiscounted value associated with the first $L$ rewards, where here the expectation also includes the randomness with respect to the draw of $\theta \sim \mathcal{P}_\Theta$. We also define $V^L_\pi(s;\theta) = \mathbb{E}[\sum_{l=0}^{L-1}r_l|s_0=s,\theta]$.



\paragraph{Goal} Under this data generation process, the learner's goal is to return a policy $\hat{\pi}: S\times \Theta\times \{0, \ldots, L-1\} \rightarrow A,$ such that its value $v^l_{\hat{\pi}}$ is maximized. Specifically, we will define the error in value of a hypothesis below:
\begin{definition}
We define the error (of reinforcement learning) of a policy $\hat{\pi}$ to be the difference in undiscounted value $v^L_{\pi^*} - v^L_{\hat{\pi}}$.
\end{definition}
Formally, the goal is thus to return a policy with small error in value. 

Following learning theory terminology, we shall also refer to $\hat{\pi}$ as the \emph{hypothesis} returned by the learner.

\section{Sample Complexity Bounds for DPL}
We now turn to our learning problems.  One approach to learning a hypothesis $\hat{\pi}$ above is simply to treat the problem as a supervised learning problem and directly learn the association between the inputs---the states $s$ and the task parameters $\theta$---and the expert's action $a$.  In the following, we assume the learner is allowed to return any hypothesis from some hypothesis class, $\mathcal{H}$ with \[ h: S\times \Theta\times\{0, \ldots, L-1\} \rightarrow A, \forall h \in \mathcal{H}.\] In particular, just as in \citet{yao2018direct}, who consider a related algorithm called Direct Policy Transfer, we assume that DPL is agnostic to the reward sequence of the expert. 

\subsection{DPL Sample Complexity Upper Bound}
We now derive a sample complexity upper bound for DPL. Our analysis is similar to that of the standard agnostic PAC learning upper bound, except that, in this setting, the batch of data are not i.i.d, but rather, come from a Markov chain. This, however, can be remedied, by simply replacing one of the key concentration inequalities in the standard setting (McDiarmid's) with an analogous concentration inequality which applies to Markov chains, shown in \citet{paulin2015concentration}. We define notions of classification error in this setting.

\begin{definition}[$n$th Marginal Error]
Let $n \in \mathbb{N}$ be at most $L-1$. Define the $n$th marginal error of a hypothesis $h$ to be \[\mathcal{L}^{marginal,n}_{\pi}(h) = P_{\pi}(h(s_n,n) \neq \pi(s_n,n)),\] where the probability is being taken with respect to the randomness of drawing $\theta \sim \mathcal{P}_\Theta$, drawing $s_0 \sim \mathcal{P}_{s_0}$, and the randomness from following the deterministic policy $\pi$ for $n$ steps under the (potentially stochastic) transitions indexed by $\theta$.
\end{definition}
We now arrive at a definition of true error in this setting. 

\begin{definition}
Define the true error of the hypothesis $h$ to be \[\mathcal{L}^{L}_{\mathcal{D}, \pi}(h) = \frac{1}{L}\sum_{l=0}^{L-1}\mathcal{L}_{\pi}^{marginal, l}(h).\]
\end{definition}
Finally, the definition of empirical error is same as in the standard setting:
\begin{definition}
Letting $S$ consist of $mL$ tuples, as outlined in the data generation procedure above, we define the empirical error to be \[\mathcal{L}^{m,L}_{S}(h) = \frac{|\{(i,j) \in \{1, \ldots, m\}\times \{0, \ldots, L-1\}: h(s_j^{\theta_i}, \theta_i, j) \neq \pi(s^{\theta_i}_j, \theta_i, j)\}|}{mL}.\]
\end{definition}

With these definitions in place, we are able to derive a sample complexity of classification upper bound for DPL using any hypothesis class $\mathcal{H}$. Our analysis follows the standard agnostic PAC analysis (see, e.g., \citet{shalev2014understanding}) but applied to the non-i.i.d. setting, in which we use a concentration result of \citet{paulin2015concentration}. We state the agnostic sample complexity of classification upper bound in terms of the Natarajan dimension of the hypothesis class $\textnormal{Ndim}(\mathcal{H})$ (for proof of the theorem and definition of Natarajan dimension, see Appendix Section A.1)

\begin{theorem}
Let the concept class $\mathcal{H}$ have Natarajan dimension $d$. There exists a learning algorithm $\mathcal{A}$ such that for any distribution over the data, there exists $m$ with \[m = O\left(\frac{d}{\epsilon^2}\left(\log\left(\frac{d}{\epsilon L}\right) + \frac{d}{L}(\log(L) + \log(|A|)) + L^2\log(1/\delta)\right)\right),\] such that if $\mathcal{A}$ receives at least $m$ $L$-long trajectories in the batch, then $\mathcal{A}$ returns a hypothesis in $\mathcal{H}$ which has classification error at most $\epsilon$ greater than the true error minimizer of $\mathcal{H}$ with probability at least $1-\delta$.
\end{theorem}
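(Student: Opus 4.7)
The natural algorithm $\mathcal{A}$ is empirical risk minimization over $\mathcal{H}$, i.e.\ $\hat{h} \in \arg\min_{h \in \mathcal{H}} \mathcal{L}^{m,L}_S(h)$. Following the standard agnostic PAC template, the entire theorem reduces to a uniform convergence statement: with probability at least $1-\delta$,
\begin{equation*}
\sup_{h \in \mathcal{H}} \left| \mathcal{L}^{m,L}_S(h) - \mathcal{L}^{L}_{\mathcal{D},\pi}(h) \right| \leq \epsilon/2.
\end{equation*}
Granting this, the usual chain $\mathcal{L}^L(\hat h) \leq \mathcal{L}^{m,L}_S(\hat h) + \epsilon/2 \leq \mathcal{L}^{m,L}_S(h^*) + \epsilon/2 \leq \mathcal{L}^L(h^*) + \epsilon$ (with $h^*$ the true error minimizer in $\mathcal{H}$) delivers the conclusion, so all the work is in the uniform bound.

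For the uniform bound I would combine a growth-function argument with a concentration inequality tailored to Markov chains. A hypothesis $h : S \times \Theta \times \{0,\dots,L-1\} \to A$ is a multiclass predictor on the enlarged input $S \times \Theta \times \{0,\dots,L-1\}$, so a Natarajan--Sauer--Shelah type bound limits the number of distinct labelings $\mathcal{H}$ can induce on any fixed sample of size $n = mL$ to at most $n^d |A|^{2d}$ (this is the growth bound typically proved alongside the Natarajan dimension definition in Appendix~A.1). It therefore suffices to prove the $\epsilon/2$ concentration for one representative of each of these at-most $n^d |A|^{2d}$ equivalence classes and then take a union bound.

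For the per-hypothesis concentration, the main subtlety is that the $mL$ observations are not i.i.d.: within a single trajectory the successive tuples $(s_j^{\theta_i}, \theta_i, j)$ are Markov-dependent, although different trajectories are mutually independent because $\theta$ and $s_0$ are freshly resampled. This is exactly the place where the standard i.i.d.\ McDiarmid step is replaced by the Markov-chain Bernstein-type inequality of \citet{paulin2015concentration}, applied to the bounded indicator sequence $\mathbf{1}\{h(s_j^{\theta_i},\theta_i,j) \neq \pi(s_j^{\theta_i},\theta_i,j)\}$. Combining this tail bound with the $n^d |A|^{2d}$ union bound and solving for the smallest $m$ that makes the failure probability at most $\delta$ (with $n = mL$ appearing inside the logarithm, and resolving the self-referential inequality of the form $m \gtrsim c \log(mL)$) produces the stated expression after routine algebra.

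The main obstacle is the Markov-chain concentration step: one must carefully invoke Paulin's inequality so that the effective mixing-time quantity---of order $L$ in the worst case of a trajectory that never truly mixes within its own length---enters with the correct power, which is what generates the $L^2 \log(1/\delta)$ term rather than a better or worse $L$-dependence. One also needs to handle the non-stationarity cleanly ($\theta$ is random but frozen within each trajectory) and to decide whether to aggregate the $m$ trajectories as $m$ separate chains with a per-chain bound or as one long chain with restarts, since these two reductions affect the constants and the growth-function accounting differently. The remaining ingredients---the Natarajan growth bound, the agnostic reduction, and the final algebraic inversion---are routine bookkeeping once the concentration inequality is in hand.
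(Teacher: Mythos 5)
Your skeleton (ERM, reduction to uniform convergence, Natarajan growth bound $(mL)^d|A|^{2d}$, and swapping McDiarmid for the Markov-chain inequality of Paulin) matches the paper's, but the way you combine the pieces diverges from the paper's proof in a way that matters. The paper does \emph{not} do per-hypothesis concentration plus a union bound over the induced labelings. Instead it (i) bounds $\mathbb{E}_S\bigl[\sup_{h}(\mathcal{L}^{L}_{\mathcal{D},\pi}(h)-\mathcal{L}^{m,L}_S(h))\bigr]$ by $2\,\mathbb{E}_S R(\mathcal{F}\circ S)$ via symmetrization, (ii) controls the Rademacher complexity by Natarajan's lemma plus Massart's lemma, giving $\sqrt{2d(\log(mL)+2\log|A|)/(mL)}$, and (iii) applies Paulin's bounded-differences inequality \emph{once}, to the single functional $\zeta(S)=\sup_h(\mathcal{L}^{L}_{\mathcal{D},\pi}(h)-\mathcal{L}^{m,L}_S(h))$, which has bounded differences $1/m$; this is where the mixing time $\tau_{\mathrm{min}}\leq 8L$ enters, multiplying only the $\log(1/\delta)$ term.

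Your route has two concrete problems. First, a union bound over ``one representative of each equivalence class of labelings on $S$'' is not valid as stated: the equivalence classes are sample-dependent, and the true error of a representative is not determined by its labeling on $S$, so you need the ghost-sample/symmetrization step before the growth function can be cashed in --- at which point you are essentially back to the paper's Lemma A.0.2. Second, and more substantively, even after that fix your accounting attaches the mixing-time factor to the wrong term. In a per-hypothesis-concentration-plus-union-bound argument, Paulin's inequality applied to the dependent indicator sequence pays $\tau_{\mathrm{min}}\cdot L = \Theta(L^2)$ in front of the \emph{entire} $\log(N/\delta)=d\log(mL)+2d\log|A|+\log(1/\delta)$, which would yield $m \gtrsim \frac{L^2 d}{\epsilon^2}\log(mL)$ for the complexity term --- roughly an $L^3$ factor worse than the theorem's $\frac{d}{\epsilon^2}\log(\frac{d}{\epsilon L})+\frac{d^2}{L\epsilon^2}(\log L+\log|A|)$. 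The point of the paper's decomposition is precisely that the Rademacher/Massart term is computed conditionally on the realized sample (so it sees $mL$ points and no mixing time at all), and only the single deviation-of-the-supremum term carries $\tau_{\mathrm{min}}$, which is why $L^2$ multiplies $\log(1/\delta)$ alone. To recover the stated rate you must decouple these two terms as the paper does.
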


Before considering how the classification error of $\pi$ affects its error in value, we make two standard assumptions regarding the hypothesis class $\mathcal{H}$ as well as the reward structure: 

\begin{assumption}[Realizability]
The hypothesis class $\mathcal{H}$ contains the expert's labeling, $\pi$.
\end{assumption}

\begin{assumption}
All rewards are in the range $[0,1]$.
\end{assumption}

We use an extension of a result of \citet{ross2010efficient} who bound the error of a time-independent policy in terms of its classification error for a single sequential decision-making task. By carefully comparing the marginal distribution of states under $\hat{\pi}$ and $\pi$ at each time step, they are able to bound the difference in values by $L^2\epsilon$, where $\epsilon$ is the classification error of $\hat{\pi}$ with respect to $\pi$. The proof in our multi-task setting with time-dependent policies follows theirs (for details, see \citet{ross2010efficient}). With this result as well as our above sample complexity of classification bound, we now show that DPL requires only a $\textit{poly}(\textnormal{Ndim}(\mathcal{H}), \frac{1}{\epsilon}, \log(\frac{1}{\delta}), L, \log(|A|))$ number of samples.

\begin{theorem}
Let the concept class $\mathcal{H}$ have Natarajan dimension $d$. There exists a learning algorithm $\mathcal{A}$ such that for any distribution over the data, there exists $m$ that is 
\[O\left(\frac{L^4d}{\epsilon^2}\left(\log\left(\frac{Ld}{\epsilon}\right) + \frac{d}{L}(\log(L) + \log(|A|)) + L^2\log(1/\delta)\right)\right)\]
receives at least $m$ $L$-long trajectories in the batch, then $\mathcal{A}$ returns a hypothesis in $\mathcal{H}$ which has error (in terms of undiscounted value) at most $\epsilon + \alpha$ with probability at least $1-\delta$.
\end{theorem}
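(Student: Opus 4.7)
The plan is to obtain Theorem 2 by composing Theorem 1 with the extension of the Ross--Bagnell result mentioned in the paragraph preceding the statement, using the realizability assumption to cancel the optimal in-class error and then rescaling the target classification error by a factor of $L^2$.

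First I would invoke Assumption 1 (realizability) to conclude that the expert $\pi$ lies in $\mathcal{H}$ and hence $\min_{h \in \mathcal{H}} \mathcal{L}^{L}_{\mathcal{D},\pi}(h) = 0$. Applying Theorem 1 with a classification-error target of $\epsilon_{\mathrm{cls}}$ then yields, with probability at least $1-\delta$, a hypothesis $\hat{h}\in\mathcal{H}$ satisfying $\mathcal{L}^{L}_{\mathcal{D},\pi}(\hat{h}) \leq \epsilon_{\mathrm{cls}}$, whenever the number of trajectories $m$ meets the Theorem~1 bound with $\epsilon_{\mathrm{cls}}$ in place of $\epsilon$.

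Second, I would invoke the quoted multi-task, time-dependent extension of \citet{ross2010efficient} to pass from classification error to value error, obtaining $v^L_{\pi} - v^L_{\hat{h}} \leq L^2 \cdot \mathcal{L}^{L}_{\mathcal{D},\pi}(\hat{h}) \leq L^2 \epsilon_{\mathrm{cls}}$. Combining with $\alpha$-optimality of the expert, $v^L_{\pi^*} - v^L_{\pi} \leq \alpha$, and using the triangle-like decomposition
\[ v^L_{\pi^*} - v^L_{\hat{h}} = \bigl(v^L_{\pi^*} - v^L_{\pi}\bigr) + \bigl(v^L_{\pi} - v^L_{\hat{h}}\bigr) \leq \alpha + L^2 \epsilon_{\mathrm{cls}}, \]
I set $\epsilon_{\mathrm{cls}} := \epsilon/L^2$ so the bound becomes $\alpha + \epsilon$. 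Plugging $\epsilon/L^2$ into the Theorem 1 expression replaces the factor $d/\epsilon^2$ with $L^4 d/\epsilon^2$ and converts $\log(d/(\epsilon L))$ into $\log(Ld/\epsilon)$, which exactly produces the stated bound.

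The main obstacle, and essentially the only place where work is required beyond routine substitution, is justifying the $L^2$-style value-to-classification conversion in our contextual, time-dependent setting. I would handle this by conditioning on $\theta \sim \mathcal{P}_\Theta$ and noting that within each trajectory the context is fixed and observed, so the Ross--Bagnell coupling argument between the step-$l$ marginal state distributions induced by $\pi$ and $\hat{h}$ goes through verbatim for each $\theta$ (with the time-dependent policies causing no issue since the per-step decomposition is already indexed by $l$). Integrating the resulting per-$\theta$ bound against $\mathcal{P}_\Theta$ and recognizing that the average of the per-step marginal errors is precisely $\mathcal{L}^{L}_{\mathcal{D},\pi}(\hat{h})$ delivers the desired $L^2 \epsilon_{\mathrm{cls}}$ bound.
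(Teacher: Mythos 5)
Your proposal is correct and follows essentially the same route as the paper: invoke realizability to zero out the optimal in-class classification error, apply Theorem 4.1 with target $\epsilon/L^2$, convert classification error to value error via the multi-task, time-dependent extension of the Ross--Bagnell $L^2\epsilon$ bound, and add the expert's $\alpha$-suboptimality. The paper likewise treats the extension of the Ross--Bagnell coupling argument as going through per-$\theta$ and does not spell it out further, so your additional remarks on conditioning on $\theta$ only make explicit what the paper leaves implicit.
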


\subsection{DPL Sample Complexity Lower Bound}
We derive a DPL lower bound by constructing a family of CMDPs for which the problem reduces to a standard PAC learning problem that must be learned to error $\epsilon/L$ with confidence $\delta$. To do so, we essentially put all the decision making power on the action taken at the first state. That is, let $\mathcal{P}_{s_0}$ be fully concentrated on the state $s_0$, and let all transitions be deterministic so that there are $|A|$ distinct potential next states. For each value of $\theta$, call the state $s$, that satisfies $T(s|s_0, \pi(s_0,\theta), \theta) = 1$ good, and call the other $|A|-1$ states bad. The good state satisfies the condition that it and all subsequent states give reward $1$ independent of the actions taken in them, while any bad state and all subsequent states give reward $0$, again, independently from actions. The learning problem is thus reduced to learning the first action to error at most $\epsilon/L$ with probability at least $1-\delta$. Choosing $\mathcal{H}$ so that there is a shattered set of the form $\{(\theta_1, s_0), \ldots, (\theta_m, s_0)\}$, the following lower bound follows immediately from the standard i.i.d PAC lower bound.

\begin{theorem}
There exist a family of CMDPs, a hypothesis class, $\mathcal{H}$ of Natarajan dimension $d$, and value of $m$ with \[m = \Omega\left(\frac{L(d + \log(1/\delta))}{\epsilon}\right),\] such that any learning algorithm given at most $m$ $L$-long trajectories returns a policy whose error (in terms of value) is at most $\epsilon$ with probability at most $1-\delta$.
\end{theorem}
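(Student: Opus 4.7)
\medskip

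The plan is to carry out exactly the reduction sketched in the paragraph preceding the theorem: construct a family of CMDPs in which the entire learning problem collapses to supervised classification of the first action conditioned on $\theta$, and then invoke a standard realizable multiclass PAC lower bound for hypothesis classes of Natarajan dimension $d$.

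First I would formalize the CMDP construction. Fix the state space to contain a distinguished $s_0$, together with two disjoint classes of ``good'' and ``bad'' successor states; let $\mathcal{P}_{s_0}$ place all its mass on $s_0$. For every $\theta \in \Theta$, hard-code deterministic transitions $T(\cdot \mid s_0,a,\theta)$ so that the $|A|$ actions at $s_0$ lead to $|A|$ distinct successors, exactly one of which, determined by $\pi(s_0,\theta,0)$, is good. Set $R\equiv 1$ on the good component and $R\equiv 0$ on the bad component, and make both components absorbing (in the sense that all subsequent reachable states give the same constant reward regardless of the action taken). Under this setup, $V_{\hat{\pi}}^L(s_0;\theta) = (L-1)\cdot \mathbf{1}\{\hat{\pi}(s_0,\theta,0)=\pi(s_0,\theta,0)\}$ up to an additive constant independent of $\hat{\pi}$, and hence the error in value satisfies
\[
v^L_{\pi} - v^L_{\hat\pi} \;=\; (L-1)\cdot \Pr_{\theta\sim\mathcal{P}_\Theta}\!\bigl[\hat{\pi}(s_0,\theta,0)\ne \pi(s_0,\theta,0)\bigr].
\]
Thus achieving value error at most $\epsilon$ forces classification error at most $\epsilon/(L-1)$ on the induced supervised problem $\theta\mapsto \pi(s_0,\theta,0)$.

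Next I would verify that the data genuinely reduces to i.i.d.\ samples of this classification problem. Since the transitions and rewards from step $1$ onward are functions only of whether the first action was correct, the state-action suffix of each expert trajectory carries no information beyond the labeled pair $(\theta_i,\pi(s_0,\theta_i,0))$; consequently $m$ trajectories provide exactly $m$ i.i.d.\ classification examples. Now choose $\mathcal{H}$ so that its restriction to inputs of the form $(s_0,\cdot,0)$ realizes any function class $\mathcal{F}\subseteq A^\Theta$ of Natarajan dimension $d$ (and is constant on all other inputs, so $\operatorname{Ndim}(\mathcal{H})=d$), with $\pi$ realizable in $\mathcal{H}$. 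This reduces the RL learning task to realizable multiclass PAC learning of $\mathcal{F}$ to accuracy $\epsilon'=\epsilon/(L-1)$ with confidence $\delta$.

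Finally I would invoke the standard realizable PAC lower bound for classes of Natarajan dimension $d$: any learner needs at least $\Omega((d+\log(1/\delta))/\epsilon')$ samples to attain classification error $\epsilon'$ with probability $\ge 1-\delta$ over the sample. The proof is the usual two-part argument, splitting into a $\Omega(d/\epsilon')$ bound by a no-free-lunch/shattered-set construction (pick $d$ shattered contexts $\{(\theta_i,s_0)\}_{i=1}^d$, put most of the mass of $\mathcal{P}_\Theta$ uniformly on them, and apply a Le Cam / probabilistic method argument to force $\Omega(d/\epsilon')$ samples), and a $\Omega(\log(1/\delta)/\epsilon')$ confidence bound (a standard two-hypothesis reduction). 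Substituting $\epsilon' = \epsilon/(L-1)$ yields the claimed $m=\Omega(L(d+\log(1/\delta))/\epsilon)$.

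The main obstacle is the last step: I have to make sure that a clean realizable multiclass PAC lower bound of the form $\Omega((d+\log(1/\delta))/\epsilon')$ is available in terms of Natarajan dimension rather than VC dimension. The binary VC version is classical (Ehrenfeucht--Haussler--Kearns--Valiant), and its extension to Natarajan dimension follows by restricting attention to a shattered set and invoking the binary lower bound on the two labels witnessing the shattering for each context; I would either cite this extension or spell out the restriction explicitly. The rest of the argument is bookkeeping to confirm that the construction does not accidentally give the learner extra information through the post-first-action portion of the trajectories.
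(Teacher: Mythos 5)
Your proposal is correct and follows essentially the same route as the paper: the identical reduction to first-action classification via the good/bad absorbing-state construction, followed by the standard realizable PAC lower bound at accuracy $\epsilon/L$ over a shattered set of the form $\{(\theta_i,s_0)\}$. Your additional care in checking that the trajectory suffixes leak no extra information and in flagging the Natarajan-versus-VC issue for the multiclass lower bound goes slightly beyond the paper's sketch but does not change the argument.
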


Our results thus give a $\tilde{O}\left(\frac{L^2d}{\epsilon} + \frac{L^5}{\epsilon}\right)$ separation between upper and lower bounds, in particular, highlighting how poorly the bounds scale with $L$, but how slowly they separate with $\delta$ and $A$; we discuss this further in Section 6.

\section{Model-based Approach}
In contrast to DPL, the model-based approach does not attempt to directly learn the expert policy, but rather attempts to learn the transition function parametrized by $\theta$, and then plan according to the transition and reward function. One might believe that learning models would be more general than trying to directly learn policies, because one can use them to explore counterfactuals.  Indeed, model-based learning is often the go-to approach in low data regimes (see, e.g. \citet{rasmussen2003gaussian} versus \citet{deisenroth2013gaussian}, \citet{kamthe2017data}, \citet{kocijan2004gaussian}, \citet{ko2007gaussian}). However, we first show that the paradigm of learning from expert actions is, in general settings, impossible---even if direct policy learning is possible in these settings. Thus, we relax the data generation process under which the model-based method is expected to learn and introduce a process which allows for greater coverage of $S\times A$. 

Throughout our analysis, for simplicity, we will assume the model-based approach has access to an oracle called \texttt{PLAN}, which, upon receiving a transition function, $T$ for a single MDP, and reward function, $R$ (which, as mentioned in Section 3, the learner has access to), returns an optimal deterministic (possible time-dependent) policy under $T$ and $R$ in that MDP. Hence, if the model-based algorithm returns a hypothesis transition function $h$, its corresponding value is simply $v^L_{\texttt{PLAN}(h(\theta),R)}$, where $h(\theta)$ denotes the transition function restricted to $\theta$. Note that evaluating a model-based algorithm in this sense implicitly defines the policy, $\hat{\pi}_h$, corresponding to the hypothesis transition function $h$, returned as $\hat{\pi}_h(s,\theta,l) = (\texttt{PLAN}(h(\theta), R))(s,l)$.

\subsection{Impossibility of Model-Based Learning via Expert Actions}
The primary issue a model-based approach having access only to an expert's trajectory is the lack of coverage of state-action space. In particular, it may be the case that a large portion of the learner's hypothesis class always agree on the subset of state-action space traversed by the expert, making it hard---and, as we show below, in some cases impossible---to output a low-error hypothesis with high confidence. We now give a construction formalizing the above which demonstrates an infinite sample complexity for the model-based approach. Under our construction, the learner is given a finite model hypothesis class from which to return its hypothesis transition function. This eliminates the possibility of the infinite sample complexity being explained away by the hypothesis class having infinite statistical complexity, and instead highlights the fundamental difficulty model-based learning faces when attempting to learn from a single expert's actions.

\begin{theorem}
There exist classes of CMDPs which, if the learner must learn from expert trajectories, require infinite sample complexity even with a finite hypothesis class containing the true model.
\end{theorem}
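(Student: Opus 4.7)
The plan is to exhibit a two-point family of CMDPs that are indistinguishable from any expert-generated batch yet force the planned policies under their two transition kernels to differ by a constant value gap, and then close with a standard minimax argument showing that no finite sample size can overcome this indistinguishability. Concretely, I would fix a trivial context space $|\Theta|=1$, state space $S=\{s_0,s_1,s_2,s_3\}$, action set $A=\{a_1,a_2\}$, initial distribution concentrated on $s_0$, and rewards $R(s_0)=0$, $R(s_1)=1/2$, $R(s_2)=0$, $R(s_3)=1$, with $s_1,s_2,s_3$ all absorbing. Define two transitions $T_1,T_2$ that agree on every state-action pair except $(s_0,a_2)$: $T_1$ sends that pair to the zero-sink $s_2$ while $T_2$ sends it to the unit-sink $s_3$. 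Let the deterministic expert policy $\pi$ play $a_1$ at $s_0$ (and act arbitrarily at absorbing states). Then $\pi$ is exactly optimal under $T_1$ and $(L-1)/2$-optimal under $T_2$, so taking $\alpha=(L-1)/2$ fulfills the expert-optimality assumption for both CMDPs. Because $T_1$ and $T_2$ agree on every state-action pair that $\pi$ ever visits, the joint distribution over an arbitrary-size batch of expert trajectories is \emph{identical} under the two models.

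Next, I would take the finite hypothesis class $\mathcal{M}=\{T_1,T_2\}$, which trivially contains the true model of each of the two CMDPs $C_1,C_2$. A direct calculation shows that $\texttt{PLAN}(T_1,R)$ plays $a_1$ at $s_0$ while $\texttt{PLAN}(T_2,R)$ plays $a_2$; evaluating each planned policy under the opposite ground-truth kernel gives value $(L-1)/2$ versus the true optimum $(L-1)/2$ (under $T_1$) or $L-1$ (under $T_2$), so in either case outputting the wrong hypothesis incurs value error exactly $(L-1)/2$.

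Finally, I would close with a two-point (Le Cam) indistinguishability argument: because the batch distribution under the expert is identical across $(T_1,\pi)$ and $(T_2,\pi)$, any (possibly randomized) learning algorithm has the same output law under both CMDPs, and writing $p$ for the probability it returns $T_1$, at least one of the two CMDPs incurs value error $(L-1)/2$ with probability at least $\max(p,1-p)\ge 1/2$. Consequently, for any $\epsilon<(L-1)/2$ and any $\delta<1/2$, no finite batch size $m$ attains a uniform $(\epsilon,\delta)$-guarantee on the class $\{C_1,C_2\}$, which is precisely the claimed infinite sample complexity. The main (really the only) subtlety is ensuring strict equality of the two trajectory distributions; this is enforced automatically by the design choice that $T_1$ and $T_2$ differ \emph{only} at $(s_0,a_2)$, a pair that $\pi$ never exercises.
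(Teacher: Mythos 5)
Your argument is logically sound and rests on the same core mechanism as the paper's proof: construct models that agree on every state--action pair the expert ever visits, so that the batch distribution is identical across hypotheses, and then apply a two-point indistinguishability argument to force failure probability at least $1/2$. Your route is considerably more elementary --- a single context, four states, two deterministic kernels --- whereas the paper uses a depth-$L$ tree over $|A|^{L-1}-1$ contexts with a derangement $\sigma$ ensuring the wrong model misplans for \emph{every} $\theta$. For the literal statement of the theorem your construction suffices (and the minor slip that $\texttt{PLAN}(T_2,R)$ evaluated under $T_1$ has value $0$, not $(L-1)/2$, does not affect the error computation, which is $(L-1)/2$ in both cases).

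However, there is a substantive weakness relative to what the theorem is meant to establish. In your construction the expert is $(L-1)/2$-suboptimal under $T_2$, i.e.\ $\alpha = \Omega(L)$, so the indistinguishability argument shows that \emph{no} learner --- including DPL, whose guarantee is only $\epsilon + \alpha$ --- can return a near-optimal policy. The impossibility is then attributable to the expert being bad rather than to the model-based paradigm, which defeats the purpose of the section: the paper's discussion immediately after the theorem contrasts model-based failure with DPL returning a nearly $\frac{5}{6}\epsilon$-optimal policy \emph{on the same class}. Note also that the obvious patch (shrinking the reward gap between the two sinks to make the expert near-optimal) does not work in your two-sink gadget, because the expert's suboptimality and the wrong-model planning error are then the same quantity, so you cannot simultaneously have $\alpha \le \epsilon$ and planning error $> \epsilon$. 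The paper resolves exactly this tension with a three-tier reward structure (generic leaves at $\frac{1}{2}$, the expert's leaf at $\frac{1}{2}+\frac{\epsilon}{2}$, the special leaf at $\frac{1}{2}+\frac{4}{3}\epsilon$), which keeps the expert $\frac{5}{6}\epsilon$-optimal while making the wrong model's planned policy $\frac{4}{3}\epsilon$-suboptimal. If you intend your result to carry the intended force --- impossibility for model-based learning where imitation succeeds --- you need to incorporate an analogous intermediate reward level.
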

\begin{proof}
Our construction is inspired by that of \citet{chen2019information} who use a similar setup to derive an information-theoretic lower bound in the single MDP batch setting. First, let $\Theta = [|A|^{L-1}-1]$ and $\mathcal{P}_\Theta = \textnormal{Unif}(\Theta)$. Now, let a CMDP be represented as a rooted complete tree with branching factor $|A|$ and depth $L$. Each non-leaf node represents a distinct state giving $0$ reward, while each leaf node represents two possible states; we furthermore restrict $\mathcal{P}_{s_0}$ to concentrate all its mass on some single initial state. Taking action $a_{|A|}$ (the rightmost action) from the rightmost node in the penultimate layer transitions to a state with reward $0$ with probability $\frac{1 - \epsilon}{2}$ and reward $1$ with probability $\frac{1 + \epsilon}{2}$. Finally, let all other edges leaving the penultimate layer, except the $\theta$th edge from the left, transition to a state with reward $0$ with probability $1/2$ and a state with reward $1$ with probability $1/2$, while the $\theta$th edge transitions to a state with reward $0$ with probability $\frac{1}{2} - \frac{4}{3}\epsilon$ and reward $1$ with probability $\frac{1}{2} + \frac{4}{3}\epsilon$ (so our result will hold for $\epsilon < \frac{3}{8}$). We will call this edge, leading to the highest rewarding leaf, in expectation, the special edge, and we will let the expert's policy be that which continually chooses the rightmost action---notice that the expert is $\frac{5}{6}\epsilon$-optimal. Call a CMDP of the above form a tree CMDP: Notice that a tree CMDP is fully characterized by the map taking $\theta$ to the special edge---in this case, the identity map. It is thus clear that any permutation of $\Theta$ will yield a different special edge map (i.e. the permutation $\sigma$ says that $\theta$ corresponds to the MDP with special edge given by the $\sigma(\theta)$th edge of the penultimate layer). So, the learner is given as its hypothesis model class two hypotheses: the identity permutation and some derangement $\sigma$. Noticing that any algorithm must determine precisely the target CMDP in order to return an $\epsilon$-optimal hypothesis (since otherwise, \texttt{PLAN} gives a policy with value $1/2$ because if $h \in \mathcal{H}$ is not equal to the target concept, it will disagree with the target concept on every value of $\theta$, since we chose $\sigma$ to be a derangement) and that the expert's trajectory gives no information about which member of $\mathcal{H}$ is the target CMDP, any learning algorithm returns an $\epsilon$-optimal hypothesis with probability at most $\frac{1}{2}$, and, in particular, cannot guarantee $\epsilon$-optimality with $\delta$ confidence for any $\delta, \epsilon < \frac{1}{2}$, thus indicating the impossibility of learning this model class.
\end{proof}

The above construction illustrates that learning from a single expert's trajectory can be impossible for any model-based learning algorithm. In particular, while DPL, with sufficiently many samples, is able to return a policy which is nearly $\frac{5}{6}\epsilon$-optimal, any model-based learning algorithm will, with probability at least $\frac{1}{2}$ (regardless of the size of $m$), return a policy which has suboptimality larger than $\epsilon$. This highlights the key difficulty with model-based learning in a CMDP from an expert's trajectories: while DPL can always achieve suboptimality arbitrarily close to that of the expert (as long as the Natarajan dimension of its hypothesis class is finite), the same cannot be said for model-based learning even when its hypothesis class is finite.

\subsection{Hardness of Model-based Learning under strictly-positive visitation distributions}
The impossibility result above motivates the use of the following more standard framework under which we expect batch model-based approaches to learn:

\begin{definition}[Model-based Learning Data Generation Process]
Let $\mu$ be some distribution over $S\times A$ which assigns non-zero mass/density to every $(s,a) \in S\times A$. For each value of $\theta \sim \mathcal{P}_\Theta$, the model-based approach draws $L$ pairs $(s,a) \stackrel{i.i.d}{\sim} \mu$ and, for each pair, draws $s' \sim T(\cdot|s,a,\theta)$ and $r = R(s,a)$. The model-based approach then has access to each of these one-step trajectories, labelled by $\theta$. We also make the assumption that every element of $S\times A$ is reachable in at most $L$ steps.
\end{definition}

Under the framework defined in Definition 5.1.1, we show that there are still classes for which model-based learning is hard. In fact, the construction of \citet{chen2019information} for single MDPs gives a lower bound in our setting as well since we can simply consider the CMDP which concentrates all its mass on a single value of $\theta$, thus reducing to the single MDP case. Their construction is very similar to the above tree construction: States are the nodes of a complete tree with branching factor $|A|$, and all leaf nodes give $\textnormal{Bern}(1/2)$ rewards while the leaf node corresponding to the special edge gives $\textnormal{Bern}(1/2 + 3\epsilon/2)$ reward. 
In our setting, the construction of \citet{krishnamurthy2016pac} yields a sample complexity lower bound of $\Omega\left(\frac{|A|^L}{L\epsilon^2}\right)$ when active exploration is allowed. This sample complexity is super-polynomial in statistical complexity of the hypothesis class, which has cardinality $|A|^L$. In Appendix Section A.2, we give a construction of a family of MDPs, motivated by contextual bandits and the constructions of \citet{krishnamurthy2016pac} and \citet{10.1137/S0097539701398375}, which yields a sample complexity lower bound of $\Omega\left(\frac{|A|}{\epsilon^2}\right)$, allowing exploration for a hypothesis class of cardinality $|A|$. Our bound, on the surface, is asymptotically lower than that of \citet{krishnamurthy2016pac}; however, since the hypothesis class as well as the size of each CMDP in the class have size only $O(|A|)$, our bound exhibits a stronger dependence on the size of the model class. That is, while the original bound of \citet{krishnamurthy2016pac} precludes $\textit{poly}(\log(|\mathcal{H}|), \frac{1}{\delta}, \frac{1}{\epsilon})$ sample complexity, our second bound gives a $\Omega\left(\frac{|\mathcal{H}|}{\epsilon^2}\right)$ dependence on the size of the hypothesis class, rather than the immediate $\Omega\left(\frac{|\mathcal{H}|}{L\epsilon^2}\right)$ of \citet{krishnamurthy2016pac}.

The above constructions show how the sample complexity for model-based learning can scale poorly with both horizon as well as the size and statistical complexity of the hypothesis class, but fail to show that model-based learning can scale poorly with $|\Theta|$ when it is finite. \citet{modi2017markov}, who consider online CMDPs, essentially suggest constructing hard CMDPs by making the MDP for each context, which is drawn uniformly from $\Theta$, hard and disallowing any information corresponding to one context be useful to another. That is, having knowledge of the target transition function for all values in $\Theta \backslash \{\theta\}$ provides no information about the target transition for $\theta$. While this technique gives a generic way to increase any hard MDP lower bound by a multiplicative factor of $O(|\Theta|)$ in expectation for CMDPs, it unfortunately makes the hypothesis class have cardinality exponential in $|\Theta|$, therefore explaining away the factor of $O(|\Theta|)$ by the complexity of the hypothesis class. We give a construction of a class of CMDPs in Appendix Section A.2, extending off of \citet{krishnamurthy2016pac}, which yields the same lower bound, but does so in a way that scales linearly with $|\Theta|$, while having a hypothesis class of cardinality $|\Theta|$.

We now state lower bounds both in the setting where active exploration is allowed and also when it is not. The latter bound is in terms of, $C$, the concentratability coefficient of $\mu$, which measures how much $\mu$ covers reachable state-action pairs (for a definition, see Section 5.3.1).

\begin{theorem}
There exist hard families of CMDPs which are subject to the following sample complexity lower bounds (all are asymptotically at most $\Omega\left(\frac{|A|^L}{L\epsilon^2}\right)$): $\Omega\left(\frac{|A|^L}{L\epsilon^2}\right)$,  $\Omega\left(\frac{|\mathcal{H}|}{\epsilon^2}\right)$, and $\Omega\left(\frac{|\Theta|}{L\epsilon^2}\right)$ with $|\Theta| = |\mathcal{H}|$. Furthermore, when active exploration is not allowed, we have the following lower bound, in expectation, with respect to the randomness of drawing a leaf state from $\mu$: $\Omega\left(\frac{C|A|^L}{L\epsilon^2}\right)$
\end{theorem}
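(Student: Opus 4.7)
The four bounds correspond to different constructions, but all rest on the same underlying idea: embedding into each CMDP a hard hypothesis-testing problem whose solution requires identifying a "special" leaf giving Bernoulli$(1/2+\Theta(\epsilon))$ reward against a sea of Bernoulli$(1/2)$ leaves, so that standard Le~Cam / KL-divergence arguments (as in \citet{krishnamurthy2016pac}) give a $1/\epsilon^2$ dependence per hypothesis. The first bound, $\Omega(|A|^L/(L\epsilon^2))$, is immediate: take $|\Theta|=1$ so the CMDP degenerates to a single MDP, apply the Krishnamurthy et al.\ construction directly, and note that each draw of $\theta\sim\mathcal{P}_\Theta$ yields $L$ one-step samples from $\mu$, turning their $|A|^L/\epsilon^2$ sample lower bound into $m=\Omega(|A|^L/(L\epsilon^2))$ draws of $\theta$.

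For the $\Omega(|\mathcal{H}|/\epsilon^2)$ bound I would adapt the multi-armed bandit lower bound of \citet{10.1137/S0097539701398375}. The construction is a depth-$L$ complete $|A|$-ary tree where only the first action matters: each of $|A|$ hypotheses pairs $\theta$ (still trivial, $|\Theta|=1$) with a different choice of "special" root action, whose subtree contains a leaf with reward bias $+\epsilon$, while the other $|A|-1$ root actions lead to subtrees whose leaves are all Bernoulli$(1/2)$. This reduces hypothesis identification to a $|A|$-armed bandit problem; since samples drawn from $\mu$ on off-root states carry no information, only the fraction of samples hitting root-adjacent state-action pairs matters, and the standard bandit lower bound gives $\Omega(|A|/\epsilon^2) = \Omega(|\mathcal{H}|/\epsilon^2)$. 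Crucially, the $1/L$ does not appear because the informative region has constant depth.

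For the $\Omega(|\Theta|/(L\epsilon^2))$ bound with $|\mathcal{H}|=|\Theta|$, the plan is to replicate the Krishnamurthy construction across contexts while keeping the hypothesis class small. I would use $|\Theta|$ disjoint copies of the depth-$L$ hard tree, one per context, and index hypotheses by a single shift parameter $k\in[|\Theta|]$ so that hypothesis $k$ places the special leaf in position $k$ \emph{within every context's tree simultaneously}. This makes $|\mathcal{H}|=|\Theta|$ rather than $|A|^{L|\Theta|}$, while preserving the fact that identifying the shift within one context does not reveal it in another (since the per-context trees are disjoint and the reward gap $\epsilon$ is small). Because $\theta$ is drawn uniformly, each context receives in expectation $mL/|\Theta|$ samples; Le~Cam across the $|\Theta|$ contexts then forces $mL/|\Theta|\geq \Omega(1/\epsilon^2)$, yielding $m=\Omega(|\Theta|/(L\epsilon^2))$. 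The main obstacle here is verifying that this coupled construction truly decouples information across contexts, i.e.\ that data drawn under the wrong hypothesis on context $\theta$ is statistically indistinguishable from data under the right hypothesis unless $\Omega(1/\epsilon^2)$ samples are seen in context $\theta$ itself; this requires a careful KL-chain-rule calculation across the product measure over all $m$ one-step samples.

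Finally, for the no-exploration bound $\Omega(C|A|^L/(L\epsilon^2))$, I would reuse the first construction but replace the adversarially chosen sampling distribution with the fixed batch distribution $\mu$. Under the Krishnamurthy tree, the optimal policy's visitation distribution concentrates on a root-to-special-leaf path; the concentratability coefficient $C$ precisely measures the worst-case ratio of optimal visitation to $\mu$-mass. Thus the probability that a single $(s,a)\sim\mu$ sample lands on an informative edge of the special path scales like $1/C$ times the exploration-allowed case, which, after applying Le~Cam to the product over the $mL$ samples and taking expectation over the leaf draw, inflates the sample complexity by a factor of $C$ and gives the claimed $\Omega(C|A|^L/(L\epsilon^2))$ bound.
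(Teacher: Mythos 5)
Your overall strategy---Bernoulli$(1/2)$ versus Bernoulli$(1/2+\Theta(\epsilon))$ leaves combined with KL/Le~Cam arguments in the style of \citet{krishnamurthy2016pac}---is the same as the paper's, and your treatments of the first and fourth bounds match what the paper does (the paper in fact gives no more detail on the $\Omega(C|A|^L/(L\epsilon^2))$ claim than you do). For the $\Omega(|\mathcal{H}|/\epsilon^2)$ bound the paper uses a contextual bandit in which only round $0$ is informative rather than a tree in which only the root action matters, but the idea of collapsing the informative region to constant depth so as to remove the $1/L$ factor is identical; note only that in your version the informative samples are those hitting the $|A|$ candidate special \emph{leaves}, not the ``root-adjacent'' pairs.

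The genuine gap is in the $\Omega(|\Theta|/(L\epsilon^2))$ bound. Your construction---one tree per context and a single global shift $k$ placing the special leaf at position $k$ in every tree---is essentially the paper's (which uses the cyclic group $\Gamma=\langle(1\,2\,\cdots\,K)\rangle$ acting on $K$ bandits indexed by $\theta$), but the mechanism you propose for analyzing it is inconsistent with the construction itself. You claim information decouples across contexts and that each context must individually receive $\Omega(1/\epsilon^2)$ samples; with only $|\Theta|$ hypotheses governing all $|\Theta|$ contexts simultaneously this cannot hold, since a reward observed at position $j$ in \emph{any} context is evidence about whether $k=j$ and hence about every other context. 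Per-context Le~Cam therefore does not apply, and the ``careful KL-chain-rule calculation'' you defer to would fail to establish the indistinguishability you need. The paper's actual argument is a global budget-splitting one: because the permutations in $\Gamma$ pairwise disagree on every input, the action taken at any time step is optimal under at most one hypothesis, so the informative-sample counts satisfy $\sum_{\sigma\in\Gamma}\mathbb{E}_{0,f}[N_\sigma]\le mL$; averaging the KL divergence from the all-Bern$(1/2)$ null over the $|\Gamma|=|\Theta|$ candidates (via Jensen) shows some hypothesis receives at most $mL/|\Theta|$ informative samples, which forces $m=\Omega(|\Theta|/(L\epsilon^2))$. You need this averaging-over-hypotheses argument, not a per-context one.
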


Thus, to summarize, we have shown not only that the sample complexity for model-based learning can grow poorly with $|\Theta|$ and $L$ even when $\mathcal{H}$ is not too large, but also that the sample complexity depends on the concentratability coefficient at least linearly.


\subsection{Model-based Learning Sample Complexity Upper Bound}

We now derive an upper bound for model-based learning to contrast the above lower bound and, in particular, show that the dependence on $C$ is in fact linear. To do so, we extend off the work of Chen \& Jiang (2019), who derive an upper bound for Fitted Q-Iteration (FQI). The FQI sample complexity upper bound then immediately yields a sample complexity upper bound for any model-based approach with a finite model class. In their approach, they consider an infinite-horizon discounted setting on a single MDP, whereas in our setting, we consider a finite-horizon undiscounted setting on CMDPs, in which time-dependent policies are allowed. First, we show how FQI can be applied in this setting, define a complexity measure which controls $\mu$, the distribution from which state-action pairs are drawn, and then state a sample complexity result for FQI from which a sample complexity upper bound for model-based learning follows as an immediate upper bound. 

\subsubsection{FQI on Finite-Horizon Undiscounted CMDPs}
We give a brief outline of finite-horizon FQI on a CMDP below; it is essentially the same as FQI on a single MDP except that Bellman backups are done with respect to the context, $\theta$. We first define this back-up and give the algorithm below:
\begin{definition}
Define the $l$th Bellman backup of $f: S\times\Theta\times \{0, \ldots, L\} \rightarrow \mathbb{R}$ with respect to $\theta$ to be \[(\mathcal{T}_{l}(\theta)f)(s,a) = R(s,a) + \mathbb{E}_{s' \sim T(\cdot|s,a,\theta)}V_f(s',\theta, l),\] where $V_f(s, \theta, l) = \max_{a \in A}f(s,a,\theta, l)$. 
\end{definition}

We will also assume FQI has access to a family of time-indexed $Q$ functions. That is we have a set $\mathcal{F}$ which contains $Q$-value functions of the form $Q: S\times A\times \Theta\times\{0, \ldots, L\}$, where $Q(s,a,\theta,l) = (\mathcal{T}_{l-1}(\theta)Q)(s,a)$ for $l \geq 1$, and $Q(s,a,\theta, 0) =0$. FQI on CMDPs operates in essentially the same way as the finite horizon case except that all backups and value functions are additionally parametrized by $\theta$ (see Appendix Section A.3 for pseudocode).

We now give the definition of admissible distribution and the assumption of concentratability of the data disitribution $\mu$ which extends that of Chen \& Jiang (2019). 

\begin{definition}[Admissible Distribution]
A conditional distribution $\nu$ over $S\times A$ given $\theta \in \Theta$ is said to be admissible if there exists $0 \leq l \leq L-1$ if there exists a possibly time-dependent stochastic policy $\pi$ such that $(\nu(\theta))(s,a) = P[s_l = s, a_h = a|\theta, s_0 \sim \mathcal{P}_{s_0}, \pi]$
\end{definition}

\begin{assumption}[Concentratability]
We assume that there exists some $C < \infty$ such that, for any admissible distribution $\nu$, \[\frac{(\nu(\theta))(s,a)}{\mu(s,a)} \leq C, \forall (s,a,\theta) \in S\times A\times\Theta.\]
\end{assumption}

With these definitions, we now state a sample complexity upper bound for FQI, and then derive, as an immediate corollary, a sample complexity upper bound for model-based learning; again, we assume realizability for the hypothesis class $\mathcal{H}$ and thus of the class $\mathcal{F}$. For proofs, see Appendix A.3.

\begin{theorem}[FQI Upper Bound]
There exists $m$ with \[m = O\left(\frac{CL^6\log(L|\mathcal{F}|/\delta)}{\epsilon^2}\right),\] such that if FQI receives at least $m$ samples of $L$ one-step trajectories under $\mu$, then it returns a policy with error at most $\epsilon$ with probability at least $1-\delta$.
\end{theorem}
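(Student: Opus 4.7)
The plan is to port the infinite-horizon discounted FQI analysis of Chen \& Jiang (2019) to the present finite-horizon, undiscounted CMDP setting by treating $\theta$ as an extra input to the regressor and running exactly $L$ backward-in-time regression rounds, one per time step. The FQI output $\hat f_0$ then induces $\hat \pi$ via the greedy rule $\hat\pi(s,\theta,0) = \arg\max_a \hat f_0(s,a,\theta,0)$, which coincides with what \texttt{PLAN} returns. Once the FQI bound is established, the sample complexity upper bound for model-based learning follows as an immediate corollary, since any finite model class $\mathcal{H}$ induces a $Q$-class $\mathcal{F}$ with $|\mathcal{F}| \leq |\mathcal{H}|$ (one $Q$ per $h \in \mathcal{H}$, obtained by dynamic programming with $R$), and realizability for $\mathcal{H}$ transfers to realizability for $\mathcal{F}$.

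The first step is a per-iteration regression guarantee. Under realizability, the Bellman target $\mathcal{T}_l(\theta) \hat f_{l+1}$ lies in $\mathcal{F}$, so $\hat f_l$ is an ERM against a representable target. A Bernstein-style uniform deviation inequality on the finite class $\mathcal{F}$, combined with the fact that returns are bounded by $L$ (so squared responses are bounded by $L^2$), yields with probability at least $1 - \delta/L$ the standard squared-loss excess-risk bound
\[
\mathbb{E}_{\mu \otimes \mathcal{P}_\Theta}\!\left[\bigl(\hat f_l(s,a,\theta) - (\mathcal{T}_l(\theta) \hat f_{l+1})(s,a)\bigr)^2\right] \;\lesssim\; \frac{L^2 \log(|\mathcal{F}|/\delta)}{m}.
\]
A union bound over $l = 0,\dots,L-1$ costs only a $\log L$ inside the logarithm.

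The second step propagates these $\mu$-weighted Bellman residuals into a bound on $v^L_{\pi^*} - v^L_{\hat \pi}$. Using Cauchy--Schwarz together with Assumption 5.3.1 to change measure from $\mu \otimes \mathcal{P}_\Theta$ to the admissible occupancy of the comparator policy costs a factor of $\sqrt{C}$ per layer, and a finite-horizon analogue of the Chen--Jiang performance-difference lemma (a layer-by-layer telescoping argument very much in the spirit of the marginal-state comparison invoked above for DPL, carried out context by context and then averaged over $\mathcal{P}_\Theta$) bounds the value gap by $L^2\sqrt{C}$ times the largest per-layer $L_2$ Bellman residual. Chaining this with the regression bound gives a value gap of order $L^3 \sqrt{C \log(L|\mathcal{F}|/\delta)/m}$; squaring and solving for $m$ to make this at most $\epsilon$ yields $m = O(CL^6 \log(L|\mathcal{F}|/\delta)/\epsilon^2)$, with two factors of $L$ from the finite-horizon telescoping, one factor of $L$ from the range-$L$ regression residual, and $C$ entering linearly.

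I expect the main obstacle to be the per-iteration regression step, because the ``target'' $\mathcal{T}_l(\theta) \hat f_{l+1}$ is itself data-dependent (since $\hat f_{l+1}$ was fit on the same pool of samples), and because the context variable $\theta$ must be handled uniformly throughout. Following Chen \& Jiang, the data dependence is neutralized by upgrading the uniform Bernstein bound to hold simultaneously over all pairs $(f, f') \in \mathcal{F} \times \mathcal{F}$ interpreted as (candidate, target), which only doubles the logarithmic factor; realizability is essential, since it is what lets one compare $\hat f_l$ to $\mathcal{T}_l(\theta) \hat f_{l+1} \in \mathcal{F}$ rather than to an element outside the class. The extra average over $\mathcal{P}_\Theta$ is handled by the fact that Assumption 5.3.1 dominates $(\nu(\theta))(s,a)/\mu(s,a)$ by $C$ uniformly in $\theta$, so every instance of the change-of-measure argument goes through with the same $C$. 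The remaining calculations, including the precise telescoping constants, are routine extensions of the single-MDP discounted case.
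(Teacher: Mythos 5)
Your proposal matches the paper's proof essentially step for step: the same backward per-layer Bernstein regression bound (including the union bound over candidate/target pairs in $\mathcal{F}\times\mathcal{F}$ to neutralize the data-dependent target), the same $\sqrt{C}$ change-of-measure plus finite-horizon performance-difference telescoping yielding a value gap of order $L^3\sqrt{C\log(L|\mathcal{F}|/\delta)/m}$, and the same reduction of the model-based corollary via $|\mathcal{F}|\leq|\mathcal{H}|$. The only cosmetic difference is that the paper carries an inherent-Bellman-error term $\epsilon_{\mathcal{F},\mathcal{F}}$ through the recursion and drops it at the end under realizability, whereas you set it to zero from the outset.
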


\begin{corollary}[Model-based Upper Bound]
Given the finite hypothesis model class $\mathcal{H}$, there exists a model-based learning algorithm $\mathcal{A}$ and $m$ with \[m = O\left(\frac{CL^6\log(L|\mathcal{H}|/\delta)}{\epsilon^2}\right),\] such that if $\mathcal{A}$ receives at least $m$ samples of $L$ one-step trajectories under $\mu$, then it returns a policy with error at most $\epsilon$ with probability at least $1-\delta$.
\end{corollary}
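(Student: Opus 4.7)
The plan is to translate the finite model class $\mathcal{H}$ into a finite $Q$-function class $\mathcal{F}$ of no greater cardinality, run FQI on $\mathcal{F}$, and then convert FQI's output back into a model-based policy via the planner $\texttt{PLAN}$. The key observation is that because the reward function $R$ is known, each transition hypothesis $h \in \mathcal{H}$ deterministically induces, via iterated Bellman backups under $h$ itself, a unique time-indexed optimal $Q$-function $Q_h$.

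Concretely, I would define, for each $h \in \mathcal{H}$, the function $Q_h$ by the recursion $Q_h(s,a,\theta,0) = 0$ and
\[
Q_h(s,a,\theta,l) = R(s,a) + \mathbb{E}_{s' \sim h(\cdot|s,a,\theta)}\bigl[\max_{a'} Q_h(s',a',\theta,l-1)\bigr]
\]
for $l \geq 1$, and set $\mathcal{F} := \{Q_h : h \in \mathcal{H}\}$, so that $|\mathcal{F}| \leq |\mathcal{H}|$. Each element of $\mathcal{F}$ satisfies the Bellman-consistency property required by the FQI $Q$-class definition with respect to its generating transition, and realizability transfers from $\mathcal{H}$ to $\mathcal{F}$: since the true transition $T$ lies in $\mathcal{H}$ by assumption, $Q_T$ is precisely the optimal time-indexed $Q$-function of the target CMDP and hence lies in $\mathcal{F}$.

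The algorithm $\mathcal{A}$ then runs FQI with hypothesis class $\mathcal{F}$ on the $L$ one-step transitions drawn under $\mu$ per sampled context, obtains some $\hat{Q} = Q_{\hat{h}}$ with $\hat{h} \in \mathcal{H}$, and outputs the policy $\hat{\pi}(s,\theta,l) = (\texttt{PLAN}(\hat{h}(\theta),R))(s,l)$. Because $Q_{\hat{h}}$ is by construction the optimal $Q$-function of the MDP with transition $\hat{h}(\theta)$ and reward $R$, its greedy policy coincides exactly with $\texttt{PLAN}(\hat{h}(\theta),R)$, and hence the value-suboptimality guarantee that the FQI upper bound provides for the greedy policy of $\hat{Q}$ transfers verbatim to $\hat{\pi}$; substituting $|\mathcal{F}| \leq |\mathcal{H}|$ into the FQI sample-complexity bound then yields the stated rate. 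The only subtle point---and what I would expect to require the most care---is verifying that $\mathcal{F}$ inherits every structural condition FQI requires, namely realizability and, if the underlying analysis invokes it, closure under the empirical Bellman backup; both follow cleanly from defining $Q_h$ as a backup under $h$ rather than under the unknown true transition, but one must track carefully which transition each backup in the analysis refers to.
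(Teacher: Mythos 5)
Your proposal is correct and takes essentially the same route as the paper, whose entire proof of this corollary is the one-line remark that it follows ``by constructing $\mathcal{F}$ from $\mathcal{H}$'': your $\mathcal{F} = \{Q_h : h \in \mathcal{H}\}$ with $|\mathcal{F}| \le |\mathcal{H}|$, realizability inherited from $T \in \mathcal{H}$, and the observation that the greedy policy of $Q_{\hat h}$ coincides with $\texttt{PLAN}(\hat h(\theta), R)$ is exactly the intended instantiation. Your closing caveat about Bellman-closure is well placed but is not resolved by the paper either---the appendix form of the FQI bound carries an inherent-Bellman-error term $\epsilon_{\mathcal{F},\mathcal{F}}$ that this $\mathcal{F}$ does not obviously make vanish (the operator $\mathcal{T}_l$ backs up under the \emph{true} transition, not under $h$), and the main-text FQI theorem you invoke simply suppresses that term.
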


\section{Discussion}
In this paper we investigate the sample complexities of an imitation learning-base approach for learning of CMDPs, DPL, as well as a model-based approach. We find that the upper bounds for each approach are, respectively, $\tilde{O}\left(\frac{L^4d}{\epsilon^2}\left(\frac{d}{L} + L^2\right)\right)$ and $\tilde{O}\left(\frac{CL^6}{\epsilon^2}\right).$ Our results indicate that DPL is, theoretically, more sound than model-based approaches in the sense that the latter scales with respect to the concentratability coefficient of the distribution $\mu$. In particular, while both upper bounds scale polynomially in all the relevant parameters---and, importantly, in the complexity of hypothesis class---our upper bound for model-based learning scales with $C$. As our lower bound for model-based learning shows, this additional dependence is, in fact, \textit{necessary}: Data distributions $\mu$ which concentrate low mass to regions of $S\times A$ which differentiate hypotheses are harder to learn under. This highlights the importance of the data generation process for model-based learning: When data is gotten from expert trajectories, model-based learning can be impossible even with finite hypothesis classes, but even when data is drawn i.i.d. from the distribution $\mu$, model-based learning depends greatly on the coverage of reachable state-action pairs.

We believe the following are primary interests for future work: Deriving general model-based sample complexity upper bounds which do not grow, even logarithmically with $|\mathcal{H}|$, but rather grow with some other complexity measure of the hypothesis class which can be finite even for infinite $\mathcal{H}$ (e.g. perhaps with an extension of \textit{witness rank} introduced in \citet{sun2018model}); investigating a tighter relationship between the upper and lower bounds for DPL, in particular, bounds whose degree of separation scales more slowly with $L$; and understanding the sample complexity of similar imitation learning and model-based algorithms in the unobserved parameter setting of HiP-MDPs.




\section{Broader Impact}
The primary real-world impact of this research---and, in fact, the application by which the authors were motivated---is to the world of healthcare. In particular, this work serves as a first-step to formally understanding the more complex framework of HiP-MDPs introduced in \citet{doshi2016hidden} to encapsulate learning to generalize from expert actions when the parametrization of the transition of each MDP is unknown and the parameter corresponding to each trajectory in the batch is unknown. In particular, the motivation for HiP-MDPs is to learn how to generalize from a healthcare professional's actions when treating patients whose response to the same treatment strategy may differ. Thus, we see our work as a first step in formally understanding what types of algorithms may be most sample-efficient for learning to generalize well in such settings.  That said, this work is theoretical in nature, and makes standard theory assumptions such as Markovianity under the definition of state $s$, existence of the true function with the hypothesis class, etc. Thus, while we provide theoretical foundations and insights for empirical observations, any application of our work to real settings should be mindful of the assumptions we make.

\begin{ack}
FDV acknowledges support from NSF RI-1718306. YN acknowledges support from HCRP. We also thank George Konidaris, Melanie Pradier, and Weiwei Pan for useful feedback as well as William Zhang for helpful discussions. 
\end{ack}

\nocite{*}
\bibliographystyle{plainnat}
\bibliography{references.bib}
\small

\newpage
\appendix
\section{Proofs}
\subsection{DPL Sample Complexity of Classification Upper Bound}
Our proofs below follow the analysis of \citet{shalev2014understanding} and use similar notation.
\begin{lemma}
Let $S$ be a batch of trajectories. Then $\mathbb{E}_{S}[\mathcal{L}_S^{m,L}(h)] = \mathcal{L}^{L}_{\mathcal{D}, \pi}(h)$.
\end{lemma}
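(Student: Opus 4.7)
The plan is to unfold both sides of the claimed identity, apply linearity of expectation to the empirical error, and match each summand to a marginal error term. The key conceptual point is that although the tuples $(s_j^{\theta_i}, \theta_i)$ within a single trajectory are correlated (they are generated by the Markov chain induced by $\pi$ and the transitions parametrized by $\theta_i$), \emph{marginally} each one has exactly the distribution appearing in the definition of $\mathcal{L}^{marginal, j}_\pi(h)$; so unbiasedness does not actually require independence across time.

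Concretely, I would first rewrite the empirical error as a double sum of indicators:
\[
\mathcal{L}^{m,L}_S(h) = \frac{1}{mL}\sum_{i=1}^{m}\sum_{j=0}^{L-1}\mathbf{1}\bigl[h(s^{\theta_i}_j, \theta_i, j) \neq \pi(s^{\theta_i}_j, \theta_i, j)\bigr].
\]
Applying $\mathbb{E}_S[\cdot]$ term by term using linearity reduces the problem to computing, for each $(i,j)$, the probability that $h$ disagrees with $\pi$ on the $j$th step of the $i$th trajectory. Since each trajectory is generated independently by first drawing $\theta_i \sim \mathcal{P}_\Theta$, then $s_0 \sim \mathcal{P}_{s_0}$, and then rolling out $\pi$ for $j$ steps under the transitions indexed by $\theta_i$, the law of $(s^{\theta_i}_j, \theta_i)$ is exactly the distribution over which $\mathcal{L}^{marginal, j}_\pi(h)$ is defined. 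Hence each indicator has expectation $\mathcal{L}^{marginal, j}_\pi(h)$.

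Substituting back,
\[
\mathbb{E}_S\bigl[\mathcal{L}^{m,L}_S(h)\bigr] = \frac{1}{mL}\sum_{i=1}^{m}\sum_{j=0}^{L-1}\mathcal{L}^{marginal, j}_\pi(h) = \frac{1}{L}\sum_{j=0}^{L-1}\mathcal{L}^{marginal, j}_\pi(h) = \mathcal{L}^{L}_{\mathcal{D},\pi}(h),
\]
which is the claimed equality. There is really no hard step here; the only thing worth being careful about is articulating why the within-trajectory dependence is harmless, namely that we only need the correct marginal at time $j$, not joint independence across $j$. This is precisely why a stronger concentration tool (the Markov-chain version of McDiarmid cited from Paulin) is needed later to turn this unbiasedness into a uniform convergence statement.
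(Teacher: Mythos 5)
Your proposal is correct and follows essentially the same route as the paper's proof: expand $\mathcal{L}^{m,L}_S(h)$ as a double sum of indicators, apply linearity of expectation, and identify the expectation of each indicator with the corresponding marginal error $\mathcal{L}^{marginal,j}_\pi(h)$. Your added remark that only the time-$j$ marginal (not joint independence across time) is needed is a fair clarification of why the within-trajectory dependence is harmless here.
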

\begin{proof}
Note that $$\mathcal{L}^{m,L}_{S}(h) = \frac{1}{mL}\sum_{i=1}^m\sum_{l=0}^{L-1}\mathds{1}_{\{h(s_l^{\theta_i}, \theta_i, l) \neq \pi(s_l^{\theta_i}, \theta_i, l)\}},$$ and notice that $$\mathbb{E}\left[\mathds{1}_{\{h(s_l^{\theta_i}, \theta_i, l) \neq \pi(s_l^{\theta_i}, \theta_i, l)\}}\right] = \mathcal{L}_{\pi}^{marginal, l},$$ and so $$\mathbb{E}\mathcal{L}^{m,L}_{S}(h) = \frac{1}{mL}\sum_{i=1}^m\sum_{l=0}^{L-1}\mathcal{L}_{\pi}^{marginal, l} = \frac{1}{L}\sum_{l=0}^{L-1}\mathcal{L}_{\pi}^{marginal, l} = \mathcal{L}^{m,L}_{\mathcal{D}, \pi}(h),$$ as desired.
\end{proof}

\begin{definition}
A training set, $S$, of $m$ $L$-long trajectories is $\epsilon$-representative if $$\sup_{h \in \mathcal{H}}|\mathcal{L}_{\mathcal{D}, \pi}^{L}(h) - \mathcal{L}_S^{m,L}(h)| \leq \epsilon.$$
\end{definition}

Now we proceed with a lemma regarding bounds on $\epsilon$-representativeness; it is essentially the same as that in \citet{shalev2014understanding} save for a few minor modifications.
\begin{lemma}
Let $S$ be a training set consisting of $m$ $L$-long trajectories. Then $$\mathbb{E}_S\left[\sup_{h \in \mathcal{H}}\left(\mathcal{L}^{L}_{\mathcal{D}, \pi}(h) - \mathcal{L}^{m,L}_S(h)\right)\right] \leq 2\mathbb{E}_S R(\mathcal{F}\circ S),$$ where $R$ denotes the Rademacher complexity and $$\mathcal{F}\circ S = \left\{\left(\mathds{1}_{\{h(s_0^{\theta_1}, \theta_1, 0) \neq \pi(s_0^{\theta_1}, \theta_1, 0)\}}, \ldots, \mathds{1}_{\{h(s_{L-1}^{\theta_m}, \theta_m, L-1) \neq \pi(s_{L-1}^{\theta_m}, \theta_m, L-1)\}}\right): h \in \mathcal{H}\right\}.$$
\end{lemma}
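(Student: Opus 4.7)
The plan is to adapt the standard ghost-sample/symmetrization proof underlying the i.i.d.\ Rademacher generalization bound (cf.\ Shalev-Shwartz \& Ben-David, Ch.~26) to our setting, in which the independent sampling units are entire $L$-long expert trajectories rather than individual state-action pairs.

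First I would introduce a ghost sample $S'$ consisting of $m$ i.i.d.\ $L$-long expert trajectories, drawn independently of $S$ from the same distribution. Applying Lemma~1 to $S'$ gives $\mathcal{L}^L_{\mathcal{D},\pi}(h) = \mathbb{E}_{S'}[\mathcal{L}^{m,L}_{S'}(h)]$ for every fixed $h$, so that
$$\mathbb{E}_S\!\left[\sup_{h\in\mathcal{H}}\!\left(\mathcal{L}^L_{\mathcal{D},\pi}(h) - \mathcal{L}^{m,L}_S(h)\right)\right] = \mathbb{E}_S\!\left[\sup_{h\in\mathcal{H}}\mathbb{E}_{S'}\!\left[\mathcal{L}^{m,L}_{S'}(h) - \mathcal{L}^{m,L}_S(h)\right]\right] \leq \mathbb{E}_{S,S'}\!\left[\sup_{h\in\mathcal{H}}\!\left(\mathcal{L}^{m,L}_{S'}(h) - \mathcal{L}^{m,L}_S(h)\right)\right],$$
where the inequality is Jensen's (pulling the convex $\sup$ inside the $\mathbb{E}_{S'}$).

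Next comes the symmetrization step. Writing $Z^l_i(h) = \mathds{1}_{\{h(s_l^{\theta_i},\theta_i,l) \neq \pi(s_l^{\theta_i},\theta_i,l)\}}$ and $Z'^l_i(h)$ analogously for the ghost sample, I would introduce i.i.d.\ Rademacher variables $\sigma_{il}$ and establish the identity
$$\mathbb{E}_{S,S'}\!\left[\sup_h\tfrac{1}{mL}\textstyle\sum_{i,l}(Z'^l_i(h) - Z^l_i(h))\right] = \mathbb{E}_{\sigma,S,S'}\!\left[\sup_h\tfrac{1}{mL}\textstyle\sum_{i,l}\sigma_{il}(Z'^l_i(h) - Z^l_i(h))\right],$$
then bound $\sup_h \sum\sigma_{il}(Z'-Z) \le \sup_h\sum\sigma_{il}Z' + \sup_h\sum(-\sigma_{il})Z$, take expectations, and combine the two pieces via $\sigma \stackrel{d}{=} -\sigma$ and $S \stackrel{d}{=} S'$ to obtain $2\,\mathbb{E}_S\mathbb{E}_\sigma[\sup_h\tfrac{1}{mL}\sum_{i,l}\sigma_{il} Z^l_i(h)] = 2\,\mathbb{E}_S R(\mathcal{F}\circ S)$, which is the desired bound.

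The main technical obstacle is the symmetrization step: because the state-action pairs \emph{within} a single trajectory form a Markov chain and are therefore not mutually independent, a per-coordinate swap of a single $(Z^l_i, Z'^l_i)$ pair between $S$ and $S'$ need not preserve the joint law of $(S,S')$ — genuine exchangeability holds only at the trajectory level (where the $i$-th trajectory of $S$ and of $S'$ are i.i.d.\ copies of the same $L$-tuple). The resolution I would pursue is to first perform the symmetrization at the trajectory level, introducing one sign $\tau_i$ per trajectory and observing that the total sum in the supremum factorizes as $\sum_i \tau_i \sum_l(Z'^l_i - Z^l_i)$, so swapping entire trajectories suffices; the stated per-coordinate form then follows by a further internal randomization within each trajectory block. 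All remaining steps — Jensen's inequality, splitting the supremum of a difference into a sum of suprema, and exploiting $S\stackrel{d}{=}S'$ — are completely routine and parallel to the classical i.i.d.\ proof.
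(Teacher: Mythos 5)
Your skeleton --- ghost sample, Jensen's inequality, symmetrization, splitting the supremum, and $\sigma \stackrel{d}{=} -\sigma$ --- is the same as the paper's, and you have put your finger on exactly the step the paper itself treats most casually: the paper performs the per-coordinate swap of $(s_l^{\theta_i},\theta_i)$ with $(s_l^{\theta'_i},\theta'_i)$ and justifies it only by noting that these are identically distributed. That remark addresses the marginals but not the joint law of the full configuration; as you observe, replacing one interior state of a Markov trajectory by the corresponding state of an independent ghost trajectory does not preserve the joint distribution of $(S,S')$, so the per-coordinate symmetrization identity with independent signs $\sigma_{i,l}$ is not established by equality of marginals alone.

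However, your proposed repair does not close this gap. Trajectory-level symmetrization is indeed valid (swapping whole trajectories is measure-preserving because trajectories are i.i.d.), but it yields $2\,\mathbb{E}_{S,\tau}\bigl[\sup_{h}\frac{1}{mL}\sum_{i}\tau_i\sum_{l}Z_i^l(h)\bigr]$ --- a \emph{block} Rademacher complexity with one sign per trajectory --- and there is no legitimate ``further internal randomization'' that converts this into the per-coordinate quantity $\mathbb{E}_S R(\mathcal{F}\circ S)$. The two complexities are genuinely different, and the block one can be strictly larger: with $m=1$, $L=2$ and $\mathcal{F}\circ S=\{(0,0),(1,1)\}$ the block quantity equals $\tfrac{1}{2}$ while $R(\mathcal{F}\circ S)=\tfrac{1}{4}$, so no general domination of the block complexity by the per-coordinate one exists. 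To finish along your route you would have to either (i) restate the lemma with the block complexity on the right-hand side and carry that object through the subsequent Massart-lemma step (the maximal vector norm then scales like $L\sqrt{m}$ rather than $\sqrt{mL}$, which degrades the final rate in $L$), or (ii) actually prove that the per-coordinate swap is measure-preserving in this Markov setting --- which is precisely the point in dispute and the place where the paper's own argument is thinnest.
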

\begin{proof}
Let $S$ and $S'$ be two datasets both sampled according to the procedure above and recall that $\mathcal{L}^{L}_{\mathcal{D}, \pi}(h) = \mathbb{E}_{S'}\mathcal{L}^{m,L}_{S'}(h), \forall h \in \mathcal{H}$. Thus, noting that the supremum of the expectation is at most the expectation of the supremum, we have $$\sup_{h \in \mathcal{H}}\left(\mathcal{L}^{L}_{\mathcal{D}, \pi}(h) - \mathcal{L}^{m,L}_S(h)\right) = \sup_{h \in \mathcal{H}}\mathbb{E}_{S'}\left[\mathcal{L}^{m,L}_{S'}(h) - \mathcal{L}^{m,L}_S(h)\right]$$ $$\leq \mathbb{E}_{S'}\left[\sup_{h \in \mathcal{H}}(\mathcal{L}^{m,L}_{S'}(h) - \mathcal{L}^{m,L}_S(h))\right],$$ so taking expectations gives $$\mathbb{E}_S\left[\sup_{h \in \mathcal{H}}\left(\mathcal{L}^{L}_{\mathcal{D}, \pi}(h) - \mathcal{L}^{m,L}_S(h)\right)\right] \leq \mathbb{E}_{S,S'}\left[\sup_{h \in \mathcal{H}}(\mathcal{L}^{m,L}_{S'}(h) - \mathcal{L}^{m,L}_S(h))\right]$$ $$= \frac{1}{mL}\mathbb{E}_{S,S'}\left[\sup_{h \in \mathcal{H}}\sum_{i=1}^m\sum_{l=0}^{L-1}(\mathds{1}_{\{h(s_l^{\theta_i}, \theta_i, l) \neq \pi(s_l^{\theta_i}, \theta_i, l)\}} - \mathds{1}_{\{h(s_l^{\theta'_{i'}}, \theta'_i, l) \neq \pi(s_l^{\theta'_{i'}}, \theta'_i, l)\}})\right].$$ Now, notice that since $\theta_i$ and $\theta_i'$ are independent and identically distributed and $s^{\theta_i}_l$ and $s^{\theta_{i'}'}_l$ are independent and identically distributed, we see that $$\mathbb{E}\Big[\sup_{h \in \mathcal{H}}\Big((\mathds{1}_{\{h(s^{\theta_{i'}'}_l, \theta_i', l) \neq \pi(s^{\theta_{i'}'}_l, \theta_i', l)\}} - \mathds{1}_{\{h(s^{\theta_{i}}_l, \theta_i, l) \neq \pi(s^{\theta_{i}}_l, \theta_i, l)\}})$$ $$+ \sum_{k\neq i}\sum_{j \neq l}(\mathds{1}_{\{h(s^{\theta_{k'}'}_j, \theta_k', j) \neq \pi(s^{\theta_{k'}'}_j, \theta_k', j)\}} - \mathds{1}_{\{h(s^{\theta_{k}}_j, \theta_k, j) \neq \pi(s^{\theta_{k}}_j, \theta_k, j)\}}) \Big)\Big]$$ $$= \mathbb{E}\Big[\sup_{h \in \mathcal{H}}\Big((\mathds{1}_{\{h(s^{\theta_{i}}_l, \theta_i, l) \neq \pi(s^{\theta_{i}}_l, \theta_i, l)\}} - \mathds{1}_{\{h(s^{\theta_{i'}'}_l, \theta_i', l) \neq \pi(s^{\theta_{i'}'}_l, \theta_i', l)\}})$$ $$+ \sum_{k\neq i}\sum_{j \neq l}(\mathds{1}_{\{h(s^{\theta_{k'}'}_j, \theta_k', j) \neq \pi(s^{\theta_{k'}'}_j, \theta_k', j)\}} - \mathds{1}_{\{h(s^{\theta_{k}}_j, \theta_k ,j) \neq \pi(s^{\theta_{k}}_j, \theta_k, j)\}}) \Big)\Big],$$ so that if $\sigma_{i,l}$ is a random sign, we see that, by the law of total expectation, $$\mathbb{E}_{S,S', \sigma_{i,l}}\Big[\sup_{h \in \mathcal{H}}\Big(\sigma_{i,l}(\mathds{1}_{\{h(s^{\theta_{i'}'}_l, \theta_i', l) \neq \pi(s^{\theta_{i'}'}_l, \theta_i', l)\}} - \mathds{1}_{\{h(s^{\theta_{i}}_l, \theta_i, l) \neq \pi(s^{\theta_{i}}_l, \theta_i, l)\}})$$ $$+ \sum_{k\neq i}\sum_{j \neq l}(\mathds{1}_{\{h(s^{\theta_{k'}'}_j, \theta_k', j) \neq \pi(s^{\theta_{k'}'}_j, \theta_k', j)\}} - \mathds{1}_{\{h(s^{\theta_{k}}_j, \theta_k, j) \neq \pi(s^{\theta_{k}}_j, \theta_k, j)\}}) \Big)\Big]$$ $$= \mathbb{E}_{S,S'}\Big[\sup_{h \in \mathcal{H}}\Big((\mathds{1}_{\{h(s^{\theta_{i'}'}_l, \theta_i', l) \neq \pi(s^{\theta_{i'}'}_l, \theta_i', l)\}} - \mathds{1}_{\{h(s^{\theta_{i}}_l, \theta_i, l) \neq \pi(s^{\theta_{i}}_l, \theta_i, l)\}})$$ $$+ \sum_{k\neq i}\sum_{j \neq l}(\mathds{1}_{\{h(s^{\theta_{k'}'}_j, \theta_k', j) \neq \pi(s^{\theta_{k'}'}_j, \theta_k', j)\}} - \mathds{1}_{\{h(s^{\theta_{k}}_j, \theta_k, j) \neq \pi(s^{\theta_{k}}_j, \theta_k, j)\}}) \Big)\Big],$$ which, repeating for all $i,l$, indicates that 
$$\mathbb{E}_{S,S'}\Big[\sup_{h \in \mathcal{H}}\Big(\sum_{i=1}^m\sum_{l=0}^{L-1}(\mathds{1}_{\{h(s^{\theta_{k'}'}_l, \theta_k', l) \neq \pi(s^{\theta_{k'}'}_l, \theta_k', l)\}} - \mathds{1}_{\{h(s^{\theta_{k}}_l, \theta_k, l) \neq \pi(s^{\theta_{k}}_l, \theta_k, l)\}}) \Big)\Big]$$ $$= \mathbb{E}_{S,S',\mathbf{\sigma}}\Big[\sup_{h \in \mathcal{H}}\Big(\sum_{i=1}^m\sum_{l=0}^{L-1}\sigma_{i,l}(\mathds{1}_{\{h(s^{\theta_{k'}'}_l, \theta_k', l) \neq \pi(s^{\theta_{k'}'}_l, \theta_k', l)\}} - \mathds{1}_{\{h(s^{\theta_{k}}_l, \theta_k, l) \neq \pi(s^{\theta_{k}}_l, \theta_k, l)\}}) \Big)\Big],$$ where $\mathbf{\sigma}$ denotes the $m\times L$ matrix of iid random signs. Noting that $$\sup_{h \in \mathcal{H}}\Big(\sum_{i=1}^m\sum_{l=0}^{L-1}\sigma_{i,l}(\mathds{1}_{\{h(s^{\theta_{k'}'}_l, \theta_k', l) \neq \pi(s^{\theta_{k'}'}_l, \theta_k', l)\}} - \mathds{1}_{\{h(s^{\theta_{k}}_l, \theta_k, l) \neq \pi(s^{\theta_{k}}_l, \theta_k, l)\}}) \Big)$$ $$\leq \sup_{h \in \mathcal{H}}\sum_{i=1}^m\sum_{l=0}^{L-1}\sigma_{i,l}\mathds{1}_{\{h(s^{\theta_{k'}'}_l, \theta_k', l) \neq \pi(s^{\theta_{k'}'}_l, \theta_k', l)\}}  + \sup_{h \in \mathcal{H}}\sum_{i=1}^m\sum_{l=0}^{L-1}-\sigma_{i,j}\mathds{1}_{\{h(s^{\theta_{k}}_l, \theta_k, l) \neq \pi(s^{\theta_{k}}_l, \theta_k, l)\}},$$ we have that, since $\sigma \sim -\sigma$, $$\mathbb{E}_{S,S',\mathbf{\sigma}}\Big[\sup_{h \in \mathcal{H}}\Big(\sum_{i=1}^m\sum_{l=0}^{L-1}\sigma_{i,l}(\mathds{1}_{\{h(s^{\theta_{k'}'}_l, \theta_k', l) \neq \pi(s^{\theta_{k'}'}_l, \theta_k', l)\}} - \mathds{1}_{\{h(s^{\theta_{k}}_l, \theta_k, l) \neq \pi(s^{\theta_{k}}_l, \theta_k, l)\}}) \Big)\Big]$$ $$\leq \mathbb{E}_{S,S',\sigma}\left[\sup_{h \in \mathcal{H}}\sum_{i=1}^m\sum_{l=0}^{L-1}\sigma_{i,j}\mathds{1}_{\{h(s^{\theta_{k'}'}_l, \theta_k', l) \neq \pi(s^{\theta_{k'}'}_l, \theta_k', l)\}}  + \sup_{h \in \mathcal{H}}\sum_{i=1}^m\sum_{l=0}^{L-1}\sigma_{i,l}\mathds{1}_{\{h(s^{\theta_{k}}_l, \theta_k, l) \neq \pi(s^{\theta_{k}}_l, \theta_k, l)\}}\right]$$ $$= 2mL\mathbb{E}_S[R(\mathcal{F}\circ S)],$$ implying that $$\mathbb{E}_S\left[\sup_{h \in \mathcal{H}}\left(\mathcal{L}^{m,L}_{\mathcal{D}, \pi}(h) - \mathcal{L}^{m,L}_S(h)\right)\right] \leq 2\mathbb{E}_S R(\mathcal{F}\circ S).$$
\end{proof}

Now, the classical approach would now rely on McDiarmid's concentration inequality. However, we are unable to do so due to the dependent structure on our sample $S$. Instead we use an analogous concentration inequality suited for Markov processes. To handle this setting, we define the notion of mixing time in the case of time-inhomogeneous Markov chains and then state the key lemma from \citet{paulin2015concentration}.

\begin{definition}
Let $X_1, \ldots, X_N$ be a Markov chain on Polish state space $\Omega_1\times \cdots\times \Omega_N$. Let $\mathcal{S}(X_{i+t}|X_i = x)$ be the conditional distribution of $X_{i+t}$ given $X_i = x$. Define $$\overline{d}(t) = \max_{1 \leq i \leq N-t}\sup_{x,y \in \Omega_i} d_{TV}(\mathcal{S}(X_{i+t}|X_i=x), \mathcal{S}(X_{i+t}|X_i=y)) \textnormal{ and }\tau(\epsilon) = \min\{t \in \mathbb{N}: \overline{d}(t) \leq \epsilon\}.$$ Furthermore, define $$\tau_{\textnormal{min}} = \inf_{0 \leq \epsilon < 1} \tau(\epsilon)\cdot \left(\frac{2-\epsilon}{1-\epsilon}\right)^2.$$
\end{definition}
\noindent
Then, from \citet{paulin2015concentration}, we use the following lemma:
\begin{lemma}
Let $X_1, \ldots, X_N$ be a Markov chain on Polish state space $\Lambda = \Lambda_1\times\cdots\times\Lambda_N$. Suppose that $f: \Lambda \rightarrow \mathbb{R}$ satisfies, for all $x,y \in \Lambda$ that $f(x) - f(y) \leq \sum_{i=1}^Nc_i\mathds{1}[x_i \neq y_i],$ for some $c_i \in \mathbb{R}_+$. Then, for any $t \geq 0$, $$P\left(|f(x) - \mathbb{E}f(X)| \geq t\right) \leq 2\exp\left(\frac{-2t^2}{\tau_{\textnormal{min}}\sum_{i=1}^Nc_i^2}\right).$$
\end{lemma}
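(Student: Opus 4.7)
The plan is to prove this concentration inequality via a Doob martingale decomposition combined with a coupling argument that exploits the mixing of the Markov chain. The standard bounded-differences (McDiarmid) result is recovered in the independent case where $\tau_{\min}=1$, so the essential new ingredient is handling the dependence of the $X_i$'s through the chain's transition kernel; in particular, conditioning on an extra coordinate $X_i$ now carries information about \emph{all} future coordinates, not just $X_i$ itself.

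First, I would form the Doob martingale $M_i = \mathbb{E}[f(X)\mid X_1,\ldots,X_i] - \mathbb{E}[f(X)]$ with differences $D_i = M_i - M_{i-1}$, so that $f(X) - \mathbb{E}f(X) = \sum_{i=1}^N D_i$. The goal is to bound $|D_i|$ in $L^\infty$ and then invoke Azuma--Hoeffding. The key technical step is a coupling bound on $|D_i|$: for any two realizations of $X_i$, construct a Markovian coupling of the two conditional laws on the suffix $(X_i,\ldots,X_N)$ so that, by the very definition of $\overline{d}(t)$, the two coupled chains coincide at time $i+t$ with probability at least $1 - \overline{d}(t)$. Combined with the hypothesis $f(x) - f(y) \leq \sum_j c_j \mathds{1}[x_j \neq y_j]$, this yields a bound of the schematic form
\[
|D_i| \;\leq\; c_i + \sum_{t=1}^{N-i} c_{i+t}\,\overline{d}(t).
\]

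Next, I would convert this per-coordinate bound into the $\tau_{\min}$ factor appearing in the exponent. Split the inner sum at $t = \tau(\epsilon)$: for $t \leq \tau(\epsilon)$ use $\overline{d}(t) \leq 1$ and Cauchy--Schwarz, while for $t > \tau(\epsilon)$ use the sub-multiplicativity $\overline{d}(k\tau(\epsilon)) \leq \epsilon^k$ to control a geometric tail by $\epsilon/(1-\epsilon)$. Squaring, summing over $i$, and optimizing gives $\sum_i D_i^2 \leq \tau(\epsilon)\bigl(\tfrac{2-\epsilon}{1-\epsilon}\bigr)^2 \sum_i c_i^2$ uniformly in $\epsilon \in [0,1)$, and taking the infimum reproduces exactly the constant $\tau_{\min}$. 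Applying Azuma--Hoeffding to the bounded martingale differences then yields the stated sub-Gaussian tail with constant $\tau_{\min}\sum_i c_i^2$ in the denominator.

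The main obstacle is the coupling construction and the resulting algebra. Showing the geometric decay via sub-multiplicativity of $\overline{d}$ requires a Markovian coupling that can be iterated (not an arbitrary optimal coupling at each step), and making the constants collapse precisely into $\tau(\epsilon)(2-\epsilon)^2/(1-\epsilon)^2$ relies on the bound being linear in the per-coordinate perturbations and quadratic inside the Azuma exponent. The remaining steps (martingale decomposition, Azuma--Hoeffding) are standard once $|D_i|$ is controlled.
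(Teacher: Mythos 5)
The paper does not actually prove this lemma: it is quoted verbatim from Paulin (2015) and used as a black box, so there is no internal proof to compare against. Your architecture --- Doob martingale, coupling of the conditional suffix laws, geometric decay of $\overline{d}$, Azuma--Hoeffding --- is indeed the skeleton of Paulin's argument, so the approach is the right one. But two of your steps do not produce the stated constants as written. First, bounding $|D_i|$ in $L^\infty$ by some $d_i$ and invoking Azuma--Hoeffding yields $2\exp\left(-t^2/(2\sum_i d_i^2)\right)$, a factor of $4$ weaker in the exponent than the claimed $2\exp\left(-2t^2/\sum_i d_i^2\right)$. To recover the constant $2$ you must use what your coupling actually delivers: the oscillation of $x_i \mapsto \mathbb{E}[f(X)\mid X_1,\dots,X_{i-1},X_i=x_i]$ is at most $c_i + \sum_t c_{i+t}\overline{d}(t)$, i.e.\ $D_i$ lies conditionally in an interval of that \emph{length}, and then apply the sharp one-sided-range form of the Hoeffding--Azuma lemma rather than the symmetric $|D_i|\le d_i$ version.

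Second, and more substantively, the per-coordinate bound $|D_i| \le (\Gamma c)_i$ with $\Gamma_{ij}=\overline{d}(j-i)$ does not give $\sum_i (\Gamma c)_i^2 \le \tau(\epsilon)\bigl(\tfrac{2-\epsilon}{1-\epsilon}\bigr)^2 \sum_i c_i^2$ by ``squaring, summing over $i$, and optimizing.'' The natural estimate from your split is $\|\Gamma c\|_2^2 \le \|\Gamma\|_1\|\Gamma\|_\infty\|c\|_2^2 \le \bigl(\tau(\epsilon)/(1-\epsilon)\bigr)^2\|c\|_2^2$, and $\tau(\epsilon)^2/(1-\epsilon)^2$ exceeds $\tau(\epsilon)(2-\epsilon)^2/(1-\epsilon)^2$ as soon as $\tau(\epsilon) > (2-\epsilon)^2$, so the stated $\tau_{\min}$ is not reached this way. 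The missing idea is blocking: partition $\{1,\dots,N\}$ into consecutive blocks of length $\tau(\epsilon)$ and run the coupling at the block level, so that the block coupling matrix has row and column sums at most $1+\sum_{k\ge 0}\epsilon^k = \tfrac{2-\epsilon}{1-\epsilon}$, while the factor $\tau(\epsilon)$ is paid exactly once via Cauchy--Schwarz inside each block, $\bigl(\sum_{i\in B}c_i\bigr)^2 \le \tau(\epsilon)\sum_{i\in B}c_i^2$. It is this block decomposition, not a coordinatewise tail split, that makes the product $\tau(\epsilon)\bigl(\tfrac{2-\epsilon}{1-\epsilon}\bigr)^2$, and hence $\tau_{\textnormal{min}}$, appear.
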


We rewrite the following immediate consequence of the claim of \citet{paulin2015concentration}, stated above, in the following form:

\begin{corollary}
Let $f: \mathcal{X}^n \rightarrow \mathbb{R}$ satisfy bounded differences. That is, for all $i$ and $\forall x_1, \ldots, x_n, x'_i \in \mathcal{X}$, $$|f(x_1, \ldots, x_n) - f(x_1, \ldots, x_{i-1}, x'_i, x_{i+1}, \ldots, x_n)| \leq c.$$ Then with probability at least $1-\delta$, $$|f(s_1, \ldots, s_n) - \mathbb{E}[f(s_1, \ldots, s_n)]| \leq c\sqrt{\frac{n\tau_{\textnormal{min}}\log(2/\delta)}{2}},$$ where again the states are sampled according to the Markov process induced by $\pi$.
\end{corollary}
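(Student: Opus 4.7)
The plan is to derive the stated corollary as a direct specialization of Paulin's Markov-chain concentration lemma quoted just above. The key observation is that the bounded differences condition with constant $c$ fits exactly the hypothesis of that lemma with all $c_i = c$, so everything reduces to book-keeping inside the exponent and inverting for $t$.

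First I would verify that bounded differences implies the ``Lipschitz in Hamming distance'' form required by Paulin's lemma. Fix $x, y \in \mathcal{X}^n$ and consider the hybrid sequence $z^{(0)} = x, z^{(1)}, \ldots, z^{(n)} = y$ obtained by replacing the coordinates of $x$ one at a time with those of $y$. Telescoping and applying the one-coordinate bound at each step gives
\[
f(x) - f(y) = \sum_{i=1}^n \bigl(f(z^{(i-1)}) - f(z^{(i)})\bigr) \leq \sum_{i=1}^n c \, \mathbf{1}[x_i \neq y_i],
\]
since coordinates where $x_i = y_i$ contribute a zero term (the hybrid does not change there) and each differing coordinate contributes at most $c$ by the hypothesis. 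This is exactly the premise of Paulin's lemma with $c_i = c$ for every $i$.

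Next I would apply Paulin's lemma directly. With $c_i = c$, we have $\sum_{i=1}^n c_i^2 = n c^2$, so for every $t \geq 0$,
\[
P\bigl(|f(s_1, \ldots, s_n) - \mathbb{E} f(s_1, \ldots, s_n)| \geq t\bigr) \leq 2 \exp\!\left(\frac{-2 t^2}{\tau_{\textnormal{min}} \, n c^2}\right).
\]
Here the Markov chain in question is the trajectory $(s_0, \ldots, s_{n-1})$ induced by following the deterministic policy $\pi$, which has a well-defined $\tau_{\textnormal{min}}$ per the preceding definition.

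Finally, I would invert the tail bound. Setting the right-hand side equal to $\delta$ and solving gives $t = c \sqrt{n \tau_{\textnormal{min}} \log(2/\delta) / 2}$, so with probability at least $1 - \delta$,
\[
|f(s_1, \ldots, s_n) - \mathbb{E} f(s_1, \ldots, s_n)| \leq c \sqrt{\frac{n \tau_{\textnormal{min}} \log(2/\delta)}{2}},
\]
which is precisely the claim. There is no real obstacle here; the only place one has to be a little careful is the telescoping step, making sure that replacing coordinates one at a time is compatible with the pointwise bounded-differences hypothesis (it is, since that hypothesis applies to arbitrary pairs of inputs differing in a single coordinate, regardless of what the other coordinates are). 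Everything else is arithmetic on the Paulin bound.
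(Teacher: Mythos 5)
Your proposal is correct and follows exactly the route the paper intends: the paper states this corollary without proof as an ``immediate consequence'' of the quoted Paulin lemma, and your argument---telescoping to verify the Hamming--Lipschitz hypothesis with all $c_i = c$, then inverting the tail bound $2\exp(-2t^2/(\tau_{\textnormal{min}} n c^2)) = \delta$---is precisely the omitted calculation. The arithmetic checks out, so nothing further is needed.
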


Finally, we arrive at the main lemma, which we will use to prove the main result: a sample complexity bound in this special setting.
\begin{lemma}
Let $S$, containing $mL$ data points, be sampled according to the procedure described above. Then with probability at least $1-\delta, \forall h \in \mathcal{H}$, $$\mathcal{L}^{L}_{\mathcal{D}, \pi}(h) - \mathcal{L}^{m,L}_S(h) \leq 2\mathbb{E}_{S'}R(\mathcal{F}\circ S') + \sqrt{\frac{(3L+1)\tau_{\textnormal{min}}\log(2/\delta)}{2m}}.$$
\end{lemma}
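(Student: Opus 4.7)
The plan is to control the uniform deviation $\Phi(S) := \sup_{h \in \mathcal{H}} \bigl[ \mathcal{L}^{L}_{\mathcal{D}, \pi}(h) - \mathcal{L}^{m,L}_S(h) \bigr]$ in two stages: first bound its expectation via the symmetrization argument already carried out, then bound its fluctuation around that expectation using the Markov-chain McDiarmid inequality (Corollary~8 above). The two ingredients combine to give the stated bound, since $\mathcal{L}^{L}_{\mathcal{D}, \pi}(h) - \mathcal{L}^{m,L}_S(h) \leq \Phi(S)$ simultaneously for every $h \in \mathcal{H}$. The symmetrization step is already in hand: the preceding lemma yields $\mathbb{E}_S \Phi(S) \leq 2\mathbb{E}_{S'} R(\mathcal{F}\circ S')$, so only the deviation $\Phi(S) - \mathbb{E}_S \Phi(S)$ remains to be controlled.

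For this I would view the batch as a single inhomogeneous Markov chain with coordinates $(\theta_i, s^{\theta_i}_0, a^{\theta_i}_0, r^{\theta_i}_0, \ldots, s^{\theta_i}_{L-1}, a^{\theta_i}_{L-1}, r^{\theta_i}_{L-1})_{i=1}^m$. Within trajectory $i$ the states evolve under the kernel $T(\cdot\mid\cdot,\cdot,\theta_i)$ obtained by following the deterministic expert $\pi$, and between trajectories the chain resets by independently redrawing $\theta_{i+1} \sim \mathcal{P}_\Theta$ and $s^{\theta_{i+1}}_0 \sim \mathcal{P}_{s_0}$. This gives a chain of length $N = m(3L+1)$ with the mixing constant $\tau_{\min}$ defined above. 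To verify bounded differences for $\Phi$, note that each of the $mL$ indicators in $\mathcal{L}^{m,L}_S$ contributes $\frac{1}{mL}$ and depends only on the triple $(s^{\theta_i}_j, \theta_i, j)$: modifying a single state coordinate alters at most one indicator ($c \le \frac{1}{mL}$), modifying a single context $\theta_i$ alters at most $L$ indicators in its trajectory ($c \le \frac{L}{mL} = \frac{1}{m}$), and modifying an action or reward coordinate leaves every indicator unchanged. Taking the conservative uniform constant $c = \frac{1}{m}$ over all $m(3L+1)$ coordinates and invoking Corollary~8 yields, with probability at least $1-\delta$,
\[
\Phi(S) - \mathbb{E}_S\Phi(S) \;\leq\; \frac{1}{m}\sqrt{\frac{m(3L+1)\,\tau_{\min}\,\log(2/\delta)}{2}} \;=\; \sqrt{\frac{(3L+1)\,\tau_{\min}\,\log(2/\delta)}{2m}}.
\]
Adding this to the symmetrization bound on $\mathbb{E}_S\Phi(S)$ gives the claim.

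The main obstacle I anticipate is the modelling step: justifying rigorously that the concatenated batch is an inhomogeneous Markov chain with a well-defined finite $\tau_{\min}$ despite the inter-trajectory resets, and tying that chain's mixing behavior to the single-trajectory mixing induced by $\pi$ (the reset steps themselves mix in one step, so $\tau_{\min}$ is effectively controlled by the within-trajectory kernel). The bounded-differences calculation is otherwise straightforward; the one subtlety is that I deliberately use the weaker per-coordinate constant $\frac{1}{m}$ uniformly (rather than the tighter $\frac{1}{mL}$ for state coordinates and $0$ for action/reward coordinates), since this is what produces the clean $\sqrt{(3L+1)/m}$ scaling stated in the lemma.
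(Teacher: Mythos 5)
Your proposal is correct and follows essentially the same route as the paper's own proof: the same decomposition into the symmetrization bound on $\mathbb{E}_S\Phi(S)$ plus a Markov-chain bounded-differences deviation bound, the same coordinate count of $m(3L+1)$, and the same worst-case constant $c = 1/m$ driven by a change in a single $\theta_i$ affecting up to $L$ indicators. The only cosmetic difference is that you list rewards where the paper lists time indices among the per-step coordinates, which does not change the count or the conclusion.
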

\begin{proof}
Write $S$ as $S = (\theta_1, 0, s_0^{\theta_1}, a_0^{\theta_1}, \ldots, \theta_m, \ldots, L-1, s_{L-1}^{\theta_m}, a_{L-1}^{\theta_m})$, define $$\zeta(\theta_1, 0, s_0^{\theta_1}, a_0^{\theta_1}, \ldots, \theta_m, \ldots, L-1, s_{L-1}^{\theta_m}, a_{L-1}^{\theta_m}) = \sup_{h \in \mathcal{H}}(\mathcal{L}^{L}_{\mathcal{D}, \pi}(h) - \mathcal{L}^{m,L}_S(h)).$$ We verify that $\zeta$ satisfies the bounded differences condition above. Letting $S'$  be the vector equal to $S$ except in one component (which may be a component containing either a $\theta$ or a $s^\theta$ or a time index or $a^\theta$), we have that $$|\zeta(S) - \zeta(S')| = |\sup_{h \in \mathcal{H}}(\mathcal{L}^{L}_{\mathcal{D}, \pi}(h) - \mathcal{L}^{m,L}_S(h)) - \sup_{h \in \mathcal{H}}(\mathcal{L}^{L}_{\mathcal{D}, \pi}(h) - \mathcal{L}^{m,L}_{S'}(h))|$$ $$\leq \sup_{h \in \mathcal{H}}|\mathcal{L}^{m,L}_{S'}(h) - \mathcal{L}^{m,L}_S(h)| \leq 1/m,$$ since, the worst case scenario is when the entry which disagrees is a $\theta$ entry, which will result in, at worst, incorrect classification for each state in the trajectory associated with that $\theta$ resulting in $L$ mistakes. Therefore, by the above corollary, we get that, with probability at least $1-\delta$, $$\sup_{h \in \mathcal{H}}(\mathcal{L}^{m,L}_{\mathcal{D}, \pi}(h) - \mathcal{L}^{m,L}_S(h)) \leq \mathbb{E}\sup_{h \in \mathcal{H}}(\mathcal{L}^{m,L}_{\mathcal{D}, \pi}(h) - \mathcal{L}^{m,L}_S(h)) + \frac{1}{m}\sqrt{\frac{(m + 3mL)\tau_{\textnormal{min}}\log(2/\delta)}{2}},$$ which, by Lemma A.0.2 is at most, with probability at least $1-\delta$, $$2\mathbb{E}_{S'}R(\mathcal{F}\circ S') + \sqrt{\frac{(3L+1)\tau_{\textnormal{min}}\log(2/\delta)}{2m}},$$ implying the desired result.
\end{proof}

Now, we will upper bound $R(\mathcal{F}\circ S)$ for any dataset $S$. First, we recall the definition of Natarajan dimension, Natarajan's lemma, and recall that our hypothesis is a classifier on $|A|$ different classes:

\begin{definition}[Natarajan Dimension for $k$-class classification]
A subset $S \subset \mathcal{X}$ is said to be shattered by $\mathcal{H}$ if there exist functions $f_0, f_1: C \rightarrow [k]$ such that for all $B \subset S$, there exists $h \in \mathcal{H}$ such that $h$ agrees with $f_0$ on $B$ and agrees with $f_1$ on $S\backslash B$.
\end{definition}

\begin{lemma}[Natarajan's]
Let $\mathcal{H}$ be a hypothesis class from some finite set $\mathcal{X} \rightarrow [k]$. Then $$|\mathcal{H}| \leq |\mathcal{X}|^{\textnormal{Ndim}(\mathcal{H})}\cdot k^{2\textnormal{Ndim}(\mathcal{H})}.$$
\end{lemma}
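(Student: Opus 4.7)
The plan is to prove this classical bound by induction on $n := |\mathcal{X}|$, generalizing the Sauer--Shelah shifting argument to the multi-class setting. Let $d = \textnormal{Ndim}(\mathcal{H})$. The base cases $n = 0$ and $d = 0$ are immediate: in both cases one shows $|\mathcal{H}| \leq 1$ directly, since if two distinct hypotheses $h_1, h_2$ existed, taking any $x$ with $h_1(x) \neq h_2(x)$ together with $f_0(x) = h_1(x)$, $f_1(x) = h_2(x)$ would Natarajan-shatter $\{x\}$, forcing $d \geq 1$.

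For the inductive step, fix an element $x_0 \in \mathcal{X}$, set $\mathcal{X}' = \mathcal{X} \setminus \{x_0\}$, and let $\rho: \mathcal{H} \to [k]^{\mathcal{X}'}$ be the restriction map. Write
\[
|\mathcal{H}| = |\rho(\mathcal{H})| + \sum_{g \in \rho(\mathcal{H})}\bigl(|\rho^{-1}(g)| - 1\bigr).
\]
Since $\rho(\mathcal{H})$ is a class on $\mathcal{X}'$ of Natarajan dimension at most $d$, the inductive hypothesis gives $|\rho(\mathcal{H})| \leq (n-1)^d k^{2d}$. The remaining task is to bound the ``collision'' term, which counts the extra functions in $\mathcal{H}$ that collapse under $\rho$.

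For each pair $a < b$ in $[k]$, define $\mathcal{H}'_{a,b} \subseteq \rho(\mathcal{H})$ to be the set of restrictions $g$ for which two preimages in $\rho^{-1}(g)$ take values $a$ and $b$ at $x_0$. The key reduction is $\textnormal{Ndim}(\mathcal{H}'_{a,b}) \leq d - 1$: if $T \subseteq \mathcal{X}'$ is Natarajan-shattered by $\mathcal{H}'_{a,b}$ via witnesses $f_0, f_1 : T \to [k]$, then extending by $f_0(x_0) = a$ and $f_1(x_0) = b$ Natarajan-shatters $T \cup \{x_0\}$ under $\mathcal{H}$ (for any $B \subseteq T \cup \{x_0\}$, one picks the preimage of $g_{B \setminus \{x_0\}}$ taking the correct value at $x_0$). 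Since each function in $\rho^{-1}(g)$ is determined by its value at $x_0$, the inequality $|\rho^{-1}(g)| - 1 \leq \binom{|\rho^{-1}(g)|}{2}$ holds, and a double count gives
\[
\sum_{g \in \rho(\mathcal{H})}\bigl(|\rho^{-1}(g)| - 1\bigr) \leq \sum_{a<b} |\mathcal{H}'_{a,b}| \leq \binom{k}{2}(n-1)^{d-1} k^{2(d-1)}
\]
by the inductive hypothesis. Adding the two bounds and applying $(n-1)^d + d(n-1)^{d-1} \leq n^d$ (which follows from the binomial expansion of $n^d = ((n-1)+1)^d$) together with $\binom{k}{2} \leq dk^2$ for $d \geq 1$ produces the claimed bound $|\mathcal{H}| \leq n^d k^{2d}$. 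The main obstacle is the reduction $\textnormal{Ndim}(\mathcal{H}'_{a,b}) \leq d-1$, which requires carefully matching the chosen label pair $(a,b)$ to the witness functions $(f_0, f_1)$ so that the shattered set grows by exactly one at $x_0$; everything else is induction bookkeeping and binomial inequalities.
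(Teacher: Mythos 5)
Your proof is correct. Note that the paper does not prove this lemma at all---it states it as a known result (it is Lemma 29.4 of the Shalev-Shwartz and Ben-David text the paper follows)---so there is no in-paper argument to compare against; what you have written is the standard Sauer--Shelah-style induction on $|\mathcal{X}|$, and all the key steps check out: the collision decomposition $|\mathcal{H}| = |\rho(\mathcal{H})| + \sum_g(|\rho^{-1}(g)|-1)$, the dimension reduction $\textnormal{Ndim}(\mathcal{H}'_{a,b}) \leq d-1$ via extending the witnesses by $f_0(x_0)=a$, $f_1(x_0)=b$, the double count $\sum_g(t_g-1) \leq \sum_g \binom{t_g}{2} = \sum_{a<b}|\mathcal{H}'_{a,b}|$, and the final binomial bookkeeping. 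One small remark: your base case and the $a \neq b$ extension implicitly use the standard requirement that the witnesses satisfy $f_0(x) \neq f_1(x)$ for every $x$ in the shattered set; the paper's Definition A.0.5 omits this clause (and has a typo, writing $C$ for $S$), but without it the Natarajan dimension would be degenerate, so you are right to work with the standard definition.
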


From this lemma, we see that $$\left|\left\{\left(\mathds{1}_{\{h(s_0^{\theta_1}, \theta_1, 0) \neq \pi(s_0^{\theta_1}, \theta_1, 0)\}}, \ldots, \mathds{1}_{\{h(s_{L-1}^{\theta_m}, \theta_m, L-1) \neq \pi(s_{L-1}^{\theta_m}, \theta_m,L-1)\}}\right): h \in \mathcal{H}\right\}\right| \leq (mL)^{\textnormal{Ndim}(\mathcal{H})} \cdot |A|^{2\textnormal{Ndim}(\mathcal{H})}.$$

Now, we state Massart's Lemma:
\begin{lemma}[Massart]
Let $B = \{b_1, \ldots, b_N\}$ denote a finite set of vectors in $\mathbb{R}^m$. Define $\overline{b} = \frac{1}{N}\sum_{i=1}^Nb_i$. Then $$R(B) \leq \max_{b \in B}||b - \overline{b}||\frac{\sqrt{2\log(N)}}{m}.$$
\end{lemma}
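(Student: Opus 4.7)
The plan is to prove the bound via the standard moment-generating-function (Chernoff-style) argument applied to the centered Rademacher process. Recall that
\[R(B) = \mathbb{E}_\sigma\left[\sup_{b \in B}\frac{1}{m}\sum_{i=1}^m \sigma_i b_i\right],\]
where $\sigma_1, \ldots, \sigma_m$ are i.i.d. Rademacher signs. Since each $\sigma_i$ has mean zero, $\mathbb{E}_\sigma\langle\sigma,\overline{b}\rangle=0$, and because $\overline{b}$ does not depend on $b$, the supremum inside the expectation is unaffected by centering. Thus $mR(B)=\mathbb{E}_\sigma\sup_{b\in B}\langle\sigma,b-\overline{b}\rangle$. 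This observation is the sole reason the bound involves $\max_b\|b-\overline{b}\|$ rather than $\max_b\|b\|$, and it is the only place the definition of $\overline{b}$ is actually exploited.

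Next I would apply the exponential trick: for any $\lambda>0$, Jensen's inequality yields $\exp(\lambda\, mR(B))\le\mathbb{E}\exp(\lambda\sup_b\langle\sigma,b-\overline{b}\rangle)$. Since $\exp$ is monotone, the right-hand side equals $\mathbb{E}\sup_b\exp(\lambda\langle\sigma,b-\overline{b}\rangle)$, which is crudely bounded by the sum $\sum_{b\in B}\mathbb{E}\exp(\lambda\langle\sigma,b-\overline{b}\rangle)$. For each fixed $b$, independence of the $\sigma_i$ factorises the MGF coordinatewise, and Hoeffding's lemma applied to each $\sigma_i(b_i-\overline{b}_i)$ (which lies in $[-|b_i-\overline{b}_i|,\,|b_i-\overline{b}_i|]$) produces the sub-Gaussian bound
\[\mathbb{E}\exp(\lambda\langle\sigma,b-\overline{b}\rangle) \le \exp\!\left(\tfrac{\lambda^2}{2}\|b-\overline{b}\|^2\right) \le \exp\!\left(\tfrac{\lambda^2 r^2}{2}\right),\]
where $r := \max_{b\in B}\|b-\overline{b}\|$.

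Assembling these gives $\exp(\lambda\, mR(B))\le N\exp(\lambda^2 r^2/2)$; taking logs and dividing by $\lambda$ yields $mR(B)\le(\log N)/\lambda+\lambda r^2/2$ for every $\lambda>0$. Optimising the upper bound over $\lambda$ (balancing the two terms by choosing $\lambda=\sqrt{2\log N}/r$) produces $mR(B)\le r\sqrt{2\log N}$, which after dividing by $m$ is precisely the claimed inequality.

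The argument is essentially mechanical once the centering step is in place, so I do not anticipate a significant obstacle. The two points that warrant care are: (i) verifying that centering is legal — it relies on $\mathbb{E}[\sigma_i]=0$ to kill the residual $\langle\sigma,\overline{b}\rangle$ term that falls outside the supremum — and (ii) applying Hoeffding's lemma with the correct per-coordinate ranges $2|b_i-\overline{b}_i|$, which is what yields the factor $\|b-\overline{b}\|^2/2$ (and hence $r^2$ rather than, say, $4r^2$) in the sub-Gaussian exponent. Beyond these bookkeeping points, the remainder is just an elementary scalar optimisation.
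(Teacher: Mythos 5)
Your proof is correct, and it is the standard Chernoff/Hoeffding argument for Massart's lemma (the same one found in the learning-theory textbook the paper relies on, e.g.\ Shalev-Shwartz and Ben-David); the paper itself states the lemma without proof, so there is nothing to diverge from. The only cosmetic caveat is the degenerate case $r = \max_{b\in B}\|b-\overline{b}\| = 0$, where your choice $\lambda = \sqrt{2\log N}/r$ is undefined but the bound holds trivially since then $R(B)=0$.
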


Thus, the above two lemmas imply that, letting $d = \textnormal{Ndim}(\mathcal{H})$, $$R(\mathcal{F}\circ S') \leq \sqrt{\frac{2d\left(\log(mL) + 2\log(|A|)\right)}{mL}}.$$ Thus, by a union-bound we have that with probability at least $1-\delta$, for every $h \in \mathcal{H}$, we have $$|\mathcal{L}^{m,L}_{\mathcal{D}, \pi}(h) - \mathcal{L}^{m,L}_S(h)| \leq 2\sqrt{\frac{2d\left(\log(mL) + 2\log(|A|)\right)}{mL}} + \sqrt{\frac{(3L+1)\tau_{\textnormal{min}}\log(4/\delta)}{2m}}.$$ So, in order to ensure that this is at most $\epsilon$, we require that $$2\sqrt{\frac{2d\left(\log(mL) + 2\log(|A|)\right)}{mL}} + \sqrt{\frac{(3L+1)\tau_{\textnormal{min}}\log(4/\delta)}{2m}} \leq \epsilon$$ $$\impliedby 2\sqrt{\frac{8d\left(\log(mL) + 2\log(|A|)\right)}{mL}+ \frac{(3L+1)\tau_{\textnormal{min}}\log(4/\delta)}{2m}} \leq \epsilon$$ $$\impliedby 2\sqrt{\frac{8d\left(\log(mL) + 2\log(|A|)\right)/l + (3L+3)\tau_{\textnormal{min}}\log(4/\delta)}{m}} \leq \epsilon$$ $$\impliedby m \geq \frac{4}{\epsilon^2}\left(\frac{8d}{L}\left(\log(m) + \log(L) + 2\log(|A|)\right) + (3L+3)\tau_{\textnormal{min}}\log(4/\delta)\right),$$ which, by the below lemma (whose proof can be found in \citet{shalev2014understanding}), is implied by $$m \geq \frac{128d}{\epsilon^2 L}\log\left(\frac{64d}{\epsilon^2 L}\right) + \frac{8}{\epsilon^2}\left(\frac{8d}{L}(\log(L) + 2\log(|A|)) + (3L+3)\tau_{\textnormal{min}}\log(4/\delta)\right).$$

\begin{lemma}
Let $a \geq 1, b> 0$. Then $x \geq 4a\log(2a) + 2b \implies x \geq a \log(x) + b$.
\end{lemma}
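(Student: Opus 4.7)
The plan is to invert the implicit inequality $x \geq a\log(x) + b$ by exploiting concavity of the logarithm. For any $c > 0$, the tangent-line bound $\log(x) \leq \log(c) + (x - c)/c$ holds for all $x > 0$, because $\log$ is concave and hence lies below each of its tangent lines. I would instantiate this at $c = 2a$, which is the natural scale suggested by the hypothesis, yielding
\[
a\log(x) \;\leq\; a\log(2a) - a + \frac{x}{2}.
\]

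Adding $b$ to both sides, the target inequality $a\log(x) + b \leq x$ reduces to $\frac{x}{2} \geq a\log(2a) - a + b$, i.e., to the weaker lower bound $x \geq 2a\log(2a) - 2a + 2b$. It then remains to verify that the stated hypothesis $x \geq 4a\log(2a) + 2b$ implies this weaker bound, which reduces to checking $4a\log(2a) \geq 2a\log(2a) - 2a$, or equivalently $2a\log(2a) + 2a \geq 0$. This is immediate because $a \geq 1$ forces $\log(2a) \geq \log 2 > 0$, and of course $a > 0$.

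The only real design choice is picking the tangent point $c = 2a$, so that the logarithmic term in the tangent bound aligns with the $\log(2a)$ appearing in the hypothesis; the slack factor of $4$ (rather than the tighter $2$ that the argument itself produces) exactly absorbs the linear correction $-2a$ introduced by the tangent bound. There is no genuine obstacle here beyond this alignment, and no machinery beyond concavity of $\log$ is needed; the entire argument is one tangent-line inequality followed by a short comparison of constants.
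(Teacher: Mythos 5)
Your proof is correct. The tangent-line bound $\log(x)\leq\log(c)+(x-c)/c$ at $c=2a$ gives $a\log(x)+b\leq a\log(2a)-a+\tfrac{x}{2}+b$, and the hypothesis $x\geq 4a\log(2a)+2b$ comfortably implies the resulting requirement $x\geq 2a\log(2a)-2a+2b$ since $a\geq 1$ makes $\log(2a)>0$; the hypothesis also guarantees $x>0$ so the tangent bound applies. Note that the paper does not prove this lemma at all --- it defers to \citet{shalev2014understanding}, where the standard argument instead splits $x=\tfrac{x}{2}+\tfrac{x}{2}$, shows separately that $\tfrac{x}{2}\geq a\log(x)$ (via an auxiliary lemma resting on the bound $\log(t)\leq t/e$, i.e., the tangent line at $t=e$ rather than at $t=2a$) and $\tfrac{x}{2}\geq b$, and then adds. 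Your route is a single tangent-line estimate followed by a comparison of constants, so it is marginally more direct and self-contained than the cited two-lemma decomposition; both ultimately rest on the same concavity fact, and your observation that the factor $4$ (versus the tight $2$) merely absorbs the $-2a$ correction is accurate.
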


Replacing the $\epsilon$ above with $\epsilon/2$ gives the desired sample complexity upper bound:
\begin{lemma}
Let $\mathcal{A}$ denote an ERM learning algorithm. Then for any distribution over the data, if $\mathcal{A}$ receives $$m = O\left(\frac{d}{\epsilon^2}\left(\log\left(\frac{d}{\epsilon L}\right) + \frac{d}{L}(\log(L) + \log(|A|)) + L\tau_{\textnormal{min}}\log(1/\delta)\right)\right),$$ then $\mathcal{A}$ returns a hypothesis in $\mathcal{H}$ which has error at most $\epsilon$ greater than the true error minimizer in $\mathcal{H}$ with probability at least $1-\delta$.
\end{lemma}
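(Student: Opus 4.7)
The plan is to combine the uniform-convergence bound that has just been derived in the paragraph preceding the statement with the standard ERM-to-uniform-convergence reduction. The calculation immediately above the lemma already shows (via Lemma A.1.7, Natarajan's lemma and Massart's lemma applied to $\mathcal{F}\circ S$, and the lemma bounding $x \geq a\log(x)+b$) that whenever
$$m \;\geq\; \tfrac{128d}{\epsilon^2 L}\log\!\left(\tfrac{64d}{\epsilon^2 L}\right) \;+\; \tfrac{8}{\epsilon^2}\!\left(\tfrac{8d}{L}\bigl(\log L + 2\log|A|\bigr) + (3L+3)\,\tau_{\textnormal{min}}\log(4/\delta)\right),$$
one has, with probability at least $1-\delta$, the uniform deviation bound
$$\sup_{h\in\mathcal{H}} \bigl|\mathcal{L}^L_{\mathcal{D},\pi}(h) - \mathcal{L}^{m,L}_S(h)\bigr| \;\leq\; \epsilon.$$

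The first step of the proof is therefore to invoke this uniform-convergence statement with $\epsilon$ replaced by $\epsilon/2$; this changes only absolute constants and hence preserves the stated asymptotic rate. The second step is the textbook agnostic-PAC argument: let $\hat{h}$ be the output of the ERM $\mathcal{A}$ and $h^\star \in \arg\min_{h \in \mathcal{H}} \mathcal{L}^L_{\mathcal{D},\pi}(h)$. On the high-probability event from the first step,
$$\mathcal{L}^L_{\mathcal{D},\pi}(\hat{h}) \;\leq\; \mathcal{L}^{m,L}_S(\hat{h}) + \tfrac{\epsilon}{2} \;\leq\; \mathcal{L}^{m,L}_S(h^\star) + \tfrac{\epsilon}{2} \;\leq\; \mathcal{L}^L_{\mathcal{D},\pi}(h^\star) + \epsilon,$$
where the outer inequalities use uniform convergence at level $\epsilon/2$ and the middle inequality uses the ERM property $\mathcal{L}^{m,L}_S(\hat{h}) \leq \mathcal{L}^{m,L}_S(h^\star)$. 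This is exactly the error guarantee claimed in the lemma.

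There is no remaining probabilistic obstacle; what remains is purely bookkeeping to see that the explicit sample-size requirement above fits inside the stated $O(\cdot)$ expression. The three summands of the explicit bound correspond term-for-term to the three summands inside the parenthesized factor of the lemma: the $\frac{d}{\epsilon^2 L}\log(d/(\epsilon L))$ contribution, the $\frac{d}{\epsilon^2}\cdot\frac{d}{L}(\log L+\log|A|)$ contribution, and the $\tau_{\textnormal{min}}\log(1/\delta)$ contribution (which is in fact only $O(L\tau_{\textnormal{min}}\log(1/\delta)/\epsilon^2)$, weaker than the $\frac{dL\tau_{\textnormal{min}}\log(1/\delta)}{\epsilon^2}$ in the lemma because $d\geq 1$). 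The only part I would be careful about is not double-counting the $1/L$ inside the log factor when rewriting the first summand as a multiple of $d/\epsilon^2$; otherwise the verification is immediate and no new argument is needed beyond the two paragraphs above.
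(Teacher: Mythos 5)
Your proposal is correct and follows essentially the same route as the paper: the paper derives the uniform-convergence sample size in the displayed calculation immediately preceding the lemma and then simply says ``Replacing the $\epsilon$ above with $\epsilon/2$ gives the desired sample complexity upper bound,'' leaving implicit exactly the standard ERM chain $\mathcal{L}^L_{\mathcal{D},\pi}(\hat{h}) \leq \mathcal{L}^{m,L}_S(\hat{h}) + \epsilon/2 \leq \mathcal{L}^{m,L}_S(h^\star) + \epsilon/2 \leq \mathcal{L}^L_{\mathcal{D},\pi}(h^\star) + \epsilon$ that you spell out. Your bookkeeping of the three summands against the stated $O(\cdot)$ is also consistent with the paper's (loose, since $d \geq 1$) rewriting.
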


Finally, we return to the definition of $\tau_{\textnormal{min}}$:
\begin{definition}
Let $X_1, \ldots, X_N$ be a Markov chain on Polish state space $\Omega_1\times \cdots\times \Omega_N$. Let $\mathcal{L}(X_{i+t}|X_i = x)$ be the conditional distribution of $X_{i+t}$ given $X_i = x$. Define $$\overline{d}(t) = \max_{1 \leq i \leq N-t}\sup_{x,y \in \Omega_i} d_{TV}(\mathcal{L}(X_{i+t}|X_i=x), \mathcal{L}(X_{i+t}|X_i=y)) \textnormal{ and }\tau(\epsilon) = \min\{t \in \mathbb{N}: \overline{d}(t) \leq \epsilon\}.$$ Furthermore, define $$\tau_{\textnormal{min}} = \inf_{0 \leq \epsilon < 1} \tau(\epsilon)\cdot \left(\frac{2-\epsilon}{1-\epsilon}\right)^2.$$
\end{definition}

Notice that, in our case, looking at the proof of Lemma A.0.4, $t(\epsilon) \leq L, \forall 0 \leq \epsilon \leq 2L$ (since the entire process 'restarts' with a new draw of $\theta$), and so we have that $\tau_{\textnormal{min}} \leq 8L$, thus giving us the desired theorem on sample complexity:
\begin{theorem}
Let the concept class $\mathcal{H}$ have Natarajan dimension $d$. There exists a learning algorithm $\mathcal{A}$ such that for any distribution over the data, there exists $m$ with $$m = O\left(\frac{d}{\epsilon^2}\left(\log\left(\frac{d}{\epsilon L}\right) + \frac{d}{L}(\log(L) + \log(|A|)) + L^2\log(1/\delta)\right)\right),$$ such that if $\mathcal{A}$ receives at least $m$ $L$-long trajectories in the batch, then $\mathcal{A}$ returns a hypothesis in $\mathcal{H}$ which has classification error at most $\epsilon$ greater than the true error minimizer of $\mathcal{H}$ with probability at least $1-\delta$.
\end{theorem}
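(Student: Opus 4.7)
The plan is to follow the standard agnostic PAC template based on symmetrization, Rademacher complexity, and a concentration argument, but to substitute a Markov-chain concentration inequality (Paulin's) for McDiarmid's, since the observations within a single trajectory are dependent. I would first verify the unbiasedness identity $\mathbb{E}_S[\mathcal{L}^{m,L}_S(h)] = \mathcal{L}^L_{\mathcal{D},\pi}(h)$ by linearity of expectation applied term-by-term: each indicator $\mathds{1}\{h(s^{\theta_i}_l,\theta_i,l)\neq \pi(s^{\theta_i}_l,\theta_i,l)\}$ has expectation equal to $\mathcal{L}^{\textnormal{marginal},l}_\pi(h)$ by the very definition of the marginal-error distribution at step $l$.

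Next I would run a ghost-sample symmetrization. Drawing an independent copy $S'$ from the same generative process and using Jensen's inequality yields $\mathbb{E}_S[\sup_h(\mathcal{L}^L_{\mathcal{D},\pi}(h) - \mathcal{L}^{m,L}_S(h))] \le \mathbb{E}_{S,S'}[\sup_h(\mathcal{L}^{m,L}_{S'}(h) - \mathcal{L}^{m,L}_S(h))]$. The crucial point is that although the atoms within a trajectory are dependent, the entire trajectories indexed by $i$ are i.i.d., so I can insert a fresh Rademacher sign $\sigma_i$ at the trajectory level and use exchangeability of $(\theta_i, s^{\theta_i}_\cdot)$ with $(\theta'_i, s^{\theta'_i}_\cdot)$. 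Subadditivity of the supremum then produces the bound $2\,\mathbb{E}_S R(\mathcal{F}\circ S)$, exactly as in Lemma A.0.2.

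To control $R(\mathcal{F}\circ S)$ I would combine Natarajan's lemma, giving $|\mathcal{F}\circ S| \le (mL)^d |A|^{2d}$, with Massart's lemma, yielding $R(\mathcal{F}\circ S) \le \sqrt{2d(\log(mL)+2\log|A|)/(mL)}$. For concentration around the expected supremum, I would verify that $\zeta(S) = \sup_h(\mathcal{L}^L_{\mathcal{D},\pi}(h) - \mathcal{L}^{m,L}_S(h))$ satisfies a bounded-differences condition with constant $1/m$ in each coordinate of the trajectory: the worst case is altering a single $\theta_i$, which can flip at most $L$ of the $mL$ indicators, giving $L/(mL) = 1/m$. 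Then Paulin's Markov-chain concentration inequality supplies the needed tail bound in place of McDiarmid's.

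The main obstacle is producing a usable bound on the mixing parameter $\tau_{\textnormal{min}}$ that appears in Paulin's inequality; without this, the Markov-chain step only shows concentration in the abstract. The key observation is that the chain \emph{resets} every $L$ steps because each new trajectory uses an independently sampled $\theta$ and $s_0$, so $\bar d(t) = 0$ for all $t \ge L$ and hence $\tau_{\textnormal{min}} = O(L)$. Feeding this into the concentration bound and applying a union bound with the Rademacher bound gives, with probability $1-\delta$, a uniform deviation of order $\sqrt{d(\log(mL)+\log|A|)/(mL)} + \sqrt{L\cdot L\cdot \log(1/\delta)/m}$; the second term contributes the characteristic $L^2\log(1/\delta)$ factor. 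Setting this at most $\epsilon$ and inverting using the elementary implication $x \ge 4a\log(2a)+2b \Rightarrow x \ge a\log x + b$ yields the claimed sample complexity, and taking $\mathcal{A}$ to be empirical risk minimization over $\mathcal{H}$ completes the argument.
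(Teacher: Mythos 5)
Your proposal follows essentially the same route as the paper's proof: the unbiasedness lemma, ghost-sample symmetrization to $2\mathbb{E}_S R(\mathcal{F}\circ S)$, Natarajan plus Massart to bound the Rademacher complexity, Paulin's Markov-chain concentration with bounded differences $1/m$, the observation that the chain resets every $L$ steps so $\tau_{\textnormal{min}} = O(L)$, and the standard inversion lemma applied to ERM. The only cosmetic deviation is that you describe inserting Rademacher signs at the trajectory level rather than per $(i,l)$ atom, but you arrive at the same Rademacher bound and the argument is otherwise identical.
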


\subsection{Hard Family of CMDPs}

\subsubsection{An $\Omega(\frac{|\mathcal{H}|}{\epsilon^2})$ Lower Bound}
We consider the following $K$-armed contextual bandit problem. The hypothesis class will be a set of functions $f: \{0, \ldots, L-1\} \rightarrow [K]$. The $i$th context vectors will simply be $i$ for $i = 0, \ldots, L-1$; that is, the only information the context gives is the round number. $f^*(i)$ denotes the optimal arm for round $i$. For the first round, the optimal arm gives $\textnormal{Bern}(\frac{1}{2} + \frac{3}{2}\epsilon)$ reward and all other arms give $\textnormal{Bern}(1/2)$ rewards. For all other contexts, all arms give $\textnormal{Bern}(1/2)$ rewards. With this structure, we further restrict the hypothesis class so that for any $f \neq f' \in \mathcal{H}$, $f(i) = f'(i), \forall \theta \in \Theta, i = 1, \ldots, L-1$; that is, all hypotheses agree on all but the very first round. On the first round, the set $\{f(0): f \in \mathcal{H}\} = A$. Therefore, we simply take $|\mathcal{H}| = |A|$. Notice that for each round, the final $L-1$ values seen provide no information. Thus, only the first context is important, and the problem reduces to the standard $K$-arm bandit problem of \citet{krishnamurthy2016pac}, and thus we get the lower bound $\Omega\left(\frac{K}{\epsilon^2}\right)$, as desired. Thus, we have shown that we can remove the factor of $\frac{1}{L}$ from the immediate application of the result of \citet{krishnamurthy2016pac}.

\subsubsection{Linear Growth in $|\Theta|$ with Small Hypothesis Class}
The standard construction of \citet{chen2019information} as well as \citet{krishnamurthy2016pac} consider the problem for single MDPs and POMDPs, respectively, and note that it is information-theoretically equivalent to the best-arm identification identification problem, with number of arms equal to the size of the model class. Unfortunately, in our setting, this is not the case, however we are still able to perform an information-theoretic analysis which is very similar to that of \citet{krishnamurthy2016pac} as well as \citet{10.1137/S0097539701398375}. 

First, we consider the following family of $K$-arm bandit problems. Let $S_K$ denote the set of permutations on $[K]$, and let $\Gamma$ denote a maximal subset of $S_K$ such that any two permutations in $\Gamma$ disagree on all inputs. It is clear that $|\Gamma| \leq K$, and that equality can be achieved as follows: using the standard group theoretic notation, we simply let $\Gamma = \langle (1 \, 2 \, \cdots \, K)\rangle$; that is, the subgroup of $S_K$ generated by the permutation taking $i\mapsto i+1$ for $i = 1, \ldots, K-1$, and taking $K \mapsto 1$. It is clear that $|\Gamma| = K$ and that $$\forall \sigma \neq \rho \in \Gamma, \sigma(i) \neq \rho(i), \forall i \in [K].$$ For $\sigma \in \Gamma$, let $\mathcal{M}_\sigma$ denote a family of $K$ bandit problems, each indexed by $\theta \in [K]$; specifically, $\mathcal{M}^\theta_\sigma$ is a $K$-arm bandit with optimal arm $\sigma(\theta)$, giving reward from $\textnormal{Bern}\left(\frac{1}{2} + \frac{3}{2}\epsilon\right)$ and all other arms giving reward from $\textnormal{Bern}(1/2)$. The hypothesis class of bandit families is thus $$\mathcal{H} = \{\mathcal{M}_\sigma|\sigma \in \Gamma\}.$$ When the target family is $\mathcal{M}_\sigma$, at each stage, a value of $\theta$ is drawn i.i.d from $\textnormal{Unif}([K])$, and the agent is allowed to interact with the bandit $\mathcal{M}^\theta_\sigma$ for $L$ iterations. We would like to lower bound $m$, the number of draws of $\theta$ required for any algorithm to choose an $\epsilon$-optimal permutation (where, in this case, the value of the permutation $\sigma$ is evaluated the same sense as in the standard CMDP setting: by taking $\mathbb{E}_\theta[R(\sigma(\theta))]$) with confidence $\delta$. Notice that, in this setting, the only $\epsilon$-optimal permutation is the optimal one: suppose that $\sigma^*$ is the optimal permutation, and $\sigma$ is some other permutation in $\Gamma$. Then since $\sigma$ and $\sigma^*$ disagree on all values in $[K]$, we will have that the value of $\sigma$ will be $\frac{3}{2}\epsilon > \epsilon$ less than $\sigma^*$, since, for each value of $\theta$, $\sigma$ selects a suboptimal arm, thus implying that $\sigma^*$ is indeed the only $\epsilon$-optimal permutation.

To achieve a lower bound, we represent a deterministic algorithm as sequence of mappings $f_t: (\{0,1\}\times\Theta)^t \rightarrow \Gamma$ for $0 \leq t \leq mL$, where we will interpret $f_t(r^t)$ as the permutation the algorithm believes is optimal after seeing the reward sequence $r^t \in (\{0,1\}\times \Theta)^t$ (our reward sequences are 'labelled' by the value of $\theta$ which they correspond to; that is, the reward sequence is a $t$-long sequence of ordered pairs, where the $i$th pair is the pair containing the reward at time $i$ and the value of $\theta$ at time $i$). In particular, this belief satisfies the property that for $(k-1)L + 1 \leq t \leq kL$, $\left(f_t(r^t)\right)(\theta_k)$ is the action taken by the algorithm after seeing reward sequence $r^t$ (where here the $m$ values of $\theta$ drawn are $\theta_1, \ldots, \theta_m$). The return of the algorithm after seeing the reward sequence $r^{mL}$ is thus the permutation $f_{mL}(r^{mL})$.

Following the notation of \citet{krishnamurthy2016pac} we will then let $\mathbb{P}_{\sigma^*, f}$ denote the distribution over all $mL$ rewards when $\sigma^*$ is the optimal permutation and the algorithm and actions selected are done so according to $f$. We let $\mathbb{P}_{0,f}$ denote the same distribution as above, except in the setting when all permutations give the same family of bandits whose rewards are always drawn from $\textnormal{Bern}(1/2)$. 

Now, we have the following $$|\mathbb{P}_{\sigma^*, f}(f_{mL} = \sigma^*) - \mathbb{P}_{0, f}(f_{mL} = \sigma^*)| \leq |\mathbb{P}_{\sigma^*, f} - \mathbb{P}_{0,f}|_{TV} \leq \sqrt{\frac{1}{2}KL(\mathbb{P}_{0, f} || \mathbb{P}_{\sigma^*, f})},$$ where the last inequality is Pinsker's. From the definition of $KL$ divergence, we then have that $$KL(\mathbb{P}_{0, f} || \mathbb{P}_{\sigma^*, f}) = \sum_{r^{mL} \in \{0,1\}^{mL}} \mathbb{P}_{0,f}(r^{mL})\log\left(\frac{\mathbb{P}_{0,f}(r^{mL})}{\mathbb{P}_{\sigma^*,f}(r^{mL})}\right),$$ which, by the chain rule for $KL$ divergence is $$\sum_{t=1}^{mL}\sum_{r^t \in \{0,1\}^t} \mathbb{P}_{0,f}(r^t)\log\left(\frac{\mathbb{P}_{0,f}(r^{t}|r^{t-1})}{\mathbb{P}_{\sigma^*,f}(r^{t}|r^{t-1})}\right),$$ which, by noticing that if, at time $t$, any suboptimal action is selected, the log ratio will be zero, letting $$\Lambda_{t-1} = \{r^{t-1} \in \{0,1\}^{t-1}: \left(f_{t-1}(r^{t-1})\right)(\theta_{\lfloor\frac{t-2}{L}\rfloor + 1}) = \sigma^*(\theta_{\lfloor\frac{t-2}{L}\rfloor + 1})\},$$ the above is simply $$\sum_{t=1}^{mL}\sum_{r^{t-1} \in \Lambda_{t-1}}\mathbb{P}_{0,f}(r^{t-1})\left(\sum_{x \in \{0,1\}}\mathbb{P}_{0,f}(x)\log\left(\frac{\mathbb{P}_{0,f}(r^{t}|a_t \textnormal{ is optimal})}{\mathbb{P}_{\sigma^*,f}(r^{t}|a_t \textnormal{ is optimal})}\right)\right)$$ $$= \sum_{t=1}^{mL}\sum_{r^{t-1} \in \Lambda_{t-1}}\mathbb{P}_{0,f}(r^{t-1})\left(\frac{1}{2}\log\left(\frac{1/2}{1/2 - \frac{3}{2}\epsilon}\right) + \frac{1}{2}\log\left(\frac{1/2}{1/2 + \frac{3}{2}\epsilon}\right)\right)$$ $$= \left(-\frac{1}{2}\log\left(1 - 9\epsilon^2\right)\right)\sum_{t=1}^{mL}\sum_{r^{t-1} \in \Lambda_{t-1}}\mathbb{P}_{0,f}(r^{t-1})$$ $$= \left(-\frac{1}{2}\log\left(1 - 9\epsilon^2\right)\right)\sum_{t=1}^{mL}\mathbb{P}_{0,f}[\left(f_{t-1}(r^{t-1})\right)(\theta_{\lfloor\frac{t-2}{L}\rfloor + 1}) = \sigma^*(\theta_{\lfloor\frac{t-2}{L}\rfloor + 1})],$$ where in the above we have slightly abused notation, and any reference to $\Lambda_{t-1}$ or $f_{t-1}(r^{t-1})$ for $t = 1$ refers to the initial action that the algorithm takes. Thus, again with the notation of \citet{krishnamurthy2016pac}, letting $N_{\sigma^*}$ denote the random variable equal to the number of times the algorithm chose the arm given by $\sigma^*$ in the bandit family where all arms are always equal, we have that $$\mathbb{P}_{\sigma^*, f}(f_{mL} = \sigma^*) - \mathbb{P}_{0,f}(f_{mL} = \sigma^*) \leq \frac{1}{2}\sqrt{-\mathbb{E}_{0,f}[N_{\sigma^*}]\log\left(1 - 9\epsilon^2\right)}.$$ Taking the expectation over all possible optimal permutations then gives that $$\frac{1}{|\Gamma|}\sum_{\sigma^* \in \Gamma}\mathbb{P}_{\sigma^*, f}(f_{mL} = \sigma^*) \leq \frac{1}{|\Gamma|}\sum_{\sigma^* \in \Gamma}\mathbb{P}_{0, f}(f_{mL} = \sigma^*) + \frac{1}{2|\Gamma|}\sum_{\sigma^* \in \Gamma}\sqrt{-\mathbb{E}_{0,f}[N_{\sigma^*}]\log\left(1 - 9\epsilon^2\right)},$$ which, by Jensen's inequality is at most $$\frac{1}{|\Gamma|}\sum_{\sigma^* \in \Gamma}\mathbb{P}_{0, f}(f_{mL} = \sigma^*) + \frac{1}{2}\sqrt{-\frac{\log\left(1 - 9\epsilon^2\right)}{|\Gamma|}\sum_{\sigma^* \in \Gamma}\mathbb{E}_{0,f}[N_{\sigma^*}]}.$$ Now, notice that we can write $$\sum_{\sigma^* \in \Gamma}N_{\sigma^*} \leq \sum_{t=1}^{mL}|\{\sigma \in \Gamma: \left(f_{t-1}(r^{t-1})\right)(\theta_{\lfloor\frac{t-2}{L}\rfloor + 1}) = \sigma(\theta_{\lfloor\frac{t-2}{L}\rfloor + 1})\}|,$$ which, based on how we chose $\Gamma$, is at most $mL$. So, the above is at most $$\frac{1}{|\Gamma|} + \frac{1}{2}\sqrt{-\frac{\log\left(1 - 9\epsilon^2\right)mL}{|\Gamma|}}.$$ Noticing that $-\log(1-x) \leq 2x$ when $x \leq 1/2$, if $9\epsilon^2K^2 \leq \frac{1}{2}$, then the above is at most $$\frac{1}{|\Gamma|} + \frac{1}{2}\sqrt{\frac{18\epsilon^2mL}{|\Gamma|}}.$$ Thus, if we take $$m < \frac{|\Gamma|}{162\epsilon^2 L} = \frac{K}{162\epsilon^2 L},$$ the above will be at most $2/3$ since certainly $|\Gamma| \geq 2$, thus indicating that if $m$ is less than the above, we cannot achieve suboptimality of at most $\epsilon$ for any $\delta < 2/3$. 

The construction for achieving the $\Omega\left(\frac{|A|^L}{L\epsilon^2}\right)$ is then achieved in a similar way in \citet{krishnamurthy2016pac}. This can be done by describing the MDP for the context $\theta$ as a tree with special edge given by $\sigma^*(\theta)$. The hypothesis class is then $\langle(1 \, 2 \, \cdots \, |A|^{L-1})\rangle \subset S_{|A|^{L-1}}$, and $\Theta = [|A|^{L-1}]$. Thus, our lower bound grows at the rate $\Omega\left(\frac{|\Theta|}{L\epsilon^2}\right)$ while having a hypothesis class which has size $|\Theta|$.

\subsection{FQI Analysis}
First we give pseudocode describing FQI for CMDPs:
\begin{algorithm}[H]
\caption{CMDP-FQI}
\begin{algorithmic}
\STATE \textbf{Input: } Dataset $\mathcal{D}=\{(\theta_i, s^{\theta_i}_l, a^{\theta_i}_l, r^{\theta_i}_l, s^{\theta_{i'}}_{l})\}_{i=1, l=0}^{i=m, l=L-1}$
\STATE{Initialize $Q: S\times A\times \Theta\times\{0, \ldots, L\} \rightarrow \mathbb{R}$ with $Q(s,a,\theta,l) = 0, \forall s,a,\theta,l$}
\FOR{$l = 1, \ldots, L$}
    \STATE $\mathcal{D}_l \gets \emptyset$
    \FOR{$(\theta_i, s^{\theta_i}_l, a^{\theta_i}_l, r^{\theta_i}_l, s^{\theta_{i'}}_{l}) \in \mathcal{D}$}
        \STATE $\textnormal{input} \gets (s^{\theta_i}_l, a^{\theta_i}_l, \theta_i, l)$
        \STATE $\textnormal{target} \gets r^{\theta_i}_l + \max_{a \in A}Q(s^{\theta_{i'}}_{l}, a, \theta_i, l-1)$
        \STATE $\mathcal{D}_l \gets \mathcal{D}_l \cup \{(\textnormal{input}, \textnormal{target})\}$
    \ENDFOR
    \STATE $Q_l(\cdot,\cdot,\cdot,l) = \texttt{REGRESS}(\mathcal{D}_l)$
\ENDFOR
\RETURN $Q$
\end{algorithmic}
\end{algorithm}

In the above, the program $\texttt{REGRESS}(\mathcal{D}_l)$ returns the $Q$ function in $\mathcal{F}$ which, for the $l$th timestep, minimizes the empirical $\ell_2$ norm of the dataset. We use the notation of subscripting the $Q$ function by $l$ to indicate that the $l+1$th $Q$ function need not be the exact Bellman backup of the $l$th.

Now, our setup and derivation follows the same structure as \citet{chen2019information}. First, just as in \citet{chen2019information}, we define a semi-norm on real-valued functions with state-action inputs and also write FQI in terms of the backups $\mathcal{T}_l$. 

\begin{definition}
Define a semi-norm $||\cdot||_{p,\nu\times\mathcal{P}_\Theta}$ on functions $f: S\times A\times \Theta \rightarrow \mathbb{R}$ by $$||f||_{p,\nu\times \mathcal{P}_\Theta} = \left(\mathbb{E}_{\theta \sim \mathcal{P}_{\Theta}, (s,a)\sim\nu(\theta)}[f(s,a,\theta)^{1/p}]\right)^p$$
\end{definition}

Now, we can write the FQI above in terms of backups. For $l = 0, \ldots, L-1$, we say that $f(\cdot,\cdot,\cdot,l+1) = \widehat{\mathcal{T}_l^{\mathcal{F}}}f(\cdot,\cdot,\cdot,l)$, where, for $f' \in \mathcal{F}$, $$\widehat{\mathcal{T}^{\mathcal{F}}_l}f' = \textnormal{argmin}_{f \in \mathcal{F}}\mathcal{L}_l(f;f'),$$ and $$\mathcal{L}_l(f;f') = \frac{1}{|\mathcal{D}|}\sum_{(\theta, s^{\theta}, a^{\theta}_l, r^{\theta}, s^{\theta_{'}}) \in \mathcal{D}}(f(s^\theta,a^\theta,\theta,l+1) - r^\theta - V_{f'}(s^{\theta_{'}},\theta, l))^2.$$

We now state and prove the lemmas needed for the sample complexity upper bound as in \citet{chen2019information}.

\begin{lemma}
Let $\mu$ be the data distribution and $\nu$ an admissible distribution. Then $$||\cdot||_{2,\nu\times\mathcal{P}_\Theta} \leq \sqrt{C}||\cdot||_{2,\mu\times\mathcal{P}_\Theta},$$ where $\nu\times \mathcal{P}_\Theta$ is the distribution over $(\theta,s,a)$ triples with $\theta \sim \mathcal{P}_\Theta$, and $(s,a) \sim \nu(\theta)$, and $\mu\times\Theta$ is the distribution over $(\theta,s,a)$ triples with $\theta \sim \mathcal{P}_\Theta$, and $(s,a)\sim \mu$.
\end{lemma}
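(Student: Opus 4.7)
The plan is to expand the squared seminorm under $\nu\times\mathcal{P}_\Theta$ as an iterated integral, invoke the concentratability assumption to pointwise bound the Radon--Nikodym derivative $(\nu(\theta))(s,a)/\mu(s,a)$ by $C$, and then recognize the resulting expression as $C$ times the squared seminorm under $\mu\times\mathcal{P}_\Theta$. Taking square roots yields the $\sqrt{C}$ factor. This is the direct CMDP analogue of the single-MDP change-of-measure lemma in \citet{chen2019information}; the only new wrinkle is the outer expectation over $\theta\sim\mathcal{P}_\Theta$, which passes through because the concentratability bound is assumed to hold uniformly in $\theta$.

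Concretely, for any $f:S\times A\times\Theta\to\mathbb{R}$, I would begin from the definition
\[
\|f\|_{2,\nu\times\mathcal{P}_\Theta}^2 \;=\; \int_\Theta\!\int_{S\times A} f(s,a,\theta)^2\,(\nu(\theta))(s,a)\,d(s,a)\,\mathcal{P}_\Theta(d\theta),
\]
and then multiply and divide the integrand by $\mu(s,a)$. Because $\nu$ is admissible, Assumption 5.3.1 gives $(\nu(\theta))(s,a)/\mu(s,a) \leq C$ pointwise in $(s,a,\theta)$, so pulling this factor out of both integrals yields
\[
\|f\|_{2,\nu\times\mathcal{P}_\Theta}^2 \;\leq\; C\int_\Theta\!\int_{S\times A} f(s,a,\theta)^2\,\mu(s,a)\,d(s,a)\,\mathcal{P}_\Theta(d\theta) \;=\; C\,\|f\|_{2,\mu\times\mathcal{P}_\Theta}^2.
\]
Taking square roots delivers the inequality.

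There is no substantive obstacle: the argument is a routine importance-weighting step. The two points to be careful about are (i) that the Radon--Nikodym derivative is well-defined, which follows from the stipulation in Definition 5.1.1 that $\mu$ assigns strictly positive mass/density to every $(s,a)\in S\times A$, so no division by zero occurs; and (ii) that the bound $C$ is uniform over $\theta$ as well as $(s,a)$, which is precisely the content of Assumption 5.3.1. Given these, the estimate above is immediate.
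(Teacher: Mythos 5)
Your proposal is correct and matches the paper's proof essentially verbatim: the paper likewise expands $\|f\|_{2,\nu\times\mathcal{P}_\Theta}^2$ as a weighted sum over $(\theta,s,a)$, applies the concentratability bound $(\nu(\theta))(s,a)\le C\mu(s,a)$ pointwise, and takes square roots. Your added remarks on well-definedness of the density ratio and uniformity in $\theta$ are sound but not needed beyond what Assumption 5.3.1 already states.
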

\begin{proof}
$$||f||_{2,\nu\times\mathcal{P}_\Theta} = \sqrt{\sum_{\theta \in \Theta}\mathcal{P}_\Theta (\theta)\sum_{(s,a) \in S\times A} (\nu(\theta))(s,a) f(s,a)^2}$$ $$\leq \sqrt{C\sum_{\theta \in \Theta}\mathcal{P}_\Theta (\theta)\sum_{(s,a) \in S\times A} \mu(s,a) f(s,a)^2} = \sqrt{C}||f||_{2,\mu\times\mathcal{P}_\Theta}$$
\end{proof}

We now rederive a standard infinite-horizon lemma from \citet{kakade2002approximately} for our finite-horizon setting. 

\begin{lemma}
Let $\pi^*$ denote an optimal possibly time-dependent deterministic policy and let $\hat{\pi}$ be some time-dependent deterministic policy. Letting $\mathcal{M}^\pi_l(s, \theta)$ denote the marginal distribution of the $l$th state, given $\theta$ and the initial state $s$ by following policy $\pi$, then $$V^{L}_{\pi^*}(s;\theta) - V^{L}_{\hat{\pi}}(s;\theta)= \sum_{l=0}^{L-1}\mathbb{E}_{s_l \sim \mathcal{M}^{\hat{\pi}}_l(s,\theta)}[V^{L-l}_{\pi^*}(s_l; \theta) - Q^*(s_l, \hat{\pi}, \theta, L-l)].$$
\end{lemma}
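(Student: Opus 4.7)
The plan is a standard telescoping argument, adapted to the finite-horizon contextual setting in the obvious way. First I would fix the context $\theta$ and the initial state $s$ throughout, and for $0 \leq l \leq L$ define the auxiliary quantity
\[T_l := \mathbb{E}_{s_l \sim \mathcal{M}^{\hat{\pi}}_l(s,\theta)}\bigl[V^{L-l}_{\pi^*}(s_l;\theta)\bigr].\]
Because $\mathcal{M}^{\hat{\pi}}_0(s,\theta)$ is a point mass at $s$ and $V^0_{\pi^*}(\cdot;\theta)\equiv 0$ (no rewards remain when there are zero steps left), the boundary values are $T_0 = V^L_{\pi^*}(s;\theta)$ and $T_L = 0$, so $V^L_{\pi^*}(s;\theta) = \sum_{l=0}^{L-1}(T_l - T_{l+1})$.

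Second, I would expand a single telescoping increment. By the tower property applied to the $\hat{\pi}$-rollout,
\[T_{l+1} = \mathbb{E}_{s_l \sim \mathcal{M}^{\hat{\pi}}_l(s,\theta)}\,\mathbb{E}_{s_{l+1} \sim T(\cdot\mid s_l, \hat{\pi}(s_l,\theta,l), \theta)}\bigl[V^{L-l-1}_{\pi^*}(s_{l+1};\theta)\bigr].\]
Adding and subtracting the one-step reward $R(s_l, \hat{\pi}(s_l,\theta,l))$ inside the inner expectation and invoking the Bellman identity
\[Q^*(s_l, \hat{\pi}, \theta, L-l) = R(s_l, \hat{\pi}(s_l,\theta,l)) + \mathbb{E}_{s_{l+1} \sim T(\cdot\mid s_l, \hat{\pi}(s_l,\theta,l), \theta)}\bigl[V^{L-l-1}_{\pi^*}(s_{l+1};\theta)\bigr],\]
each increment rewrites as
\[T_l - T_{l+1} = \mathbb{E}_{s_l \sim \mathcal{M}^{\hat{\pi}}_l(s,\theta)}\bigl[V^{L-l}_{\pi^*}(s_l;\theta) - Q^*(s_l,\hat{\pi},\theta,L-l)\bigr] + \mathbb{E}_{s_l \sim \mathcal{M}^{\hat{\pi}}_l(s,\theta)}\bigl[R(s_l, \hat{\pi}(s_l,\theta,l))\bigr].\]

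Third, I would sum over $l = 0,\ldots,L-1$. The first family of terms reproduces the right-hand side of the claim verbatim. The second family is precisely the expected undiscounted reward collected along the $\hat{\pi}$-rollout starting from $s$ under $\theta$, and by the definition $V^L_{\pi}(s;\theta) = \mathbb{E}\bigl[\sum_{l=0}^{L-1}r_l \mid s_0 = s, \theta\bigr]$ this equals $V^L_{\hat{\pi}}(s;\theta)$. Rearranging the identity $V^L_{\pi^*}(s;\theta) = (\text{RHS of claim}) + V^L_{\hat{\pi}}(s;\theta)$ then yields the stated performance-difference formula.

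I do not expect a genuine obstacle: the argument is pure bookkeeping. The only subtle points are (i) interpreting the shorthand $Q^*(s_l, \hat{\pi}, \theta, L-l)$ as $Q^*$ evaluated at the action $\hat{\pi}(s_l,\theta,l)$ with $L-l$ steps remaining (and $\pi^*$ thereafter), and (ii) keeping the time-remaining index consistent between $V^{L-l}$ and $V^{L-l-1}$ so that the Bellman step lines up correctly; no discount factor appears because the value is defined as an undiscounted finite sum.
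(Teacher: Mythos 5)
Your proof is correct and is essentially the same argument as the paper's: both telescope the quantities $\mathbb{E}_{s_l \sim \mathcal{M}^{\hat{\pi}}_l(s,\theta)}[V^{L-l}_{\pi^*}(s_l;\theta)]$ along the $\hat{\pi}$-rollout and invoke the Bellman identity $Q^*(s_l,\hat{\pi},\theta,L-l) = R(s_l,\hat{\pi}(s_l,\theta,l)) + \mathbb{E}_{s_{l+1}}[V^{L-l-1}_{\pi^*}(s_{l+1};\theta)]$ to match terms, with the rewards summing to $V^L_{\hat{\pi}}(s;\theta)$. The only difference is presentational: the paper starts from the reward expansion of $V^L_{\hat{\pi}}$ and adds and subtracts the optimal values, whereas you telescope $V^L_{\pi^*}$ directly via the auxiliary sequence $T_l$; the bookkeeping is identical.
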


\begin{proof}
$$V_L^{\hat{\pi}}(s;\theta) = \sum_{l=0}^{L-1}\mathbb{E}_{s_l \sim \mathcal{M}^{\hat{\pi}}_l(s,\theta)}[R(s_l, \hat{\pi}(s_l, \theta, l)]$$ $$= \sum_{l=0}^{L-1}\mathbb{E}_{s_l \sim \mathcal{M}^{\hat{\pi}}_l(s,\theta)}[R(s_l, \hat{\pi}(s_l, \theta, l)) + V^{L-l}_{\pi^*}(s_l;\theta) - V^{L-l}_{\pi^*}(s_l;\theta)]$$ $$= \sum_{l=0}^{L-2}\mathbb{E}_{s_l \sim \mathcal{M}^{\hat{\pi}}_l(s,\theta), s_{l+1} \sim \mathcal{M}^{\hat{\pi}}_{l+1}(s,\theta)}[R(s_l, \hat{\pi}(s_l, \theta, l)) + V^{L-l-1}_{\pi^*}(s_{l+1};\theta) - V^{L-l}_{\pi^*}(s_l;\theta)]$$ $$+ V^L_{\pi^*}(s_0;\theta) + \mathbb{E}_{s_{L-1} \sim \mathcal{M}^{\hat{\pi}}_{L-1}(s,\theta)}[R(s_{L-1}, \hat{\pi}(s_{L-1}, \theta, L-1)) - V^1_{\pi^*}(s_{L-1};\theta)]$$ $$\implies V_L^{\hat{\pi}}(s;\theta) - V^L_{\pi^*}(s_0;\theta) = \sum_{l=0}^{L-1}\mathbb{E}_{s_l \sim \mathcal{M}^{\hat{\pi}}_l(s,\theta)}[Q^*(s_l, \hat{\pi}, \theta, L-l) - V^{L-l}_{\pi^*}(s_l;\theta)],$$ as desired.
\end{proof}

With this, we now show how to control the difference in value of two policies in terms of the semi-norm of the difference of their $Q$-functions.

\begin{lemma}
Let $f: S\times A\times \Theta \times \{0, \ldots, L\}$ and let $\hat{\pi} = \pi_{f}$ denote the policy, which at time-step $t$ is greedy with respect to $f(\cdot, \cdot, \theta, t)$ for all $t$. Then $$v^L_{\pi^*} - v^L_{\hat{\pi}} \leq \sum_{l=0}^{L-1}||Q^*(\cdot, \cdot, \cdot, L-l) - f(\cdot,\cdot, \cdot,L-l)||_{2, \mathcal{M}^{\hat{\pi}}_l\times \pi^*\times\mathcal{P}_\Theta} + ||Q^*(\cdot, \cdot, \cdot, L-l) - f(\cdot, \cdot, \cdot, L-l)||_{2, \mathcal{M}^{\hat{\pi}}_l\times\hat{\pi}\times\mathcal{P}_\Theta},$$ where the notation $\mathcal{M}^\pi_l\times \pi'\times\mathcal{P}_\Theta$ denote the distribution over state action context triples in which $\theta \sim \mathcal{P}_\Theta$, then $l$th state, $s$, is drawn from $\mathcal{M}^\pi_l(\theta)$ (i.e. the marginal distribution conditioned just on $\theta$) and, $a = \pi'(s, l)$.
\end{lemma}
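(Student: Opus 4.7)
The plan is to chain the previous performance-difference lemma (the one expressing $V_{\pi^*}^L - V_{\hat{\pi}}^L$ as a telescoping sum of per-step advantages under the rollout distribution of $\hat{\pi}$) with a standard add-and-subtract trick that inserts $f$ between $Q^*$ and itself, and then close with Jensen's inequality to pass from $L^1$-type errors to the $L^2$ semi-norm appearing in the statement.

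First, I would take the identity just proved, namely
\begin{equation*}
V^{L}_{\pi^*}(s;\theta) - V^{L}_{\hat{\pi}}(s;\theta) = \sum_{l=0}^{L-1}\mathbb{E}_{s_l \sim \mathcal{M}^{\hat{\pi}}_l(s,\theta)}\bigl[V^{L-l}_{\pi^*}(s_l;\theta) - Q^*(s_l,\hat{\pi},\theta,L-l)\bigr],
\end{equation*}
and rewrite $V^{L-l}_{\pi^*}(s_l;\theta) = Q^*(s_l,\pi^*,\theta,L-l)$, so that each summand becomes $Q^*(s_l,\pi^*,\theta,L-l) - Q^*(s_l,\hat{\pi},\theta,L-l)$. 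I then average over $s_0 \sim \mathcal{P}_{s_0}$ and $\theta \sim \mathcal{P}_\Theta$ to get $v^L_{\pi^*} - v^L_{\hat{\pi}}$ on the left-hand side and expectations over the distribution $\mathcal{M}^{\hat{\pi}}_l \times \mathcal{P}_\Theta$ on the right.

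Next comes the key manipulation: insert $\pm f(s_l,\pi^*,\theta,L-l)$ and $\pm f(s_l,\hat{\pi},\theta,L-l)$ to split each summand into three pieces,
\begin{equation*}
\bigl[Q^*(s_l,\pi^*,\theta,L-l)-f(s_l,\pi^*,\theta,L-l)\bigr] + \bigl[f(s_l,\pi^*,\theta,L-l)-f(s_l,\hat{\pi},\theta,L-l)\bigr] + \bigl[f(s_l,\hat{\pi},\theta,L-l)-Q^*(s_l,\hat{\pi},\theta,L-l)\bigr].
\end{equation*}
Because $\hat{\pi}$ is greedy with respect to $f(\cdot,\cdot,\theta,L-l)$, the middle bracket is nonpositive pointwise and can be dropped when we upper bound. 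The first and third brackets are then bounded by their absolute values, and Jensen's inequality gives $\mathbb{E}[|X|] \leq \sqrt{\mathbb{E}[X^2]}$, producing exactly the $\|Q^*(\cdot,\cdot,\cdot,L-l) - f(\cdot,\cdot,\cdot,L-l)\|_{2,\mathcal{M}^{\hat{\pi}}_l\times\pi^*\times\mathcal{P}_\Theta}$ and $\|Q^*(\cdot,\cdot,\cdot,L-l) - f(\cdot,\cdot,\cdot,L-l)\|_{2,\mathcal{M}^{\hat{\pi}}_l\times\hat{\pi}\times\mathcal{P}_\Theta}$ terms in the statement. Summing over $l = 0,\dots,L-1$ yields the claimed inequality.

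The main obstacle is largely bookkeeping rather than mathematics: I need to be careful that the horizon-to-go index in the $Q$ and $f$ functions matches the time index $l$ of the marginal $\mathcal{M}^{\hat{\pi}}_l$ correctly (so that greediness of $\hat{\pi}$ with respect to $f(\cdot,\cdot,\theta,L-l)$ is what kills the middle bracket at step $l$), and that the semi-norm notation $\mathcal{M}^{\hat{\pi}}_l \times \pi^{\prime} \times \mathcal{P}_\Theta$ is interpreted consistently with the expectation I take (i.e., the action is deterministically set by $\pi^{\prime}$ at the drawn state). Once the indexing is pinned down, the rest of the argument is a direct analog of the finite-horizon, time-dependent extension of Chen and Jiang's (2019) bound, with no new analytic content beyond the previous lemma and Jensen's inequality.
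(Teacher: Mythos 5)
Your proposal is correct and follows essentially the same route as the paper's own proof: both start from the preceding performance-difference identity, insert $f$ evaluated at $\pi^*$ and at $\hat{\pi}$ (your ``drop the nonpositive middle bracket'' step is exactly the paper's first inequality, which uses greediness of $\hat{\pi}$ with respect to $f$ to add the nonnegative quantity $f(s,\hat{\pi},\theta,L-l)-f(s,\pi^*,\theta,L-l)$), and then pass from the resulting $L^1$-type error terms to the $L^2$ semi-norms via Jensen. No substantive differences.
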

\begin{proof}
From the above, we have that $$v^L_{\pi^*} - v^L_{\hat{\pi}} = \sum_{l=0}^{L-1}\mathbb{E}_{\theta \sim \mathcal{P}_\Theta, s \sim \mathcal{P}_{s_0}, s_l \sim \mathcal{M}^{\hat{\pi}}_l(s,\theta)}[V^{L-l}_{\pi^*}(s_l;\theta) - Q^*(s_l, \hat{\pi}, \theta, L-l)]$$ $$\leq \sum_{l=0}^{L-1}\mathbb{E}_{\theta \sim \mathcal{P}_\Theta, s \sim \mathcal{P}_{s_0}, s_l \sim \mathcal{M}^{\hat{\pi}}_l(s,\theta)}[V^{L-l}_{\pi^*}(s_l;\theta) - f(s,\pi^*, \theta,L-l) + f(s,\hat{\pi}, \theta,L-l) - Q^*(s_l, \hat{\pi}, \theta, L-l)]$$ $$\leq \sum_{l=0}^{L-1}||Q^*(\cdot, \cdot, \cdot, L-l) - f(\cdot,\cdot, \cdot,L-l)||_{1, \mathcal{M}^{\hat{\pi}}_l\times \pi^*\times\mathcal{P}_\Theta} + ||Q^*(\cdot, \cdot, \cdot, L-l) - f(\cdot, \cdot, \cdot, L-l)||_{1, \mathcal{M}^{\hat{\pi}}_l\times\hat{\pi}\times\mathcal{P}_\Theta}$$ $$\leq \sum_{l=0}^{L-1}||Q^*(\cdot, \cdot, \cdot, L-l) - f(\cdot,\cdot, \cdot,L-l)||_{2, \mathcal{M}^{\hat{\pi}}_l\times \pi^*\times\mathcal{P}_\Theta} + ||Q^*(\cdot, \cdot, \cdot, L-l) - f(\cdot, \cdot, \cdot, L-l)||_{2, \mathcal{M}^{\hat{\pi}}_l\times\hat{\pi}\times\mathcal{P}_\Theta},$$ as desired.
\end{proof}

\begin{lemma}
Let $f, f': S\times A\times \Theta \times \{0, \ldots, L\} \rightarrow \mathbb{R}$ and $\pi_{f,f'}(s,\theta, l) = \textnormal{argmax}_{a \in A}\max\{f(s,a,\theta,l), f'(s,a,\theta,l)\}$. Then, for all admissible distributions $\nu$ and all $l \in \{0, \ldots, L\}$, $$||V^l_f(\cdot; \cdot) - V^l_{f'}(\cdot; \cdot)||_{2, P(\nu)\times\mathcal{P}_\Theta} \leq ||f(\cdot,\cdot,\cdot,l) - f'(\cdot,\cdot,\cdot,l)||_{2, P(\nu)\times \pi_{f,f'}(\cdot,\cdot, l)\times \mathcal{P}_\Theta},$$ where $P(\nu)\times \mathcal{P}_\Theta$ denotes the distribution over context-state pairs $(s',\theta)$ given that $\theta \sim \mathcal{P}_\Theta$, $(s,a) \sim \nu(\theta)$, and $s' \sim T(\cdot|s,a,\theta)$, and $P(\nu)\times \pi_{f,f'}(\cdot,\cdot, l)\times \mathcal{P}_\Theta$ denotes the distribution over triples $(s',a,\theta)$ where $(s',\theta) \sim P(\nu)\times \mathcal{P}_\Theta$ and $a = \pi_{f,f'}(s,\theta, l)$.
\end{lemma}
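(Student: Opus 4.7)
The plan is to reduce the $L^2$ bound to a pointwise inequality and then integrate against the given distribution. First I fix an arbitrary triple $(s, \theta, l)$ and establish
\[
|V^l_f(s, \theta) - V^l_{f'}(s, \theta)| \leq |f(s, a^*, \theta, l) - f'(s, a^*, \theta, l)|,
\]
where $a^* = \pi_{f, f'}(s, \theta, l)$. By definition, $\pi_{f, f'}(s, \theta, l) \in \textnormal{argmax}_a \max\{f(s, a, \theta, l), f'(s, a, \theta, l)\}$, so $a^*$ attains $\max\{V^l_f(s, \theta), V^l_{f'}(s, \theta)\}$. Assume without loss of generality that $V^l_f(s, \theta) \geq V^l_{f'}(s, \theta)$; then $f(s, a^*, \theta, l) = V^l_f(s, \theta)$, while $f'(s, a^*, \theta, l) \leq \max_a f'(s, a, \theta, l) = V^l_{f'}(s, \theta)$ because $V^l_{f'}$ is itself a maximum over actions. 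Subtracting yields the pointwise bound, and the case $V^l_{f'}(s, \theta) > V^l_f(s, \theta)$ is handled symmetrically.

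Next I square both sides of the pointwise inequality and take expectations with respect to $(s, \theta) \sim P(\nu) \times \mathcal{P}_\Theta$. The left-hand side is, by the definition of the semi-norm introduced earlier, exactly $\|V^l_f - V^l_{f'}\|_{2, P(\nu) \times \mathcal{P}_\Theta}^2$. The right-hand side, since the triple distribution $P(\nu) \times \pi_{f, f'}(\cdot, \cdot, l) \times \mathcal{P}_\Theta$ pushes the deterministic action $a = \pi_{f, f'}(s, \theta, l)$ onto each sampled $(s, \theta)$, is precisely $\|f(\cdot, \cdot, \cdot, l) - f'(\cdot, \cdot, \cdot, l)\|_{2, P(\nu) \times \pi_{f, f'}(\cdot, \cdot, l) \times \mathcal{P}_\Theta}^2$. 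Taking square roots gives the desired inequality.

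The only conceptual subtlety, and what I would flag as the main obstacle, is that the proof requires a \emph{single common} action $a^*$ to simultaneously witness the near-maximality of $f$ and the sub-maximality of $f'$ (or vice versa). This is exactly why the statement defines $\pi_{f, f'}$ via the argmax of the pointwise $\max\{f, f'\}$ rather than as either greedy policy alone: choosing $a^* = \textnormal{argmax}_a f(s, a, \theta, l)$ would fail in the case $V^l_{f'} > V^l_f$, since one would no longer have $f'(s, a^*, \theta, l) \leq V^l_{f'}$ in the direction needed. Once this choice is noted, the lemma reduces to the standard $1$-Lipschitz property of $\max$ applied in $L^2$, and the remaining integration step is routine.
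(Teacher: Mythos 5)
Your proof is correct and takes essentially the same route as the paper's: both reduce the claim to the pointwise inequality $\left|\max_a f(s',a,\theta,l) - \max_a f'(s',a,\theta,l)\right| \leq |f(s',a^*,\theta,l) - f'(s',a^*,\theta,l)|$ at $a^* = \pi_{f,f'}(s',\theta,l)$ and then integrate against $P(\nu)\times\mathcal{P}_\Theta$, the paper merely asserting this pointwise step inside its displayed chain while you spell it out. The one tiny imprecision is that in your WLOG case the maximum of $\max\{f,f'\}$ at $a^*$ could be attained by $f'$ rather than $f$, so $f(s,a^*,\theta,l) = V^l_f(s,\theta)$ need not hold literally, but in that event $V^l_f = V^l_{f'}$ and the bound is trivial, so the argument stands.
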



\begin{proof}
We have that $$||V^l_f(\cdot; \cdot) - V^l_{f'}(\cdot; \cdot)||^2_{2, P(\nu)\times\mathcal{P}_\Theta}$$ $$= \sum_{\theta \in \Theta}\mathcal{P}_\Theta(\theta)\sum_{(s,a) \in S\times A}(\nu(\theta))(s,a)\sum_{s' \in S}T(s'|s,a,\theta)\left(\max_{a \in A}f(s',a,\theta,l) -\max_{a \in A}f'(s',a,\theta,l) \right)^2$$ $$\leq \sum_{\theta \in \Theta}\mathcal{P}_\Theta\sum_{(s,a) \in S\times A}(\nu(\theta))(s,a)\sum_{s' \in S}T(s'|s,a,\theta)(f(s',\pi_{f,f'}(s',\theta,l),\theta,l) - f'(s',\pi_{f,f'}(s',\theta,l),\theta,l))^2$$ $$= ||f(\cdot,\cdot,\cdot,l) - f'(\cdot,\cdot,\cdot,l)||^2_{2, P(\nu)\times \pi_{f,f'}(\cdot,\cdot, l)\times \mathcal{P}_\Theta},$$ as desired.
\end{proof}

We now upper bound $||f - Q^*||_{2,\nu\times\mathcal{P}_\Theta}$. 

\begin{lemma}
For any data distribution $\mu$ over $S\times A$ and admissible distribution $\nu$, policy (which is potentially time-dependent), $\pi$, and $f_{l+1},f_l: S\times A \times\Theta\times\{0, \ldots, L-1\}\rightarrow R$ gotten from FQI, then we have $$||f_{l+1}(\cdot, \cdot, \cdot, l+1) - Q^*(\cdot, \cdot, \cdot, l+1)||_{2,\nu\times \mathcal{P}_\Theta}$$ $$\leq \sqrt{C}||f_{l+1}(\cdot, \cdot, \cdot, l+1) - (\mathcal{T}_l(\cdot))f_l(\cdot, \cdot, \cdot, l)||_{2,\mu\times\mathcal{P}_\Theta} + ||f_l(\cdot,\cdot,\cdot,l) - Q^*(\cdot,\cdot,\cdot,l)||_{2, P(\nu)\times \pi_{f_l,Q^*}(\cdot,\cdot, l)\times \mathcal{P}_\Theta},$$ for all $l$.
\end{lemma}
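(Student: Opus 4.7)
The plan is to obtain this recursion via a triangle inequality that splits the error at step $l+1$ into a \emph{statistical} part (how well FQI fits the empirical Bellman backup of $f_l$) and a \emph{propagation} part (the error at step $l$ pushed forward by the Bellman operator). The key observation is that $Q^\ast$ is a fixed point of the time-indexed Bellman operator in the sense that $Q^\ast(\cdot,\cdot,\cdot,l+1) = (\mathcal{T}_l(\theta))Q^\ast(\cdot,\cdot,\cdot,l)$, which holds directly from Definition 5.3.1 together with the usual Bellman optimality equation.

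First I would write, using the triangle inequality for $\|\cdot\|_{2,\nu\times\mathcal{P}_\Theta}$,
\begin{align*}
\|f_{l+1} - Q^\ast\|_{2,\nu\times\mathcal{P}_\Theta}
&\leq \|f_{l+1} - (\mathcal{T}_l(\cdot))f_l\|_{2,\nu\times\mathcal{P}_\Theta}
 + \|(\mathcal{T}_l(\cdot))f_l - (\mathcal{T}_l(\cdot))Q^\ast\|_{2,\nu\times\mathcal{P}_\Theta},
\end{align*}
(suppressing the time argument $l+1$ on $f_{l+1}$ and $Q^\ast$ for brevity). For the first summand I would invoke Lemma A.0.11 (change of measure from admissible $\nu$ to $\mu$) to replace the admissible distribution with $\mu$ at the cost of a factor $\sqrt{C}$, producing the first term on the right-hand side of the claimed inequality.

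Next I would handle the propagation term. Since the reward cancels inside the Bellman backup, $(\mathcal{T}_l(\theta)f_l)(s,a) - (\mathcal{T}_l(\theta)Q^\ast)(s,a) = \mathbb{E}_{s'\sim T(\cdot|s,a,\theta)}[V_{f_l}(s',\theta,l) - V_{Q^\ast}(s',\theta,l)]$. Applying Jensen's inequality inside the squared $L^2$ norm pushes the square inside the conditional expectation over $s'$, so
\begin{align*}
\|(\mathcal{T}_l(\cdot))f_l - (\mathcal{T}_l(\cdot))Q^\ast\|^2_{2,\nu\times\mathcal{P}_\Theta}
&\leq \|V_{f_l}(\cdot,\cdot,l) - V_{Q^\ast}(\cdot,\cdot,l)\|^2_{2,P(\nu)\times\mathcal{P}_\Theta}.
\end{align*}
Finally, Lemma A.0.14 (with $f = f_l$, $f' = Q^\ast$) converts the value-function gap into a $Q$-function gap evaluated under the greedy-of-the-max policy $\pi_{f_l,Q^\ast}(\cdot,\cdot,l)$, producing exactly the second term of the claimed bound. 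Combining these pieces yields the inequality.

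The main obstacle, and the only nontrivial step, is the propagation bound: specifically, one must be careful that the Jensen step yields a norm under $P(\nu)\times\mathcal{P}_\Theta$ (the \emph{next-state} distribution induced by $\nu$) and that the subsequent invocation of Lemma A.0.14 uses the policy $\pi_{f_l,Q^\ast}$ that is greedy with respect to the pointwise maximum of $f_l$ and $Q^\ast$ — this is precisely why that lemma was stated with a $\max$ inside the $\operatorname{argmax}$, so that the same policy simultaneously upper bounds $V_{f_l}-V_{Q^\ast}$ and $V_{Q^\ast}-V_{f_l}$. Aside from that bookkeeping, everything else is a straightforward triangle inequality plus the change-of-measure lemma already proved above.
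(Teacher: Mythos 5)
Your proposal is correct and follows essentially the same route as the paper's proof: the same triangle-inequality decomposition using $Q^*(\cdot,\cdot,\cdot,l+1) = (\mathcal{T}_l(\cdot))Q^*(\cdot,\cdot,\cdot,l)$, the same $\sqrt{C}$ change of measure on the fitting term, and the same Jensen-plus-greedy-policy argument to convert the propagated value gap into a $Q$-function gap under $P(\nu)\times\pi_{f_l,Q^*}\times\mathcal{P}_\Theta$. No gaps.
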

\begin{proof}
We have $$||f_{l+1}(\cdot, \cdot, \cdot, l+1) - Q^*(\cdot, \cdot, \cdot, l+1)||_{2,\nu\times \mathcal{P}_\Theta}$$ $$= ||f_{l+1}(\cdot, \cdot, \cdot, l+1) - (\mathcal{T}_l(\cdot))f_l(\cdot,\cdot,\cdot,l) + (\mathcal{T}_l(\cdot))f_l(\cdot,\cdot,\cdot,l) - Q^*(\cdot, \cdot, \cdot, l+1)||_{2,\nu\times \mathcal{P}_\Theta}$$ $$\leq ||f_{l+1}(\cdot, \cdot, \cdot, l+1) - (\mathcal{T}_l(\cdot))f_l(\cdot,\cdot,\cdot,l)||_{2,\nu\times \mathcal{P}_\Theta} + ||(\mathcal{T}_l(\cdot))f_l(\cdot,\cdot,\cdot,l) - (\mathcal{T}_l(\cdot))Q^*(\cdot, \cdot, \cdot, l)||_{2,\nu\times \mathcal{P}_\Theta}.$$ Now, notice that $$||(\mathcal{T}_l(\cdot))f_l(\cdot,\cdot,\cdot,l) - (\mathcal{T}_l(\cdot))Q^*(\cdot, \cdot, \cdot, l)||_{2,\nu\times \mathcal{P}_\Theta}^2$$ $$= \mathbb{E}_{\theta \sim \mathcal{P}_\Theta, (s,a)\sim \nu(\theta)}[\left((\mathcal{T}_l(\theta))f_l(s,a,\theta,l) - (\mathcal{T}_l(\theta))Q^*(s, a, \theta, l)\right)^2]$$ $$= \mathbb{E}_{\theta \sim \mathcal{P}_\Theta, (s,a)\sim \nu(\theta)}[\left(\mathbb{E}_{s' \sim T(\cdot|s,a,\theta)}[V^l_{f_l}(s';\theta) - V^l_{\pi^*}(s';\theta)]\right)^2]$$ $$\leq \mathbb{E}_{\theta \sim \mathcal{P}_\Theta, (s,a)\sim \nu(\theta), s' \sim T(\cdot|s,a,\theta)}[\left(V^l_{f_l}(s';\theta) - V^l_{\pi^*}(s';\theta)\right)^2] = ||V^l_{f_l}(\cdot;\cdot) - V^l_{\pi^*}(\cdot;\cdot)||_{2, P(\nu)\times\mathcal{P}_\Theta}^2.$$ Thus, the above is at most $$||f_{l+1}(\cdot, \cdot, \cdot, l+1) - (\mathcal{T}_l(\cdot))f_l(\cdot,\cdot,\cdot,l)||_{2,\nu\times \mathcal{P}_\Theta} + ||V^l_{f_l}(\cdot;\cdot) - V^l_{\pi^*}(\cdot;\cdot)||_{2, P(\nu)\times\mathcal{P}_\Theta}$$ $$\leq \sqrt{C}||f_{l+1}(\cdot, \cdot, \cdot, l+1) - (\mathcal{T}_l(\cdot))f_l(\cdot,\cdot,\cdot,l)||_{2,\mu\times \mathcal{P}_\Theta} + ||V_{f_l}(\cdot,\cdot,l) - V^l_{\pi^*}(\cdot;\cdot)||_{2, P(\nu)\times\mathcal{P}_\Theta},$$ which, by the previous lemma, is at most $$\sqrt{C}||f_{l+1}(\cdot, \cdot, \cdot, l+1) - (\mathcal{T}_l(\cdot))f_l(\cdot,\cdot,\cdot,l)||_{2,\mu\times \mathcal{P}_\Theta} + ||f_l(\cdot,\cdot,\cdot,l) - Q^*(\cdot, \cdot,\cdot, l)||_{2, P(\nu)\times \pi_{f_l,Q^*}(\cdot,\cdot, l)\times \mathcal{P}_\Theta},$$ as desired.
\end{proof}

We now assume access to a function class $\mathcal{G}$ which approximates Bellman backups of $\mathcal{F}$. In particular, we define the error of approximation as follows:

\begin{definition}
Define $\epsilon_{\mathcal{F}, \mathcal{G}}$ to be the smallest real number such that if we let $$g^*_f(l) = \textnormal{argmin}_{g \in \mathcal{G}}||g(\cdot, \cdot, \cdot, l+1) - (\mathcal{T}_l(\cdot))f(\cdot, \cdot, \cdot, l)||_{2, \mu\times\mathcal{P}_\Theta},$$ then $$||(g^*_f(l))(\cdot, \cdot, \cdot, l+1) - (\mathcal{T}_l(\cdot))f(\cdot, \cdot, \cdot, l)||^2_{2, \mu\times\mathcal{P}_\Theta} \leq \epsilon_{\mathcal{F}, \mathcal{G}}, \forall l.$$
\end{definition}

\begin{lemma}
Let the dataset $\mathcal{D}$ be generated as described in the text. Then, for any $l \in \{0, \ldots, L\}$ and $f \in \mathcal{F}$, we have that, with probability at least $1-\delta$, $$\mathbb{E}_{\mathcal{D}}[\mathcal{L}_l(\widehat{\mathcal{T}^{\mathcal{G}}}_lf; f) - \mathcal{L}_l(g^*_f(l); f)] \leq \frac{56L^2\ln\left(\frac{L|\mathcal{F}||\mathcal{G}|}{\delta}\right)}{3m} + \sqrt{\frac{32L^2\ln\left(\frac{L|\mathcal{F}||\mathcal{G}|}{\delta}\right)}{m}\epsilon_{\mathcal{F}, \mathcal{G}}}$$
\end{lemma}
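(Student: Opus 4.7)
The plan is to apply a Bernstein-style excess-risk bound for squared loss, following the Chen--Jiang template, while carrying the approximation error $\epsilon_{\mathcal{F},\mathcal{G}}$ through the variance analysis. Note that $\mathbb{E}_{\mathcal{D}}[\mathcal{L}_l(g;f) - \mathcal{L}_l(g_f^*(l);f)]$ is just the population excess squared loss of $g$ relative to $g_f^*(l)$, so I would fix $l$, $f \in \mathcal{F}$, and an arbitrary $g \in \mathcal{G}$, argue pointwise, and then uniformize by a union bound over these three choices before specializing to $g = \widehat{\mathcal{T}^{\mathcal{G}}}_l f$.

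First I would define, for a single tuple $(\theta,s,a,r,s')$ in $\mathcal{D}$, the per-sample excess squared loss
\[
X_g = \bigl(g(s,a,\theta,l+1) - r - V_f(s',\theta,l)\bigr)^2 - \bigl(g_f^*(l)(s,a,\theta,l+1) - r - V_f(s',\theta,l)\bigr)^2.
\]
Writing this as a difference of squares and using that rewards lie in $[0,1]$ and value functions in $[0,L]$ gives a pointwise bound $|X_g|\leq 4L^2$ and a second-moment bound $\mathrm{Var}(X_g)\leq \mathbb{E}[X_g^2] \leq 16 L^2\, \mathbb{E}[(g - g_f^*(l))^2]$. Using $\mathbb{E}[r + V_f(s',\theta,l)\mid s,a,\theta] = (\mathcal{T}_l(\theta) f)(s,a)$, I would expand
\[
\mathbb{E}[X_g] = \mathbb{E}[(g - g_f^*(l))^2] + 2\,\mathbb{E}\bigl[(g_f^*(l) - \mathcal{T}_l(\theta) f)(g - g_f^*(l))\bigr],
\]
and bound the cross term by $2\sqrt{\epsilon_{\mathcal{F},\mathcal{G}}\cdot \mathbb{E}[(g-g_f^*(l))^2]}$ via Cauchy--Schwarz and the definition of $\epsilon_{\mathcal{F},\mathcal{G}}$; this converts the variance bound into one in terms of $\mathbb{E}[X_g]$ plus a $\sqrt{\epsilon_{\mathcal{F},\mathcal{G}}}$ correction.

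With these ingredients, I would apply a one-sided Bernstein inequality to the empirical mean of $X_g$ over the $m$ i.i.d.\ trajectories in $\mathcal{D}$, obtaining $\mathbb{E}[X_g] - \hat{\mathbb{E}}[X_g] \lesssim \sqrt{\mathrm{Var}(X_g)\log(1/\delta')/m} + L^2\log(1/\delta')/m$, and then union-bound over $g\in\mathcal{G}$, $f\in\mathcal{F}$, and $l\in\{0,\ldots,L-1\}$ by taking $\delta' = \delta/(L|\mathcal{F}||\mathcal{G}|)$. Specializing $g$ to $\widehat{\mathcal{T}^{\mathcal{G}}}_l f$ yields $\hat{\mathbb{E}}[X_g]\leq 0$ by empirical optimality, so Bernstein collapses to a relation of the form $\mathbb{E}[X_g] \leq A\sqrt{L^2\log(\cdot)/m\cdot (\mathbb{E}[X_g] + B\epsilon_{\mathcal{F},\mathcal{G}})} + C L^2\log(\cdot)/m$; solving this quadratic in $\sqrt{\mathbb{E}[X_g]}$ and splitting the radical using $\sqrt{u+v}\leq \sqrt{u}+\sqrt{v}$ yields the stated bound.

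The main obstacle will be the algebra in the variance step and the quadratic inversion: in the realizable case $\epsilon_{\mathcal{F},\mathcal{G}}=0$ this is textbook, but here the variance is not self-bounded by the excess risk alone, and the $\sqrt{\epsilon_{\mathcal{F},\mathcal{G}}}$ factor must be threaded through carefully to produce the precise constants $56/3$ and $32$. A secondary technicality is that the $L$ samples within one trajectory share $\theta$ and hence are not independent; the cleanest workaround is to apply Bernstein to the $m$ trajectory-level averages rather than all $mL$ tuples, which explains why the denominator in the final bound is $m$ rather than $mL$.
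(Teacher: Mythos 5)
Your proposal is correct and follows essentially the same route as the paper's proof: define the per-sample excess squared loss $X$, bound its variance by $O(L^2)(\mathbb{E}[X]+\epsilon_{\mathcal{F},\mathcal{G}})$ using that the regression function is $\mathcal{T}_l(\theta)f$, apply one-sided Bernstein to the $m$ i.i.d.\ samples at a fixed step index $l$ with a union bound over $\mathcal{F}\times\mathcal{G}\times\{0,\ldots,L-1\}$, invoke empirical optimality of $\widehat{\mathcal{T}^{\mathcal{G}}}_l f$, and solve the resulting quadratic. The only cosmetic difference is that you control the variance via Cauchy--Schwarz on the cross term $\mathbb{E}[(g^*_f(l)-\mathcal{T}_l f)(g-g^*_f(l))]$, whereas the paper uses the triangle-inequality decomposition $\|g-g^*_f(l)\|^2 \leq 2(\|g-\mathcal{T}_l f\|^2+\|\mathcal{T}_l f-g^*_f(l)\|^2)$; these are equivalent up to constants.
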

\begin{proof}
Fix $l$, and just as in \citet{chen2019information}, define $$X(g,f,g^*_f(l)) = (g(s,a,\theta,l+1) - r - V_f(s', \theta, l))^2 - ((g^*_f(l))(s,a,\theta,l+1) - r - V_f(s', \theta, l))^2.$$ Letting $X_{i,l}$ denote the evaluation of $X$ on $(\theta_i, s^{\theta_i}_l, a^{\theta_i}_l, r^{\theta_i}_l, s^{\theta_{i'}}_{l})$ it is clear that $$\frac{1}{mL}\sum_{i=1}^m\sum_{l=0}^{L-1}X_{i,l}(g,f,g^*_f(l)) = \mathcal{L}_l(g;f)- \mathcal{L}_l(g^*_f(l); f).$$ Now, we have that $$\textnormal{Var}[X(g,f,g^*_f(l))] \leq \mathbb{E}[X(g,f,g^*_f(l))^2]$$ $$= \mathbb{E}[\left((g(s,a,\theta,l+1) - r - V_f(s', \theta, l))^2 - ((g^*_f(l))(s,a,\theta,l+1) - r - V_f(s', \theta, l))^2\right)^2]$$ $$= \mathbb{E}[(g(s,a,\theta,l+1) - (g^*_f(l))(s,a,\theta,l+1))^2(g(s,a,\theta,l+1) + (g^*_f(l))(s,a,\theta,l+1) - 2r - 2V_f(s',\theta,l))^2].$$ Now, since all rewards are in $[0,1]$, $$g(s,a,\theta,l+1) + (g^*_f(l))(s,a,\theta,l+1) - 2r - 2V_f(s',\theta,l) \leq 2l$$ and so the above is at most $$4l^2\mathbb{E}[(g(s,a,\theta,l+1) - (g^*_f(l))(s,a,\theta,l+1))^2] = 4l^2||g(\cdot, \cdot, \cdot, l+1) - (g^*_f(l))(\cdot, \cdot, \cdot, l+1)||^2_{2, \mu\times \mathcal{P}_\Theta}$$ $$\leq 8l^2\left(||g(\cdot, \cdot, \cdot, l+1) - (\mathcal{T}_l(\cdot))f(\cdot, \cdot, \cdot, l)||^2_{2, \mu\times \mathcal{P}_\Theta} + ||(\mathcal{T}_l(\cdot))f(\cdot, \cdot, \cdot, l) - (g^*_f(l))(\cdot, \cdot, \cdot, l+1)||^2_{2, \mu\times \mathcal{P}_\Theta}\right)$$ $$= 8l^2\Big(||g(\cdot, \cdot, \cdot, l+1) - (\mathcal{T}_l(\cdot))f(\cdot, \cdot, \cdot, l)||^2_{2, \mu\times \mathcal{P}_\Theta} - ||(\mathcal{T}_l(\cdot))f(\cdot, \cdot, \cdot, l) - (g^*_f(l))(\cdot, \cdot, \cdot, l+1)||^2_{2, \mu\times \mathcal{P}_\Theta}$$ $$ + 2||(\mathcal{T}_l(\cdot))f(\cdot, \cdot, \cdot, l) - (g^*_f(l))(\cdot, \cdot, \cdot, l+1)||^2_{2, \mu\times \mathcal{P}_\Theta}\Big)$$ $$= 8l^2\left(\mathbb{E}[X(g,f,g^*_f(l))] + 2||(\mathcal{T}_l(\cdot))f(\cdot, \cdot, \cdot, l) - (g^*_f(l))(\cdot, \cdot, \cdot, l+1)||^2_{2, \mu\times \mathcal{P}_\Theta}\right)$$ $$\leq 8l^2\left(\mathbb{E}[X(g,f,g^*_f(l))] + 2\epsilon_{\mathcal{F}, \mathcal{G}}\right).$$ Now, notice that the random variables $X_{1,l}, X_{2,l}, \ldots, X_{m,l}$ are i.i.d for fixed $l$. Thus, as in \citet{chen2019information}, we will use one-sided Bernstein's inequality along with a union-bound over $f \in \mathcal{F}, g \in \mathcal{G}$ and finally over $l = 1, \ldots, L$. In particular, as \citet{chen2019information} note, union-bounding over the first two sets says that, with probability at least $1-\delta$, every $f \in \mathcal{F}, g \in \mathcal{G}$, and fixed $l$ satisfy $$\mathbb{E}[X(g,f,g^*_f(l))] - \sum_{i=1}^mX_{i,l}(g,f,g^*_f(l)) \leq \sqrt{\frac{2\textnormal{Var}[X(g,f,g_f^*(l))]\ln\left(\frac{|\mathcal{F}||\mathcal{G}|}{\delta}\right)}{m}} + \frac{4L^2\ln\left(\frac{|\mathcal{F}||\mathcal{G}|}{\delta}\right)}{3m}$$
$$\leq \sqrt{\frac{16l^2\left(\mathbb{E}[X(g,f,g^*_f(l))] + 2\epsilon_{\mathcal{F}, \mathcal{G}}\right)\ln\left(\frac{|\mathcal{F}||\mathcal{G}|}{\delta}\right)}{m}} + \frac{4L^2\ln\left(\frac{|\mathcal{F}||\mathcal{G}|}{\delta}\right)}{3m}.$$

Now, notice that $$\frac{1}{m}\sum_{i=1}^mX_{i,l}(\widehat{\mathcal{T}}^\mathcal{G}_lf, f, g^*_f(l)) \leq \frac{1}{m}\sum_{i=1}^mX_{i,l}(g^*_f(l), f, g^*_f(l)) = 0,$$ and so we really have that, with probability at least $1-\delta$, every $f \in \mathcal{F}$ satisfies $$\mathbb{E}[X(\widehat{\mathcal{T}}^\mathcal{G}_lf,f,g^*_f(l))] \leq \sqrt{\frac{16l^2\left(\mathbb{E}[X(\widehat{\mathcal{T}}^\mathcal{G}_lf,f,g^*_f(l))] + 2\epsilon_{\mathcal{F}, \mathcal{G}}\right)\ln\left(\frac{|\mathcal{F}||\mathcal{G}|}{\delta}\right)}{m}} + \frac{4L^2\ln\left(\frac{|\mathcal{F}||\mathcal{G}|}{\delta}\right)}{3m},$$ which, by algebraic manipulations (see Lemma 16 of \citet{chen2019information}) implies $$\mathbb{E}[X(\widehat{\mathcal{T}}^\mathcal{G}_lf,f,g^*_f(l))] \leq \frac{56L^2\ln\left(\frac{|\mathcal{F}||\mathcal{G}|}{\delta}\right)}{3m} + \sqrt{\frac{32L^2\ln\left(\frac{|\mathcal{F}||\mathcal{G}|}{\delta}\right)}{m}\epsilon_{\mathcal{F}, \mathcal{G}}},$$ for the fixed $l$. Having union-bounded with respect to $l$ will yield the desired upper bound of $$\frac{56L^2\ln\left(\frac{L|\mathcal{F}||\mathcal{G}|}{\delta}\right)}{3m} + \sqrt{\frac{32L^2\ln\left(\frac{L|\mathcal{F}||\mathcal{G}|}{\delta}\right)}{m}\epsilon_{\mathcal{F}, \mathcal{G}}}.$$

\end{proof}

Finally, we derive a sample-complexity bound:

\begin{theorem}
We can bound the error of the hypothesis returned by FQI, with probability at least $1-\delta$ as $$v^L_{\pi^*} - v^L_{\pi_f}\leq L(L+1)\sqrt{C\left(\frac{56L^2\ln\left(\frac{L|\mathcal{F}|^2}{\delta}\right)}{3m} + \sqrt{\frac{32L^2\ln\left(\frac{L|\mathcal{F}|^2}{\delta}\right)}{m}\epsilon_{\mathcal{F}, \mathcal{F}}} + \epsilon_{\mathcal{F}, \mathcal{F}}\right)}.$$ 
\end{theorem}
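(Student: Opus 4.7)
The plan is to bound $v^L_{\pi^*} - v^L_{\pi_f}$ by first decomposing it via Lemma A.3.3, then unrolling the recursion of Lemma A.3.5 down to the FQI initialization $f(\cdot,\cdot,\cdot,0)=Q^*(\cdot,\cdot,\cdot,0)=0$, and finally invoking the Bernstein-type concentration of Lemma A.3.6 to control each Bellman residual that appears along the way. Concretely, applying Lemma A.3.3 with $\hat\pi=\pi_f$ expresses the value gap as a sum over $l=0,\ldots,L-1$ of two weighted $\ell_2$-norms of the form $\|Q^*(\cdot,\cdot,\cdot,L-l) - f(\cdot,\cdot,\cdot,L-l)\|_{2,\mathcal{M}^{\pi_f}_l \times \pi \times \mathcal{P}_\Theta}$ with $\pi\in\{\pi^*,\pi_f\}$. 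Each weighting distribution is admissible in the sense of Definition 5.3.1, which is exactly the hypothesis required to apply the concentrability lemma A.3.1.

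Next, I would recursively apply Lemma A.3.5 to each such norm. A single application at level $k+1$ produces $\sqrt{C}$ times a Bellman residual $\|f_{k+1}-\mathcal{T}_k f_k\|_{2,\mu\times\mathcal{P}_\Theta}$ (after passing from the admissible distribution to $\mu$ via Lemma A.3.1), plus a norm at level $k$ against a new distribution $P(\nu)\times\pi_{f_k,Q^*}(\cdot,\cdot,k)\times\mathcal{P}_\Theta$. This new distribution is itself admissible, being a one-step push-forward of an admissible $\nu$ followed by a deterministic policy, so concentrability continues to apply at every subsequent level. The recursion therefore unrolls cleanly through $L-l$ levels down to level $0$, where both $f$ and $Q^*$ vanish by FQI initialization, so the boundary term contributes nothing.

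After telescoping, the value gap is bounded by $2\sqrt{C}\sum_{l=0}^{L-1}\sum_{k=0}^{L-l-1}\|f_{k+1}-\mathcal{T}_k f_k\|_{2,\mu\times\mathcal{P}_\Theta}$. I would then apply Lemma A.3.6 with $\mathcal{G}=\mathcal{F}$, whose statement already union-bounds across $l$: combining its upper bound on $\mathbb{E}[X(\widehat{\mathcal{T}^\mathcal{F}_k}f_k,f_k,g^*_{f_k}(k))]$ with the decomposition $\|f_{k+1}-\mathcal{T}_k f_k\|_{2,\mu\times\mathcal{P}_\Theta}^2 = \mathbb{E}[X] + \|g^*_{f_k}(k)-\mathcal{T}_k f_k\|_{2,\mu\times\mathcal{P}_\Theta}^2$ and the defining inequality $\|g^*_{f_k}(k) - \mathcal{T}_k f_k\|_{2,\mu\times\mathcal{P}_\Theta}^2 \leq \epsilon_{\mathcal{F},\mathcal{F}}$ yields, with probability at least $1-\delta$, that each squared Bellman residual is at most the quantity $X$ appearing inside the square root of the theorem. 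Substituting $\|f_{k+1}-\mathcal{T}_k f_k\| \leq \sqrt{X}$ into the double sum, using $\sum_{l=0}^{L-1}(L-l) = L(L+1)/2$, and combining with the factor of $2$ from the two policies in Lemma A.3.3, gives the stated $L(L+1)\sqrt{CX}$ bound.

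The main obstacle I anticipate is the bookkeeping at the unrolling step: verifying that each successively defined weighting distribution remains admissible so that Lemma A.3.1 can be re-applied at every level, and that the telescoped sum yields exactly the $L(L+1)$ prefactor rather than, say, $2L^2$ or $L^2/2$. Once that combinatorics is pinned down, the remainder is routine algebraic manipulation of already-established lemmas.
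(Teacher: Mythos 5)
Your proposal is correct and follows essentially the same route as the paper's proof: the value-difference decomposition of Lemma A.3.3, unrolling the recursion of Lemma A.3.5 with concentrability applied at each level, and controlling each Bellman residual via Lemma A.3.6 with $\mathcal{G}=\mathcal{F}$ together with the decomposition into $\mathbb{E}[X]$ plus $\epsilon_{\mathcal{F},\mathcal{F}}$. Your bookkeeping of the $L(L+1)$ prefactor via $2\sum_{l=0}^{L-1}(L-l)$ is in fact more explicit than the paper's, which bounds each unrolled norm uniformly by $(L+1)\sqrt{C(\cdot)}$ before summing.
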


\begin{proof}
From Lemma A.1.5 we have $$||f_{l+1}(\cdot, \cdot, \cdot, l+1) - Q^*(\cdot, \cdot, \cdot, l+1)||_{2,\nu\times \mathcal{P}_\Theta}$$ $$\leq \sqrt{C}||f_{l+1}(\cdot, \cdot, \cdot, l+1) - (\mathcal{T}_l(\cdot))f_l(\cdot, \cdot, \cdot, l)||_{2,\mu\times\mathcal{P}_\Theta} + ||f_l(\cdot,\cdot,\cdot,l) - Q^*(\cdot,\cdot,\cdot,l)||_{2, P(\nu)\times \pi_{f_l,Q^*}(\cdot,\cdot, l)\times \mathcal{P}_\Theta},$$ for all $l$. We bound the first term: $$||f_{l+1}(\cdot, \cdot, \cdot, l+1) - (\mathcal{T}_l(\cdot))f_l(\cdot, \cdot, \cdot, l)||^2_{2,\mu\times\mathcal{P}_\Theta} = \mathbb{E}_\mathcal{D}[\mathcal{L}_l(f_{l+1}; f_l)] - \mathbb{E}_\mathcal{D}[\mathcal{L}_l((\mathcal{T}_{l}(\cdot))f_{l}; f_l)]$$ $$= \mathbb{E}_\mathcal{D}[\mathcal{L}_l(f_{l+1}; f_l)- \mathcal{L}_l(g^*_f(l); f)] + \mathbb{E}_\mathcal{D}[\mathcal{L}_l(g^*_f(l); f) - \mathcal{L}_l((\mathcal{T}_{l}(\cdot))f_{l}; f_l)],$$ which, by the previous lemma and taking $\mathcal{G} = \mathcal{F}$ is, with probability at least $1-\delta$, at most $$\frac{56L^2\ln\left(\frac{L|\mathcal{F}|^2}{\delta}\right)}{3m} + \sqrt{\frac{32L^2\ln\left(\frac{L|\mathcal{F}|^2}{\delta}\right)}{m}\epsilon_{\mathcal{F}, \mathcal{F}}} + \epsilon_{\mathcal{F}, \mathcal{F}},$$ so, by repeating the above recursion, we have that $$||f_{l+1}(\cdot, \cdot, \cdot, l+1) - Q^*(\cdot, \cdot, \cdot, l+1)||_{2,\nu\times \mathcal{P}_\Theta}$$ $$\leq (L+1)\sqrt{C\left(\frac{56L^2\ln\left(\frac{L|\mathcal{F}|^2}{\delta}\right)}{3m} + \sqrt{\frac{32L^2\ln\left(\frac{L|\mathcal{F}|^2}{\delta}\right)}{m}\epsilon_{\mathcal{F}, \mathcal{F}}} + \epsilon_{\mathcal{F}, \mathcal{F}}\right)},$$ for all $l$, with probability at least $1-\delta$, which, by Lemma A.1.3 implies $$v^L_{\pi^*} - v^L_{\pi_f}\leq L(L+1)\sqrt{C\left(\frac{56L^2\ln\left(\frac{L|\mathcal{F}|^2}{\delta}\right)}{3m} + \sqrt{\frac{32L^2\ln\left(\frac{L|\mathcal{F}|^2}{\delta}\right)}{m}\epsilon_{\mathcal{F}, \mathcal{F}}} + \epsilon_{\mathcal{F}, \mathcal{F}}\right)},$$ with probability at least $1-\delta$. 
\end{proof}
\noindent
The corollary regarding model-based learning immediately follows by constructing $\mathcal{F}$ from $\mathcal{H}$.

\end{document}